\documentclass[11pt]{article}

\usepackage[letterpaper,margin=1in]{geometry}
\usepackage[numbers,sort&compress]{natbib}

\usepackage[utf8]{inputenc}
\usepackage[T1]{fontenc}
\usepackage{hyperref}
\usepackage{url}
\usepackage{booktabs}
\usepackage{multirow}
\usepackage{amsmath,amsfonts,amssymb,amsthm}
\usepackage{nicefrac}
\usepackage{microtype}
\usepackage{graphicx}
\usepackage{subcaption}
\usepackage{algorithm}
\usepackage{algpseudocode}
\usepackage{enumitem}
\usepackage{xcolor}
\usepackage{thm-restate}
\usepackage{xspace}
\usepackage{pifont}

\newcommand{\R}{\mathbb{R}}

\newcommand{\zono}[1]{\langle #1 \rangle}

\newcommand{\MZ}{\mathcal{M}}

\newcommand{\methodname}{MZAttn}

\DeclareMathOperator*{\softmax}{softmax}

\newtheorem{theorem}{Theorem}
\newtheorem{corollary}[theorem]{Corollary}
\newtheorem{proposition}[theorem]{Proposition}
\newtheorem{lemma}[theorem]{Lemma}
\newtheorem{definition}{Definition}
\newtheorem{remark}{Remark}

\title{Matrix Zonotopic Attention: A Context-Adaptive Value Projection for Set Transformers}

\author{%
  Zhen Zhang \qquad Amr Alanwar \\
  School of Computation, Information and Technology \\
  Technical University of Munich, Germany \\
  \texttt{\{zhenzhang.zhang, alanwar\}@tum.de}
}
\date{}

\begin{document}

\maketitle

% ══════════════════════════════════════════════════════════════════
\begin{abstract}
% ══════════════════════════════════════════════════════════════════

Multi-head attention combines an input-dependent softmax routing with an input-independent linear value projection, so the per-sample operator mapping aggregated values to outputs is the same for every input set. We study the consequences of this asymmetry for permutation-invariant set targets. We introduce the Transformation Degrees of Freedom (TDOF) of a target operator, a complexity measure counting the input-dependent directions an exact representation requires, and present a depth-separation analysis showing that context-rigid attention needs depth proportional to the target's TDOF, whereas a single layer with a context-adaptive value family can represent the same target. Building on this analysis, we propose Matrix Zonotopic Attention (MZAttn), which replaces the fixed value projection with a context-adaptive matrix-zonotope family: a centre matrix plus a sum of generator matrices weighted by input-dependent gates. The construction reduces to standard multi-head attention at initialisation, preserves permutation equivariance, and admits a data-driven reachability interpretation. Experiments on a range of set-prediction tasks are consistent with the TDOF prediction that the architectural advantage is selective: it appears on targets that depend on the input set in a high-rank, sparsely combinatorial way, and is small on aggregate-statistic targets where parameter-matched standard attention is already competitive.

\end{abstract}

% ══════════════════════════════════════════════════════════════════
\section{Introduction}
\label{sec:intro}
% ══════════════════════════════════════════════════════════════════

Attention-based set architectures span point-cloud analysis~\citep{lee2019set,qi2017pointnet}, molecular modelling~\citep{satorras2021egnn}, in-context learning~\citep{kim2019anp}, and recent linear-attention variants~\citep{katharopoulos2020transformers,sun2023retentive,yang2024gated}. A basic question has gone largely unexamined: how much of a standard multi-head attention~\citep{vaswani2017attention} layer actually adapts to its input? It combines a softmax routing whose weights depend on the input set with a linear value projection that does not (Figure~\ref{fig:money}, left). On four textbook computational-geometry targets (minimum enclosing ball, convex hull volume, minimum-spanning-tree weight, rotation-dependent matching), every parameter-matched attention baseline using the standard PMA pooling fails to meaningfully improve over the mean predictor as per-point dimension grows, while \methodname{} maintains a clean separation (Figure~\ref{fig:money}, right). The failure is structural rather than capacity-driven: even pushing a HyperNet update to full key-dimension rank at several times \methodname{}'s parameter budget does not close the gap (Figure~\ref{fig:money}, right; full sweep in Appendix~\ref{app:hypernet-rank-ext}).

\begin{figure}[h]
 \centering
 \includegraphics[width=\textwidth]{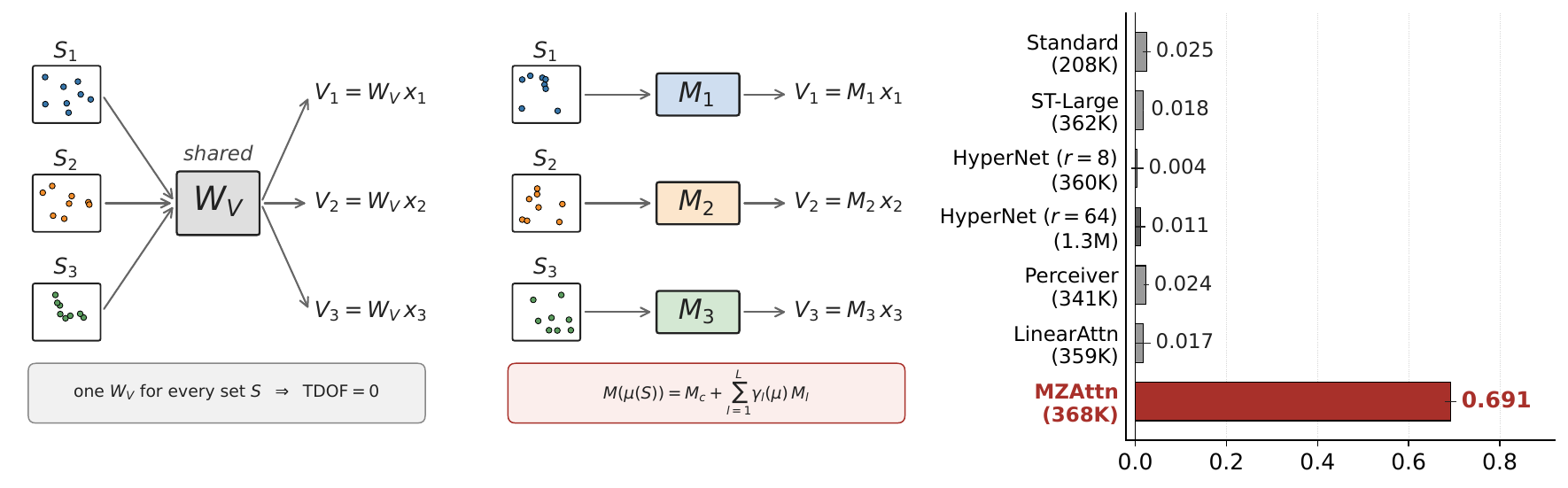}
 \caption{\textbf{Standard attention uses a fixed value projection; \methodname{} replaces it with a context-adaptive matrix-zonotope family.} \textbf{Left:} standard attention applies the same $W_V$ to every input set, making the per-sample operator independent of $S$. \textbf{Middle:} \methodname{} replaces $W_V$ with $M(\mu(S)) = M_c + \sum_l \gamma_l(\mu) M_l$, a context-gated matrix zonotope whose realisation varies per sample. \textbf{Right:} on MEB radius ($d{=}8$, $5$ seeds), all six parameter-matched PMA-pool baselines---including HyperNet at $r{=}d_k{=}64$ ($1.3$M params, $3.6\times$ \methodname{}'s budget)---stay at $R^2 \leq 0.025$ while \methodname{} reaches $R^2 = 0.69$: capacity is not the bottleneck.}
 \label{fig:money}
\end{figure}

Tasks requiring a per-sample linear operator (a sample-specific rotation, a change-of-basis map, or a sparse indicator selecting hull boundary points) lie strictly outside the function family a stack of such layers can represent. We formalise this relational capacity bottleneck through the transformation degrees of freedom (TDOF) of the target operator and prove a matching depth-separation: a worst-case task of TDOF~$k$ requires standard-attention depth proportional to $k$ (with a constant determined by FFN width), whereas a single layer of our proposed architecture suffices (Theorem~\ref{thm:tdof-depth}). The non-linear targets above only approximately inhabit the linearised class; their inheritance of the same bottleneck is an empirical prediction the theory motivates, validated by the synthetic-to-real transfer study below.

We propose Matrix Zonotopic Attention (\methodname{}), which replaces the fixed value projection with a learnable matrix zonotope (centre plus a few generator matrices, gated by set-level statistics; Section~\ref{sec:method}). The parameterisation is minimax-optimal in the parameter-versus-TDOF sense (Theorem~\ref{thm:minimax}) and yields single-pass uncertainty estimates. The advantage is selective and predicted by TDOF: on low-TDOF benchmarks (point-cloud classification, molecular property prediction) \methodname{} is within $1$--$3$ pp of the best parameter-matched baseline; positive transfer requires high per-sample rank, sparse combinatorial targets, and absence of imposed symmetry, conditions verified on CIFAR-100 ResNet-feature point clouds with a $k$-center cost target.

% ══════════════════════════════════════════════════════════════════
\section{Method: Matrix Zonotopic Attention}
\label{sec:method}
% ══════════════════════════════════════════════════════════════════

We first introduce the necessary background on zonotopes, matrix zonotopes, and the Set Transformer, then describe the three components of \methodname{}: (1) zonotope token representation, (2) the MZ-attention mechanism, and (3) the full MZ-Set Transformer architecture.

\subsection{Preliminaries}

\begin{definition}[Zonotope and Matrix Zonotope~\citep{Kuhn1998, Althoff2010PhD, alanwar2021data}]
\label{def:zonotope}
A zonotope with center $c \in \R^n$ and generator columns $G_1, \dots, G_{n_g} \in \R^n$ (the columns of the generator matrix $G \in \R^{n \times n_g}$) is the set
$\mathcal{Z} = \{ c + \sum_{g=1}^{n_g} \alpha_g G_g : \alpha_g \in [-1, 1] \}$.
A matrix zonotope with center $M_c \in \R^{m \times n}$ and matrix generators $M_1, \dots, M_L \in \R^{m \times n}$ is the set of matrices
\begin{equation}
 \MZ = \bigg\{ M_c + \sum_{l=1}^{L} \xi_l M_l ~\bigg|~ \xi_l \in [-1, 1] \bigg\}.
 \label{eq:mz}
\end{equation}
The interval hull of a zonotope is the tightest axis-aligned bounding box, with half-widths $\delta = \sum_{g=1}^{n_g} |G_g|$ (absolute value taken componentwise).
\end{definition}

The product of a matrix zonotope with a zonotope is again a zonotope~\citep{alanwar2021data}: if $\MZ$ has $L$ generators and $\mathcal{Z}$ has $h$ generators, then $\MZ \mathcal{Z} \triangleq \{Mz : M \in \MZ, z \in \mathcal{Z}\}$ has center $M_c c$ and at most $h + L + Lh$ generators.
The $O(Lh)$ growth motivates our fixed-budget management scheme (Section~\ref{sec:architecture}).

\paragraph{Set Transformer.}
The Set Transformer~\citep{lee2019set} processes unordered sets through MAB / ISAB / PMA blocks built from standard scaled dot-product attention; \methodname{} replaces the fixed value projection $v_j \mapsto W_V v_j$ in each block.

\subsection{Zonotope Token Representation}

Given an input element $x_i \in \R^{d_{\text{in}}}$, we project it into a zonotope $(c_i, G_i)$:
\begin{equation}
 c_i = \text{LayerNorm}(\text{GELU}(W_c x_i + b_c)), \quad
 G_i = \frac{1}{\sqrt{n_g d}} (W_g x_i + b_g) \in \R^{n_g \times d},
 \label{eq:zono-proj}
\end{equation}
where $W_c \in \R^{d \times d_{\text{in}}}$, $W_g \in \R^{(n_g d) \times d_{\text{in}}}$, and $n_g$ is the fixed generator budget.
The scaling factor $1/\sqrt{n_g d}$ ensures that generators start as small perturbations, so the initial representation is close to a standard vector embedding.

\subsection{\methodname{} Attention: From Scalar Scores to Relational Operators}
\label{sec:mz-attn}

Given query, key, and value zonotopes with centers $c_i \in \R^{d_k}$ and generator matrices $G_i \in \R^{n_g \times d_k}$ (the $g$-th row $G_i[g,:]$ is the $g$-th generator vector), standard attention scores $\alpha_{ij} = \softmax(q_i^\top k_j/\sqrt{d_k})$ are computed on centers, values are aggregated $(\hat c_i, \hat G_i) = \sum_j \alpha_{ij}(v_j^c, v_j^G)$, and the head-$h$ matrix zonotope $\MZ^{(h)} = M_c^{(h)} + \sum_{l=1}^L \xi_l M_l^{(h)}$ transforms the output:
\begin{equation}
 o_i^c = M_c^{(h)} \hat{c}_i \;\in\; \R^{d_k},
 \qquad
 o_i^G = \hat{G}_i \,(M_c^{(h)})^\top \;+\; W_{\mathrm{mix}}\,\mathrm{stack}_{l=1}^{L}\!\bigl[\gamma_l(\mu)\, M_l^{(h)} \hat{c}_i\bigr] \;\in\; \R^{n_g \times d_k},
 \label{eq:mz-main}
\end{equation}
where $\hat{G}_i (M_c^{(h)})^\top$ applies $M_c^{(h)}$ row-wise to each generator (preserving the $n_g \times d_k$ shape), $\mathrm{stack}_l[\cdot]$ assembles the $L$ vectors $\gamma_l\, M_l^{(h)} \hat c_i \in \R^{d_k}$ into an $L \times d_k$ matrix (rows indexed by $l$), $W_{\mathrm{mix}} \in \R^{n_g \times L}$ is a learned mixing matrix that preserves the fixed generator budget across layers, and $\gamma = \tanh(W_\gamma \bar c^{kv}) \in [-1,1]^L$ is the context-dependent gate computed from the set-level mean $\bar c^{kv} = \frac{1}{n_{kv}}\sum_j c_j^{kv}$.
$M_c^{(h)} = I_{d_k}$ and $M_l^{(h)} \approx 0$ at initialisation, so the block reduces to standard attention. Full step-by-step derivation, generator-budget control, and richer permutation-invariant summaries (attention-pooled statistics, higher-order moments) are in Appendix~\ref{app:impl}.

\subsection{MZ-Set Transformer Architecture}
\label{sec:architecture}

We replace each attention block in a Set Transformer~\citep{lee2019set} with its MZ counterpart (MZ-MAB, MZ-SAB, MZ-ISAB, and MZ-PMA), yielding
$\mathrm{InputProj} \to [\mathrm{MZ\text{-}ISAB}]^{L_{\mathrm{enc}}} \to \mathrm{MZ\text{-}PMA}_k \to [\mathrm{MZ\text{-}SAB}]^{L_{\mathrm{dec}}} \to \mathrm{OutputHead}$.
The output head concatenates the pooled center $c \in \R^{kd}$ and the interval-hull half-width $\delta = \sum_g |G_g|$ before a two-layer MLP, giving a deterministic prediction and a single-pass uncertainty estimate for free. The expressiveness Theorems~\ref{thm:tdof-depth} and~\ref{thm:separation} below additionally invoke a variant readout that reads the signed pooled generators $\sum_g G_g$ rather than the IHW $\sum_g |G_g|$ (signed access is straightforward to add as a parallel head; the IHW preserves the uncertainty-bound interpretation of Theorem~\ref{thm:ihw-calibration}, while the signed readout is what the expressiveness proofs exploit). Both readouts share the same MZ-attention layer; only the post-pooling reduction differs.

% ══════════════════════════════════════════════════════════════════
\section{Theoretical Analysis}
\label{sec:theory}
% ══════════════════════════════════════════════════════════════════

We present three groups of results: an expressiveness bottleneck with minimax-optimal resolution (\S\ref{sec:expressiveness}), TDOF-based depth separation (\S\ref{sec:tdof}), and uncertainty and generalization guarantees (\S\ref{sec:mixing-theory}).
Full proofs are in Appendix~\ref{app:proofs}; additional results in Appendix~\ref{app:additional-theory}.

\subsection{Expressiveness: Context-Adaptive vs.\ Context-Rigid Attention}
\label{sec:expressiveness}

\begin{definition}[Context-rigid and context-adaptive relational operators]
\label{def:relop}
Consider an attention layer operating on set $S = \{x_1, \dots, x_n\} \subset \R^d$, and let $\hat{v}_i = \sum_j \alpha_{ij} v_j$ denote the aggregated value for query~$i$. The relational operator $\mathcal{T}$ maps $\hat{v}_i$ into the per-head output. Standard attention is context-rigid: $\mathcal{T}(\hat{v}) = W\hat{v}$ for a fixed $W \in \R^{d_k \times d_k}$, independent of $S$. \methodname{} is context-adaptive: $\mathcal{T}(\hat{v}; S) = M(\mu(S))\hat{v}$ where $M(\mu) = M_c + \sum_{l=1}^L \gamma_l(\mu) M_l$ depends on a permutation-invariant statistic $\mu(S)$, and the gates $\gamma_l(\mu) \in [-1, 1]$ are the trained instantiations of the abstract zonotope coefficients $\xi_l$ of Eq.~\eqref{eq:mz}.
\end{definition}

This distinction has a fundamental information-theoretic consequence. Standard attention routes information through a scalar bottleneck: the score $\alpha_{ij}$ controls how much of each value to aggregate, but the subsequent transformation $W$ is frozen. We formalise this via the relational complexity / TDOF of the target function, which counts the intrinsic number of context-dependent directions an exact representation requires.

\begin{definition}[Relational complexity / TDOF]
\label{def:rel-complexity}
For a permutation-invariant target $f(S) = T(S)\,\mu(S)$ with matrix-valued $T{:}\;\mathcal{X}^{\leq N} \to \R^{m \times d}$ that factors through $\mu(S)$, the relational complexity (henceforth TDOF) is
$\mathrm{TDOF}(f) = \min\{L : T(S) = M_0 + \sum_{l=1}^L \sigma_l(\mu(S))\, M_l,\; \sigma_l \text{ 1-Lipschitz},\; |\sigma_l| \leq 1\}$.
The model-free linear dimension $d_T = \dim(\mathrm{span}\{T(S) - T(S') : S, S' \in \mathcal{X}^{\leq N}\})$ used in Appendix~\ref{app:additional-theory} (Definition~\ref{def:tdof}) lower-bounds this constructive count; for $f \in \mathcal{F}_L$ with bounded 1-Lipschitz coordinate functionals (e.g.\ all natural set-statistical operators in Proposition~\ref{prop:tdof-natural}) the two coincide. Functions with $\mathrm{TDOF}(f) = 0$ require no input-dependent transformation; those with $\mathrm{TDOF}(f) > 0$ fundamentally require the transformation itself to adapt.
\end{definition}

We write $\mathcal{F}_L = \{S \mapsto (M_0 + \sum_{l=1}^L \sigma_l(\mu(S))\, M_l)\,\mu(S) : \|M_l\|_F \le B,\, |\sigma_l| \le 1\}$ for the linearised operator family with TDOF at most $L$. $\mathcal{F}_L$ is the natural linearisation class for any smooth context-adaptive operator: if $T(\mu)$ has Jacobian $J$ with $\mathrm{rank}(J) = r$ at $\mu_0$, then $L = r$ MZ generators capture its first-order behaviour exactly via the SVD of $J$ (Appendix~\ref{app:additional-theory}, Proposition~\ref{prop:taylor-fl}). The non-linear targets we evaluate (MST, MEB, hull) do not sit exactly in $\mathcal{F}_L$, so TDOF acts as a structural hypothesis validated empirically.

\begin{theorem}[Relational capacity bottleneck]
\label{thm:bottleneck}
(a) Standard attention without FFN has zero relational capacity: $T_{\mathrm{std}} = \sum_h W_o^{(h)} W_V^{(h)}$ is constant, independent of $S$.
(b) \methodname{}-attention represents any function in $\mathcal{F}_L$ with $L$ generators using $(L{+}1)d_k^2 + L d_k$ parameters per head, and this $\Theta(Ld_k^2)$ count is minimax-optimal.
\end{theorem}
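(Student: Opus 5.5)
For part (a), the plan is to write out the multi-head output directly. Head $h$ produces $W_V^{(h)}\hat v_i^{(h)}$, with $\hat v_i^{(h)} = \sum_j \alpha^{(h)}_{ij} x_j$ the aggregated input. The output projection then gives $\sum_h W_o^{(h)} W_V^{(h)} \hat v_i^{(h)}$. Without an FFN there is no nonlinearity after aggregation, so all dependence on $S$ enters only through the convex weights $\alpha^{(h)}_{ij}$. These weights act on the \emph{vectors} being combined, not on the linear map applied to them.

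I would make this precise by reading the relational operator of Definition~\ref{def:relop} off the coefficient matrix multiplying the aggregated value. Then $T_{\mathrm{std}} = \sum_h W_o^{(h)} W_V^{(h)}$ (or per head, $W_o^{(h)}W_V^{(h)}$) contains no term depending on $S$. Consequently, for any two sets $S, S'$ we have $T(S)-T(S') = 0$, so $d_T = 0$. By the lower bound $d_T \le \mathrm{TDOF}$ stated after Definition~\ref{def:rel-complexity}, the relational capacity, measured as the number of realisable context-dependent directions, is zero.

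For the upper bound in (b), I would give an explicit construction. Take $f(S) = (M_0 + \sum_{l=1}^L \sigma_l(\mu(S)) M_l)\mu(S) \in \mathcal{F}_L$ and set $M_c^{(h)} = M_0$ and $M_l^{(h)} = M_l$. The gate $\gamma_l(\mu)$ must then equal $\sigma_l(\mu)$. Since $|\sigma_l|\le 1$ and $\sigma_l$ is 1-Lipschitz, this is admissible under the gate range $[-1,1]$ of Definition~\ref{def:relop}, provided the gate network is allowed to realise $\sigma_l$ (abstractly, $\gamma_l$ ranges over such functions). The attention is chosen to produce $\hat v_i = \mu(S)$: for a mean statistic, uniform attention scores (zero query/key weights) suffice.

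The parameter count is then one $d_k\times d_k$ centre plus $L$ generators of the same size, giving $(L+1)d_k^2$ entries, plus a per-generator gate vector of size $d_k$ (the $\tanh(W_\gamma \bar c)$ row), giving $Ld_k$. The total is $(L+1)d_k^2+Ld_k = \Theta(Ld_k^2)$.

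For the minimax lower bound, I would use a counting/dimension argument. Consider the subfamily of $\mathcal{F}_L$ in which the gates $\sigma_l$ are fixed to be linearly independent functions of $\mu$ (e.g.\ clipped coordinates $\sigma_l(\mu)=\mathrm{clip}(\mu_l)$, assuming $d\ge L$). The map $(M_0,\dots,M_L)\mapsto f$ is then linear and injective, because independent gates let one recover each $M_l$ from $f$ evaluated on suitable sets. So this subfamily is a ball of dimension $(L+1)d_k^2$ in the Frobenius norm.

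The bound would follow from the minimax parameter-versus-accuracy notion, presumably formalised in the appendix as a metric-entropy statement. Covering $\epsilon$-accurately a $D$-dimensional ball of radius $B$ requires $\log N(\epsilon) \asymp D\log(B/\epsilon)$, so any parameterisation with $p$ parameters at bounded precision (or any Lipschitz parameterisation, via a Warren/VC-type or topological-dimension argument) needs $p \gtrsim D = \Omega(Ld_k^2)$.

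The main obstacle is this last step: making ``minimax-optimal'' rigorous without a precisely stated model class. I would first try the cleanest version, a continuous injective parameterisation argument (invariance of domain forbids a continuous injection from a ball of dimension $D$ into $\mathbb{R}^p$ with $p<D$). If that is too weak, I would fall back to the entropy counting bound.
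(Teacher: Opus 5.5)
Your proposal is correct and mostly follows the paper's route, with a different hard subfamily and one alternative argument.

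\textbf{Where you match the paper.} The upper bound is the paper's construction: $M_c=M_0$, the $M_l$ as generators, $\gamma_l=\sigma_l$, and uniform attention so that $\hat v=\mu(S)$, giving the count $(L+1)d_k^2+Ld_k$. Your fallback lower bound is the paper's metric-entropy argument: a packing of size $(cBR/\epsilon)^{D}$, with $D$ the dimension of the hard subfamily, against the at most $\exp(CP\log(1/\epsilon))$ cells that a $P$-parameter class, Lipschitz in $\theta$ on a bounded parameter range, can resolve. You are also right that $\tanh(W_\gamma\bar c)$ only realises gates of the form $\tanh(w_l^\top\mu)$. The paper's construction makes the same abstraction silently, and its $Ld_k$ term counts only such gates.

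\textbf{Part (a).} The paper does not read the operator off per head. It restricts to constant sets $S_t=\{te_i,\dots,te_i\}$, where every head's softmax is uniform. Then all heads aggregate the same vector, so $\sum_h W_o^{(h)}W_V^{(h)}$ is literally the map applied to $\mu(S_t)$. With head-dependent $\hat v^{(h)}$, your ``coefficient matrix'' reading only makes sense per head. Also, $d_T\le\mathrm{TDOF}$ points the wrong way for deducing $\mathrm{TDOF}=0$ from $d_T=0$. You do not need it: a constant $T$ is already represented with $L=0$.

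\textbf{Your hard subfamily.} It is a better choice than the paper's. The paper fixes $M_0=I$ and takes $\sigma_l=\mathrm{id}$ clipped, leaving implicit that the gates must differ across $l$. Keeping $M_0$ free matches the leading $(L+1)d_k^2$ term of the upper bound. Distinct gates are what make the $L$ generators separately identifiable.

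\textbf{The injectivity step needs correcting.} Linear independence of the $\sigma_l$ is not by itself the reason for injectivity, because the terms $\sigma_l(\mu)M_l\mu$ can cancel across $l$. For example, with $\sigma_l(\mu)=\mu_l$, the choice $M_1=e_ie_2^\top$, $M_2=-e_ie_1^\top$ gives $\mu_1\mu_2e_i-\mu_2\mu_1e_i=0$.
\begin{itemize}
\item For your clipped coordinates, injectivity does hold once admissible means reach $|\mu_l|>1$. On the region where only $|\mu_l|$ exceeds $1$, $f$ is polynomial with linear part $(M_0\pm M_l)\mu$, which recovers $M_l$.
\item If the clip is never active, the kernel has dimension only $d_kL(L-1)/2$. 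The family then still has dimension at least $(L/2+1)d_k^2$ for $L\le d_k$, so the $\Omega(Ld_k^2)$ conclusion survives.
\end{itemize}

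\textbf{The invariance-of-domain route.} This is a genuinely different argument, but it has two limitations.
\begin{itemize}
\item It requires a continuous selection $f\mapsto\theta(f)$. A merely continuous map $\theta\mapsto f_\theta$ yields no bound, because space-filling curves give continuous surjections from lower-dimensional parameter spaces.
\item It covers only exact representation. The paper's minimax claim (Theorem~\ref{thm:minimax}) concerns $\epsilon$-approximation in sup-norm.
\end{itemize}
So the entropy bound should be your primary argument, with its Lipschitz-in-$\theta$ and bounded-range assumptions stated explicitly.
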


\begin{proof}[Proof sketch]
For (a), constant sets $S_t$ make softmax uniform, so the output is a fixed linear map. The matching $\Omega(L d_k^2 \log(BR/\epsilon))$ lower bound, the FFN-augmented Yarotsky-style bound on attention with FFN, the Eckart--Young residual bound, and a strict separation construction are in Appendix~\ref{app:additional-theory}.
\end{proof}

Part~(a) is the no-FFN zero-capacity statement (the cleanest qualitative form of the bottleneck); with FFN, zero capacity is replaced by the explicit width/depth tradeoff bound used in part~(b) and Theorem~\ref{thm:tdof-depth}.
Thus \methodname{}-attention is minimally parameterised for $\mathcal{F}_L$: $\Theta(Ld_k^2)$ parameters versus $O(d_k^4)$ for a general hypernetwork. Natural set operators have explicitly computable TDOF: Mahalanobis distance and whitening have $\mathrm{TDOF} = d(d{+}1)/2$, top-$k$ PCA projection has $\mathrm{TDOF} = kd - k(k{+}1)/2$, and per-coordinate $z$-score has $\mathrm{TDOF} = d$ (Appendix~\ref{app:additional-theory}, Proposition~\ref{prop:tdof-natural}); soft Chamfer matching has $\mathrm{TDOF} \geq d$ (Proposition~\ref{prop:chamfer-tdof}), accounting for the \methodname{} gap on Chamfer set-matching (Task~B).

\subsection{Transformation Degrees of Freedom and Depth Separation}
\label{sec:tdof}

The transformation degrees of freedom $\mathrm{TDOF}(f) = \dim(\mathrm{span}\{T(S) - T(S')\})$ is intrinsic to $f$ (see Definition~\ref{def:tdof}). It yields a depth lower bound:

\begin{theorem}[Depth separation via TDOF, worst-case over $\mathcal{F}_k$]
\label{thm:tdof-depth}
Fix the input distribution to be sets $S$ whose mean $\mu(S) \in \R^{d}$ has i.i.d.\ standard-Gaussian coordinates. There exists $f \in \mathcal{F}_k$ on this input distribution (the explicit witness of Lemma~\ref{lem:comp-orth} with $M_0 = 0$) for which any standard attention network of depth $D$ with FFN width $d_{\mathrm{ff}}$ that represents $f$ exactly requires $D \geq k\,d_k / d_{\mathrm{ff}}$, while a single \methodname{} layer with $L \geq k$ generators followed by the signed-generator readout (Section~\ref{sec:architecture}) represents the same $f$ exactly. For approximate representation in $L^2(\mu)$ with $L^2$-residual error of order $O(\mathrm{Var}(f)/L)$, the same depth lower bound holds up to constants.
\end{theorem}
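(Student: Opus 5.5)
The proof splits into an upper bound (one \methodname{} layer suffices) and a lower bound (standard attention needs depth $D \ge k\,d_k/d_{\mathrm{ff}}$). The witness is the one from Lemma~\ref{lem:comp-orth} with $M_0=0$. I take it to be $f(S)=\sum_{l=1}^k \sigma_l(\mu(S))\,M_l\,\mu(S)$, where the $M_l$ are mutually Frobenius-orthogonal and each $\|M_l\|_F\le B$. The $\sigma_l$ are 1-Lipschitz, bounded by $1$, and chosen so that the coordinate functionals $\mu\mapsto\sigma_l(\mu)$ are linearly independent and, under the Gaussian law of $\mu$, orthogonal in $L^2$.

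\textbf{Upper bound.} Set the query/key weights to zero. The softmax is then uniform, so $\hat c_i=\mu(S)$ for every $i$. Use $L=k$ generators with $M_l^{(h)}:=M_l$ and gates $\gamma_l=\sigma_l$. The gate is $\tanh(W_\gamma\bar c)$, which is not a general 1-Lipschitz function, so I restrict the witness to $\sigma_l=\tanh(\langle w_l,\mu\rangle)$; this loses no generality for a worst-case existence statement. Take $M_c^{(h)}=0$ and let $W_{\mathrm{mix}}$ embed the $L$ rows into the generator slots. The $\hat G_i$ term vanishes because $M_c=0$. The signed readout $\sum_g G_g$ then returns exactly $\sum_l \gamma_l M_l\mu=f(S)$. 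Any extra generators beyond $k$ are set to zero.

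\textbf{Lower bound.} Write a depth-$D$ standard network as a composition. By Theorem~\ref{thm:bottleneck}(a), each attention sublayer acts on the aggregated values through a fixed linear map. Along the constant-set slice of the input distribution the softmax is uniform, so every input-dependent change of the linear operator applied to $\mu$ must come from the FFNs. Now look at the span of $\{T_{\mathrm{net}}(S)-T_{\mathrm{net}}(S')\}$, the model-free dimension $d_T$ of Definition~\ref{def:tdof}. Each FFN layer has width $d_{\mathrm{ff}}$. Its contribution to this span is confined to matrices whose input-dependence factors through $d_{\mathrm{ff}}$ scalar hidden units. Each hidden unit can modulate at most a rank-one outer-product direction per output coordinate block, so a layer adds at most $d_{\mathrm{ff}}/d_k$ independent directions in the $d_k\times d_k$ operator space. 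Summing over layers gives $d_T\le D\,d_{\mathrm{ff}}/d_k$.

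For the witness, Definition~\ref{def:rel-complexity} gives $d_T=\mathrm{TDOF}(f)=k$, since the $\sigma_l$ are independent functionals and the $M_l$ are orthogonal. Hence $D\ge k\,d_k/d_{\mathrm{ff}}$.

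\textbf{Approximate version.} Project the network's operator onto the $L^2(\mu)$-span of the witness directions. Because the $\sigma_l M_l$ are orthogonal under the Gaussian law, the residual error is at least the energy in the directions the network misses. This is an Eckart--Young-type bound, which is assumed available from Appendix~\ref{app:additional-theory}. Requiring the residual to be $O(\mathrm{Var}(f)/L)$ forces capture of a constant fraction of the $k$ directions, so the same bound holds up to constants.

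\textbf{Main obstacle.} The hard step is the per-layer count of at most $d_{\mathrm{ff}}/d_k$ new directions. A nonlinear FFN composed across layers can in principle produce functionals outside any fixed span, so the counting must be made rigorous. I would first try to work on the linearisation: differentiate the network's effective operator with respect to $\mu$ at generic points. Each FFN's Jacobian then contributes rank at most $d_{\mathrm{ff}}$ to a $d_k$-row block. Requiring exact representation on a set of positive Gaussian measure upgrades the pointwise rank bound to the span bound by analyticity. If that fails, I would fall back on the FFN-augmented Yarotsky-style bound cited in the proof sketch of Theorem~\ref{thm:bottleneck}.
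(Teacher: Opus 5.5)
Your upper bound (zero query/key weights so $\hat c_i=\mu(S)$, $M_c=0$, gates $\gamma_l=\tanh(\mu_l)$, signed readout summing the generator slots) matches the paper's proof, as does your Eckart--Young reduction of the approximate clause. The lower bound takes a different route, and that route has a genuine gap at exactly the step you flag.

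First, the per-layer count $d_{\mathrm{ff}}/d_k$ does not follow from your heuristic. If each hidden unit contributes one rank-one direction, $d_{\mathrm{ff}}$ units contribute up to $d_{\mathrm{ff}}$ directions, and nothing supplies the division by $d_k$. Concretely, on constant sets take $W_V=0$ and one FFN with units $e_l\,\mathrm{ReLU}(e_d^\top h+c_l)$, $l\le\min(k,d_{\mathrm{ff}})$, with distinct biases $c_l$. The network's Jacobian then jumps by exactly $M_l=e_le_d^\top$ when $\mu_d$ crosses $-c_l$, so a single layer already spans $\min(k,d_{\mathrm{ff}})$ witness directions. A span or Jacobian-rank count therefore cannot produce the $d_k$ factor. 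The paper obtains that factor differently, from the neuron count in the proof of Theorem~\ref{thm:bottleneck}(b): each direction is treated as $d_k$ coordinatewise bilinear products $\sigma_l\cdot(M_lh)_j$, each costing neurons by the Yarotsky lower bound. (That count itself deserves scrutiny for this rank-one witness, where $M_l\mu=\mu_d e_l$ has a single nonzero coordinate.)

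Second, directions do not add across layers. Write each residual layer's Jacobian as $I+A_\ell$ with $A_\ell=U_\ell\,\mathrm{diag}(s_\ell)\,V_\ell$. These multiply, $(I+A_2)(I+A_1)=I+A_1+A_2+A_2A_1$, and the cross term pairs output weights of layer $2$ with input weights of layer $1$. So the span is no longer controlled by $D\,d_{\mathrm{ff}}$. Your analyticity upgrade cannot repair this, since ReLU FFNs are piecewise linear rather than analytic.

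This is exactly what the specific witness is for in the paper. With $M_l=e_le_d^\top$ and $k\le d-1$, Lemma~\ref{lem:comp-orth}(b)--(c) gives $M_{l_1}\cdots M_{l_r}=0$ for every $r\ge 2$. Hence every composition term $P(\sigma)\otimes(M_{l_1}\cdots M_{l_r})$ has zero projection onto $\mathcal V=\mathrm{span}\{\sigma_l\otimes M_l\}$ in $L^2(\mu)\otimes\R^{d\times d}$. Only the per-layer budget accumulates, giving $\Lambda_D\le D\lfloor d_{\mathrm{ff}}/d_k\rfloor$. By describing the witness only as Frobenius-orthogonal with independent gates, you discard this nilpotency, the one structural property the paper's lower bound rests on. The paper's remark after the lemma says general orthonormal families would need a separate analysis.

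Two smaller points. First, $T_{\mathrm{net}}$ is not canonical, since $g(\mu)=T(\mu)\mu$ has many solutions. Comparing the network's $d_T$ with $\mathrm{TDOF}(f)=k$ therefore needs a lower bound valid for every factorisation of $f$, not just for $\sum_l\sigma_lM_l$. Second, on the constant-set slice you use, a ReLU network is piecewise linear in $\mu$ while $f$ contains $\tanh(\mu_l)\mu_d$. So no depth represents $f$ exactly, and the exact clause holds vacuously. The substantive claim is the approximate one, where what costs neurons is accuracy, not span.
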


A compositional-orthogonality argument over the Gaussian input distribution (Appendix~\ref{app:additional-theory}, Lemma~\ref{lem:comp-orth}) realises the bound for an explicit Frobenius-orthonormal witness ($M_l = e_l e_d^\top$, $M_0 = 0$) even with FFN and residual connections; extending the lower bound to a generic high-TDOF target family with non-trivial cross-layer matrix products is open.

\paragraph{Empirical verification.}
We verify Theorem~\ref{thm:tdof-depth} on synthetic quadratic targets
\begin{equation}
\label{eq:quadratic-tdof}
f(S) = \mathbf{w}^\top\!\bigl(M_0 + \sum_{l=1}^L \mu(S)_l M_l\bigr)\,\mu(S)
\end{equation}
with Frobenius-orthogonal $M_l$ giving $\mathrm{TDOF}(f) = L$. Across $L \in \{4, 8, 16, 24, 32\}$ the four-layer Set Transformer's test $R^2$ degrades from $0.97$ to $0.40$ while a single \methodname{} layer holds $R^2 \geq 0.90$, with a $6$--$15\times$ residual-MSE gap (Table~\ref{tab:tdof-verify}): adding depth alone cannot cure the bottleneck. EGNN~\citep{satorras2021egnn} hard-codes rotation equivariance and collapses to $R^2 \approx 0$ for every $L$ because the target depends on a fixed coordinate frame --- the complementary failure mode of symmetry-enforcing architectures.

The same picture holds at the architecture level: a 1-layer \methodname{} (244K params) outperforms standard Set Transformers up to depth 6 ($\geq 480$K params; Appendix Table~\ref{tab:depth}), confirming depth separation on real architectures, not only the synthetic-task sweep.

\begin{table}[t]
 \centering
 \footnotesize
 \setlength{\tabcolsep}{4pt}
 \renewcommand{\arraystretch}{0.95}
 \caption{Depth-separation verification on the quadratic-TDOF target (Eq.~\ref{eq:quadratic-tdof}; mean$\pm$std, 3 seeds). 1-layer \methodname{} holds $R^2 \geq 0.90$ across $L$ while 4-layer standard ST degrades.}
 \label{tab:tdof-verify}
 \begin{tabular}{@{}cccc@{}}
 \toprule
 $L$ & \methodname{}-1L $R^2$ $\uparrow$ & ST-4L $R^2$ $\uparrow$ & MSE ratio $\uparrow$ \\
 \midrule
 $4$  & $0.995 \pm 0.002$ & $0.970 \pm 0.005$ & $6.6\times$ \\
 $8$  & $0.988 \pm 0.001$ & $0.809 \pm 0.011$ & $15.4\times$ \\
 $16$ & $0.950 \pm 0.001$ & $0.618 \pm 0.017$ & $7.6\times$ \\
 $24$ & $0.944 \pm 0.011$ & $0.547 \pm 0.020$ & $8.0\times$ \\
 $32$ & $0.908 \pm 0.016$ & $0.396 \pm 0.001$ & $6.5\times$ \\
 \bottomrule
 \end{tabular}
\end{table}

\subsection{Uncertainty, Generalization, and Structural Properties}
\label{sec:mixing-theory}

The zonotope representation yields directional perturbation certificates strictly tighter than scalar Lipschitz bounds, with the interval-hull width (IHW) bounding output diameter and variance and thereby serving as a single-pass uncertainty estimate. Appendix~\ref{app:uncertainty-generalization} collects the precise statements: a norm-controlled generalisation bound of $\tilde{O}(B_{\mathrm{MZ}}/\gamma\sqrt n)$ in the spirit of \citet{bartlett2017spectrally}, universal approximation, and permutation equivariance. At initialisation, \methodname{} reduces exactly to standard multi-head attention.

Empirically (Appendix~\ref{app:ece-nll}, Table~\ref{tab:ece-nll}), the IHW estimate matches a $5$-model Deep Ensemble on ECE ($0.316$ vs.\ $0.337$) and is $\approx 2\times$ better on NLL ($5.02$ vs.\ $9.17$), at $1\times$ inference cost vs.\ $5\times$; MC-Dropout achieves the lowest ECE/NLL but at $30\times$ inference cost.

% ══════════════════════════════════════════════════════════════════
\section{Relationship to Prior Work}
\label{sec:related}
% ══════════════════════════════════════════════════════════════════

Zonotopes have served as external verification tools~\citep{bonaert2021fast, mirman2018differentiable, jordan2022zonotope, chung2025provably} and as set-valued reachability primitives in data-driven control~\citep{alanwar2021data, Alanwar2023Datadriven, kochdumper2023constrained}; \methodname{} folds a matrix zonotope into a trainable value-projection family inside an attention layer. The closest architectural relatives are context-adaptive linear maps spanning a capacity--parameters spectrum --- FiLM~\citep{perez2018film}, discrete MoE~\citep{shazeer2017outrageously}, low-rank Hypernetworks~\citep{ha2017hypernetworks, hyper2025attention}, dynamic filters~\citep{jia2016dynamic} --- among which \methodname{}'s matrix-zonotope parameterisation occupies the minimax-optimal middle at $O(L d_k^2)$ parameters per head (Theorem~\ref{thm:bottleneck}); we compare against them as parameter-matched baselines. Linear-attention and selective state-space variants~\citep{sun2023retentive, yang2024gated, qiu2025gated, gu2024mamba} adopt different value-projection inductive biases (input-independent or, for Mamba, input-dependent only along diagonal channels); a Mamba (S6) encoder with PMA pooling matches \methodname{}-Full at $d{=}8$ on the MEB target but degrades to $R^2{=}0.142{\pm}0.134$ at $d{=}32$ vs.\ \methodname{}'s stable $0.378{\pm}0.030$ (Table~\ref{tab:meb}, Mamba + PMA row), indicating that input-dependent diagonal selectivity alone is dimension-conditional rather than a structural escape from Theorem~\ref{thm:bottleneck}; \methodname{} is orthogonal to and composable with them. Set-input networks~\citep{zaheer2017deep, lee2019set, qi2017pointnet} represent relationships as scalar scores, which we generalise to matrix-valued context-adaptive operators. Full discussion in Appendix~\ref{app:related-extended}.

% ══════════════════════════════════════════════════════════════════
\section{Experiments}
\label{sec:experiments}
% ══════════════════════════════════════════════════════════════════

We evaluate \methodname{} on eight task families, comparing against seven baselines: Deep Sets~\citep{zaheer2017deep}, a standard Set Transformer~\citep{lee2019set}, ST + FiLM~\citep{perez2018film}, ST-Large (parameter-matched wider variant), HyperNet (low-rank input-dependent value-projection update), Perceiver~\citep{jaegle2021perceiver}, and Slot Attention~\citep{locatello2020object}. ST-Large, Perceiver, and HyperNet are matched to \methodname{}'s parameter count (${\sim}$360--368K) to control for capacity. Unless noted, experiments are repeated over multiple seeds with mean $\pm$ std reported.

\subsection{Experimental Setup}
All attention-based architectures share a base configuration of $d{=}64$, $H{=}4$ heads, $d_{\text{ff}}{=}128$, two ISAB encoder layers, dropout $0.1$, and identical training (AdamW~\citep{loshchilov2017decoupled} with learning rate $10^{-4}$, weight decay $10^{-5}$, cosine schedule, early stopping at patience $20$, all implemented in PyTorch~\citep{paszke2019pytorch}); ST-Large, Perceiver, and HyperNet are configured to match \methodname{}'s parameter count of $\sim$365K, and \methodname{} uses $n_g{=}8$ zonotope generators and $L{=}4$ matrix-zonotope generators. Deep Sets, Slot Attention, the FiLM-conditioned Set Transformer, and the per-baseline configurations of HyperNet rank, Perceiver latent count, and Slot iterations are described in Appendix~\ref{app:impl}. We evaluate on eight tasks spanning a range of relational complexity (full descriptions in Appendix~\ref{app:task-descriptions}): set regression on $\R^{16}$ (A), Chamfer-distance set matching in $\R^3$ (B), ModelNet40~\citep{wu2015modelnet} (C) and ScanObjectNN~\citep{uy2019scanobjectnn} (E) point-cloud classification, few-shot set retrieval~\citep{snell2017prototypical} (D), rotation-dependent matching in $\R^8$ (F), minimum-spanning-tree weight in $\R^8$ (G), and QM9 molecular property prediction~\citep{ramakrishnan2014quantum} (H). All experiments use 200 epochs with early stopping; results are averaged over 3--5 seeds.

\subsection{Results}

\begin{table}[t]
 \centering
 \footnotesize
 \setlength{\tabcolsep}{3pt}
 \caption{Main results across seven synthetic and point-cloud benchmarks (Tasks A--G). QM9 (Task H) and Slot Attention appear separately in Tables~\ref{tab:qm9} and~\ref{tab:slot-baseline}. Best in \textbf{bold} among the four parameter-matched attention models (ST-Large, Perceiver, HyperNet, \methodname{}, all at ${\sim}$360--368K); mean $\pm$ std over 3--5 seeds. Smaller-budget baselines (ST + FiLM at $225$K, Standard Set Transformer at $208$K) are reported for reference and are not subject to the bold rule. Tasks C and E are point-cloud classification; Task G is MST weight prediction in $\R^8$.}
 \label{tab:main-results}
 \resizebox{\textwidth}{!}{%
 \begin{tabular}{@{}lcccccccccccccc@{}}
 \toprule
 & \multicolumn{2}{c}{\textbf{Task A: Set Regr.}} & \multicolumn{2}{c}{\textbf{Task B: Set Match.}} & \multicolumn{2}{c}{\textbf{Task F: Rot.\ Match.}} & \multicolumn{2}{c}{\textbf{Task G: MST Wt.}} & \textbf{Task C} & \textbf{Task E} & \multicolumn{2}{c}{\textbf{Task D: Few-Shot}} & \\
 \cmidrule(lr){2-3} \cmidrule(lr){4-5} \cmidrule(lr){6-7} \cmidrule(lr){8-9} \cmidrule(lr){10-10} \cmidrule(lr){11-11} \cmidrule(lr){12-13}
 Model & MSE $\downarrow$ & $R^2$ $\uparrow$ & MSE $\downarrow$ & $R^2$ $\uparrow$ & MSE $\downarrow$ & $R^2$ $\uparrow$ & MSE $\downarrow$ & $R^2$ $\uparrow$ & MN40 $\uparrow$ & Scan $\uparrow$ & MSE ($\times 10^{-2}$) $\downarrow$ & $R^2$ $\uparrow$ & Params \\
 \midrule
 Deep Sets & -- & -- & 490.2{\tiny$\pm$33.4} & 0.652{\tiny$\pm$0.024} & -- & -- & -- & -- & -- & 40.6{\tiny$\pm$1.0}\% & \textbf{2.05}{\tiny$\pm$0.38} & \textbf{0.722}{\tiny$\pm$0.052} & 36K \\
 Set Transformer & 3.43{\tiny$\pm$0.09} & 0.811{\tiny$\pm$0.005} & 138.0{\tiny$\pm$10.3} & 0.901{\tiny$\pm$0.007} & 487.5{\tiny$\pm$31.1} & 0.904{\tiny$\pm$0.006} & 209.8{\tiny$\pm$1.8} & 0.712{\tiny$\pm$0.002} & 77.5{\tiny$\pm$0.6}\% & 71.6{\tiny$\pm$2.2}\% & 2.60{\tiny$\pm$0.24} & 0.646{\tiny$\pm$0.032} & 208K \\
 ST + FiLM & 3.77{\tiny$\pm$0.26} & 0.792{\tiny$\pm$0.014} & 48.0{\tiny$\pm$7.9} & 0.966{\tiny$\pm$0.006} & 490.5{\tiny$\pm$17.8} & 0.904{\tiny$\pm$0.003} & 232.7{\tiny$\pm$12.5} & 0.681{\tiny$\pm$0.017} & 78.4{\tiny$\pm$0.8}\% & 68.9{\tiny$\pm$3.2}\% & 2.77{\tiny$\pm$0.21} & 0.624{\tiny$\pm$0.029} & 225K \\
 \cmidrule{1-14}
 ST-Large & 3.36{\tiny$\pm$0.10} & 0.815{\tiny$\pm$0.005} & 106.5{\tiny$\pm$5.3} & 0.924{\tiny$\pm$0.004} & 423.2{\tiny$\pm$51.8} & 0.917{\tiny$\pm$0.010} & 168.7{\tiny$\pm$23.0} & 0.769{\tiny$\pm$0.032} & \textbf{79.9}{\tiny$\pm$1.0}\% & \textbf{72.9}{\tiny$\pm$2.4}\% & -- & -- & 362K \\
 Perceiver & 3.47{\tiny$\pm$0.05} & 0.809{\tiny$\pm$0.003} & 136.2{\tiny$\pm$17.5} & 0.902{\tiny$\pm$0.013} & 872.1{\tiny$\pm$163.8} & 0.829{\tiny$\pm$0.032} & 249.0{\tiny$\pm$4.7} & 0.659{\tiny$\pm$0.006} & 75.0{\tiny$\pm$0.3}\% & 67.3{\tiny$\pm$1.2}\% & -- & -- & 341K \\
 HyperNet & 3.51{\tiny$\pm$0.01} & 0.806{\tiny$\pm$0.001} & 76.1{\tiny$\pm$5.7} & 0.945{\tiny$\pm$0.004} & 1373.0{\tiny$\pm$141.6} & 0.730{\tiny$\pm$0.028} & 291.2{\tiny$\pm$9.2} & 0.601{\tiny$\pm$0.013} & 75.5{\tiny$\pm$1.7}\% & 66.1{\tiny$\pm$0.9}\% & -- & -- & 360K \\
 \midrule
 \methodname{} (ours) & \textbf{2.92}{\tiny$\pm$0.09} & \textbf{0.839}{\tiny$\pm$0.005} & \textbf{58.4}{\tiny$\pm$11.9} & \textbf{0.958}{\tiny$\pm$0.009} & \textbf{241.2}{\tiny$\pm$3.0} & \textbf{0.953}{\tiny$\pm$0.001} & \textbf{99.4}{\tiny$\pm$1.9} & \textbf{0.864}{\tiny$\pm$0.003} & 78.1{\tiny$\pm$0.7}\% & 69.4{\tiny$\pm$3.7}\% & 2.10{\tiny$\pm$0.16} & 0.715{\tiny$\pm$0.022} & 365K \\
 \bottomrule
 \end{tabular}}
\end{table}

\textbf{High-TDOF relational tasks (A, B, F, G).} \methodname{} achieves the best performance on all four: $13\%$ MSE reduction vs.\ ST-Large on set regression (A), $45\%$ on set matching (B), $\mathbf{51\%}$ over FiLM on rotation matching (F), and $\mathbf{41\%}$ over ST-Large on MST weight (G). On (B), FiLM matches the low-dimensional ($\R^3$) regime (MSE $48.0$), but HyperNet's rank-$8$ update is insufficient ($76.1$) and higher rank degrades further (Appendix~\ref{app:hypernet-rank}). On (F), FiLM performs no better than standard attention, consistent with diagonal modulation's inability to express off-diagonal rotations. On (G), \methodname{}'s seed std is $\pm 1.9$ vs.\ $\pm 23.0$ for ST-Large, indicating stable representation of the pairwise-distance geometry. \textbf{Low-TDOF tasks (C, D, E).} On point-cloud classification (ModelNet40 / ScanObjectNN), ST-Large achieves the best accuracy ($79.9\%$ / $72.9\%$) with \methodname{} within $1$--$3$ pp ($78.1\%$ / $69.4\%$); these tasks depend primarily on per-element features, so Theorem~\ref{thm:bottleneck} predicts no advantage for context-adaptivity. On few-shot retrieval (D), Deep Sets ($R^2 = 0.722$) and \methodname{} ($R^2 = 0.715$) tie and both exceed Standard attention ($0.646$) and FiLM ($0.624$) by $+0.07$--$0.10$ $R^2$, indicating set-aggregation pooling dominates over per-sample operator structure at this scale. On QM9 (Table~\ref{tab:qm9}), \methodname{} matches the best parameter-matched attention baseline within $0.05$ $R^2$ on every target without leading on any individual one, the predicted low-TDOF behaviour. Slot Attention lags all attention-based models on every task tested (Table~\ref{tab:slot-baseline}); all models maintain permutation invariance to within $10^{-5}$ across $10$ random permutations.

\subsection{Computational Geometry and Real-World Transfer}
The MEB radius depends on at most $d{+}1$ extremal points: a sparse combinatorial selector that context-rigid attention cannot express. Table~\ref{tab:meb} reports $R^2$ across dimensions for parameter-matched baselines.

\begin{table}[t]
 \centering
 \footnotesize
 \setlength{\tabcolsep}{3pt}
 \renewcommand{\arraystretch}{0.95}
 \caption{MEB radius prediction ($R^2$, mean$\pm$std, 5 seeds). Point clouds are mixtures of Gaussians with $n \in [10, 30]$. LA-ST: linear attention~\citep{katharopoulos2020transformers}. $^{\dagger}$Mamba + PMA replaces the ISAB encoder with a Mamba (S6) selective state-space encoder (mambapy 1.2.0). Mean-pool/Mamba dimension-dependence and \methodname{}-Large variant (Appendix~\ref{app:meb}) are discussed in the surrounding paragraph; EGNN comparison: Table~\ref{tab:egnn-comparison}.}
 \label{tab:meb}
 \begin{tabular}{@{}lcccc@{}}
 \toprule
 Model & Params & $d{=}8$ $R^2$ & $d{=}16$ $R^2$ & $d{=}32$ $R^2$ \\
 \midrule
 Standard & 208K & 0.025{\tiny$\pm$0.009} & 0.020{\tiny$\pm$0.005} & 0.006{\tiny$\pm$0.012} \\
 ST-Large & 362K & 0.018{\tiny$\pm$0.001} & 0.012{\tiny$\pm$0.015} & $-$0.006{\tiny$\pm$0.011} \\
 HyperNet ($r{=}8$) & 360K & 0.004{\tiny$\pm$0.013} & 0.005{\tiny$\pm$0.012} & $-$0.001{\tiny$\pm$0.001} \\
 HyperNet ($r{=}64$, full key-rank) & 1.3M & 0.011{\tiny$\pm$0.012} & 0.001{\tiny$\pm$0.009} & 0.000{\tiny$\pm$0.002} \\
 Perceiver & 341K & 0.024{\tiny$\pm$0.006} & 0.016{\tiny$\pm$0.028} & 0.010{\tiny$\pm$0.010} \\
 LA-ST (Linear Attn.) & 359K & 0.017{\tiny$\pm$0.016} & 0.015{\tiny$\pm$0.009} & 0.001{\tiny$\pm$0.002} \\
 Standard (mean-pool head) & 140K & 0.784{\tiny$\pm$0.004} & 0.647{\tiny$\pm$\textbf{0.363}} & 0.449{\tiny$\pm$\textbf{0.261}} \\
 Mamba + PMA$^{\dagger}$ & 645K & 0.713{\tiny$\pm$0.025} & 0.617{\tiny$\pm$0.041} & 0.142{\tiny$\pm$\textbf{0.134}} \\
 \methodname{}-Full & 368K & \textbf{0.691}{\tiny$\pm$0.012} & \textbf{0.629}{\tiny$\pm$0.023} & \textbf{0.378}{\tiny$\pm$0.030} \\
 \bottomrule
 \end{tabular}
\end{table}

Every parameter-matched baseline using the standard PMA pooling, including LA-ST, fails at $R^2 \leq 0.025$, whereas \methodname{}-Full reaches a $28$--$155\times$ gap (median $\approx 40\times$ across $13$ positive-baseline cells, with the $155\times$ upper bound at $d{=}8$ vs.\ HyperNet's $0.0044$; Figure~\ref{fig:meb-tdof}, left). The collapse is structural: HyperNet at $r{=}d_k{=}64$ ($1.3$M params, $3.6\times$ budget) leaves $R^2$ within $\pm 0.012$ of zero (Appendix~\ref{app:hypernet-rank-ext}). Replacing PMA with mean-pool partially escapes the ceiling at $d{=}8$ ($0.784{\pm}0.004$, Standard mean-pool row above) but exhibits heavy-tail bimodal failure at higher dimensions ($d{=}16$: $0.647{\pm}0.363$ with $1/5$ seeds catastrophically collapsing to $R^2 \approx 0$; $d{=}32$: $0.449{\pm}0.261$ with another $1/5$ catastrophic). Replacing the ISAB encoder with a Mamba (S6) selective state-space encoder while keeping PMA pooling (Mamba + PMA row, $1.75\times$ \methodname{}'s budget) matches \methodname{}-Full at $d{=}8$ ($0.713{\pm}0.025$) but degrades to $R^2 = 0.142{\pm}0.134$ at $d{=}32$ with all $5$ seeds below $0.30$. \methodname{}-Full + PMA maintains structural stability ($\sigma \leq 0.030$) at $R^2 \geq 0.378$ across all three dimensions, providing the only uniform high-$d$ escape we observed. Convex-hull-volume and a sparse-geometric-set-predicates family confirm the collapse (Appendices~\ref{app:hull},~\ref{app:sgsp}); rotation matching at $d{=}256$ has all baselines at $R^2 \in [-50, -62]$ while \methodname{} stays at $-0.005$ (Table~\ref{tab:rotmatch-scale}). On MST and convex hull, the raw-target negative-$R^2$ magnitude is partly an optimisation artefact of unnormalised regression on large-magnitude targets; under target normalisation the architectural advantage on MST shrinks to $+0.04$--$0.08$ $R^2$ and the hull $d{=}9$ cell ties (Appendix~\ref{app:opt-confound}). The MEB and quadratic-TDOF separations are scale-free or $O(1)$-magnitude, so the gap there is structural rather than scale-driven.

\begin{table}[t]
 \centering
 \footnotesize
 \setlength{\tabcolsep}{3pt}
 \renewcommand{\arraystretch}{0.95}
 \caption{Rotation matching scaling ($R^2$, mean over 3 seeds). Context-rigid baselines collapse to large negative $R^2$ as $d$ grows; \methodname{} tracks the mean predictor up to $d{=}256$. At $d{=}512$ all architectures diverge but \methodname{}'s MSE stays $2.3\times$ smaller than the next-best.}
 \label{tab:rotmatch-scale}
 \begin{tabular}{@{}lcccccc@{}}
 \toprule
 Model & $d{=}16$ $R^2$ & $d{=}32$ $R^2$ & $d{=}64$ $R^2$ & $d{=}128$ $R^2$ & $d{=}256$ $R^2$ & $d{=}512$ $R^2$ \\
 \midrule
 Standard & 0.084 & $-$0.002 & $-$1.20 & $-$21.54 & $-$61.69 & $-$165.4 \\
 ST-Large & 0.285 & $-$0.001 & $-$0.023 & $-$9.47 & $-$50.15 & $-$145.1 \\
 Perceiver & $-$0.001 & $-$0.006 & $-$1.17 & $-$19.18 & $-$61.52 & $-$165.2 \\
 HyperNet & 0.020 & $-$0.005 & $-$1.10 & $-$16.51 & $-$61.48 & $-$163.0 \\
 \methodname{} & \textbf{0.826} & \textbf{0.519} & \textbf{0.071} & \textbf{$-$0.001} & \textbf{$-$0.005} & \textbf{$-$61.6} \\
 \bottomrule
 \end{tabular}
\end{table}

\paragraph{Real-world transfer.}
On a CIFAR-100 ResNet-feature setup that meets the three prerequisites (high-rank per-sample points from layer-3 features; sparse $k{=}5$ $k$-center furthest-cost target; no imposed symmetry), \methodname{}-Full ranks first across 7 parameter-matched baselines on both ResNet-18 ($d{=}256$) and ResNet-50 ($d{=}1024$) backbones, with Welch's $t \approx 5.2$, $p \approx 0.0018$ at $d{=}1024$ ($n{=}5$ seeds; absolute gap $+0.0046$ $R^2$, uncorrected for the 18-cell TDOF analysis below). The advantage is restricted to the sparse-combinatorial $k$-center target: LinearAttn matches or beats \methodname{} on dense-statistical siblings (diameter, $k$-NN radius, MST) on real ResNet features (Appendix~\ref{app:realworld-section}, Tables~\ref{tab:resnet50-scaling} and~\ref{tab:resnet-features}).

Aggregating across eighteen (task, scale) settings, \methodname{}'s MSE reduction over the best parameter-matched baseline correlates with the estimated TDOF of each task's target operator (Figure~\ref{fig:meb-tdof}, right): Spearman $\rho = 0.74$ ($n{=}18$, two-sided parametric $p < 0.005$). The TDOF estimate is approximate (exact for linear-form tasks, order-of-magnitude for non-linear ones), so we treat the correlation as descriptive. A $\pm 50\%$ TDOF-perturbation sensitivity check over $10^4$ trials keeps Spearman in $[0.675, 0.791]$ at $5$\%--$95$\% (Appendix~\ref{app:tdof-sensitivity}). Tasks with TDOF~$\leq 5$ cluster near zero improvement (range $-2$ to $+19\%$); TDOF~$\geq 6$ show median $58\%$ MSE reduction (range $15$ to $98\%$).

\begin{figure}[t]
 \centering
 \includegraphics[width=\textwidth]{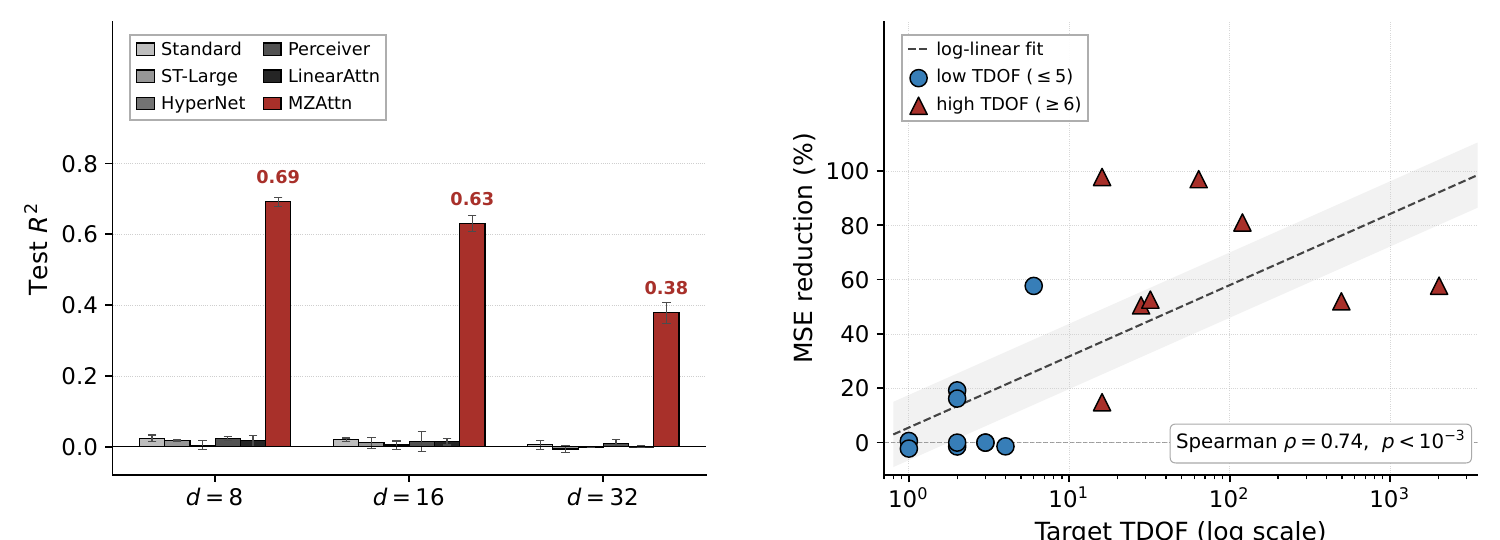}
 \caption{\textbf{The architectural advantage tracks transformation degrees of freedom on tasks where TDOF grows with $d$.} \textbf{Left:} on MEB radius prediction, parameter-matched attention baselines fail at $R^2 \leq 0.025$ across dimensions, while \methodname{} reaches $R^2 = 0.69, 0.63, 0.38$ at $d{=}8, 16, 32$. \textbf{Right:} across $18$ (task, scale) settings, \methodname{}'s MSE reduction over the best baseline correlates with target TDOF (Spearman $\rho = 0.74$, two-sided parametric $p<0.005$, $n{=}18$). Low-TDOF tasks cluster near zero; high-TDOF tasks span $15$--$98\%$ reduction with median $58\%$.}
 \label{fig:meb-tdof}
\end{figure}

Architectural ablations, robustness studies, uncertainty calibration, scaling experiments, and the equivariant comparison are reported in Appendix~\ref{app:additional-exp}.

\subsection{Analysis: What Does the Matrix Zonotope Learn?}
\label{sec:analysis}

We address why the matrix-zonotope parameterisation works while equally-flexible alternatives do not. \textbf{Capacity is not the driver.} HyperNet at $r{=}d_k{=}64$ ($1.3$M params, $3.6\times$ budget) leaves $R^2 \approx 0$ on MEB (Table~\ref{tab:meb}), and the full $r \in \{16, 32, 64\}$ sweep on Hull and MST at high $d$ (Table~\ref{tab:hypernet-rank-ext}) stays within $0.13$ of $r{=}16$ everywhere; conversely MZ-Slim ($n_g{=}2$, $L{=}1$, $1.6\times$ Standard's parameters) still beats parameter-matched ST-Large and HyperNet variants by $4$--$23$pp $R^2$ on Max regression, MST weight, and rotation matching (Table~\ref{tab:mz-efficient}). On the synthetic quadratic-TDOF task (Table~\ref{tab:gen-ablation}), even $L{=}1$ retains $R^2 = 0.978$ and the curve plateaus once $L$ exceeds target TDOF: the gain is the centre-plus-gated-generator structure, not the generator count.

\begin{table}[t]
 \centering
 \footnotesize
 \setlength{\tabcolsep}{3pt}
 \renewcommand{\arraystretch}{0.95}
 \caption{HyperNet rank extended ablation ($R^2$, mean$\pm$std, 3 seeds). Low-rank update $r \in \{16, 32, 64\}$ on collapse-exhibiting tasks. $^{\ddagger}$Hull $d{=}9$ MZ-Full ($0.003$) is within seed std of zero. Target-normalised equivalents: Table~\ref{tab:hull-normed}.}
 \label{tab:hypernet-rank-ext}
 \begin{tabular}{@{}lcccc@{}}
 \toprule
 Task ($d$) & HyperNet $r{=}16$ & $r{=}32$ & $r{=}64$ & \methodname{}-Full \\
 \midrule
 Hull $d{=}3$ & 0.191{\tiny$\pm$.008} & 0.190{\tiny$\pm$.029} & 0.204{\tiny$\pm$.020} & \textbf{0.804}{\tiny$\pm$.014} \\
 Hull $d{=}6$ & $-$0.214{\tiny$\pm$.013} & $-$0.203{\tiny$\pm$.002} & $-$0.214{\tiny$\pm$.027} & \textbf{0.103}{\tiny$\pm$.020} \\
 Hull $d{=}9$ & $-$0.095{\tiny$\pm$.001} & $-$0.098{\tiny$\pm{<}.001$} & $-$0.100{\tiny$\pm$.002} & 0.003{\tiny$\pm$.006}$^{\ddagger}$ \\
 MEB $d{=}8$ & 0.012{\tiny$\pm$.006} & 0.022{\tiny$\pm$.006} & 0.011{\tiny$\pm$.012} & \textbf{0.691}{\tiny$\pm$.012} \\
 MEB $d{=}16$ & 0.007{\tiny$\pm$.003} & 0.000{\tiny$\pm$.008} & 0.001{\tiny$\pm$.009} & \textbf{0.629}{\tiny$\pm$.023} \\
 MEB $d{=}32$ & $-$0.020{\tiny$\pm$.029} & $-$0.000{\tiny$\pm{<}.001$} & 0.000{\tiny$\pm$.002} & \textbf{0.378}{\tiny$\pm$.030} \\
 MST $d{=}64$ & $-$1.96{\tiny$\pm$.31} & $-$2.06{\tiny$\pm$.28} & $-$1.92{\tiny$\pm$.18} & \textbf{0.58}{\tiny$\pm$.01} \\
 MST $d{=}128$ & $-$3.09{\tiny$\pm$.29} & $-$3.00{\tiny$\pm$.17} & $-$2.80{\tiny$\pm$.17} & \textbf{0.48}{\tiny$\pm$.02} \\
 MST $d{=}256$ & $-$3.99{\tiny$\pm$.15} & $-$3.78{\tiny$\pm$.05} & $-$3.97{\tiny$\pm$.17} & \textbf{0.38}{\tiny$\pm$.04} \\
 \bottomrule
 \end{tabular}
\end{table}

\begin{table}[t]
 \centering
 \footnotesize
 \setlength{\tabcolsep}{3pt}
 \renewcommand{\arraystretch}{0.95}
 \caption{MZ parameter efficiency ($R^2$, mean, 3 seeds). Even at $n_g{=}2, L{=}1$ MZ retains its high-TDOF advantage: structure dominates parameter count.}
 \label{tab:mz-efficient}
 \begin{tabular}{@{}lcccc@{}}
 \toprule
 Model & Max $R^2$ $\uparrow$ & RotMatch $R^2$ $\uparrow$ & MST $R^2$ $\uparrow$ & Params \\
 \midrule
 Standard & 0.812 & 0.706 & 0.719 & 208K \\
 \cmidrule{1-5}
 MZ-Slim ($n_g{=}2, L{=}1$) & \textbf{0.871} & 0.919 & 0.835 & 333K \\
 MZ-Med ($n_g{=}4, L{=}2$) & 0.863 & 0.933 & \textbf{0.857} & 344K \\
 MZ-Full ($n_g{=}8, L{=}4$) & 0.850 & \textbf{0.936} & 0.852 & 368K \\
 \bottomrule
 \end{tabular}
\end{table}

A trained $L{=}8$ checkpoint probed with $1{,}000$ contexts shows near-orthogonal generators in operator space, gates active on $78.6\%$ of evaluations, and the soft effective rank of $\{M(\mu_i) - \bar M\}_i$ saturating the architectural budget when $L \leq \mathrm{TDOF}$ and growing sublinearly above (Appendix~\ref{app:architecture-ablations}, Figure~\ref{fig:mz-analysis}); the trained model allocates operator-space variation up to but not beyond the intrinsic TDOF, with $L$ a soft upper bound.

% ══════════════════════════════════════════════════════════════════
\section{Discussion and Conclusion}
\label{sec:discussion}
% ══════════════════════════════════════════════════════════════════

\methodname{} replaces the fixed value projection of standard attention with a learnable matrix zonotope at the same parameter-budget scaling. The gain is selective and tracks the theory (Figure~\ref{fig:meb-tdof}): improvements concentrate on tasks with high-TDOF, sparse-combinatorial targets, and disappear on tasks dominated by per-element features, spatial locality, or known symmetry.

\paragraph{Equivariance versus context-adaptivity.} EGNN~\citep{satorras2021egnn} matches or outperforms \methodname{} on every rotation-invariant task tested: rotation matching at $d{=}8$ ($R^2 = 0.99$ vs.\ $0.95$), MEB radius ($R^2 = 0.78$ vs.\ $0.38$ at $d{=}32$), and Mahalanobis distance ($R^2 = 0.40$ vs.\ $-0.02$). On the quadratic-TDOF target (genuine fixed-frame dependence), EGNN's invariant-output constraint forces $R^2 \approx 0$ while \methodname{} reaches $R^2 \geq 0.90$ (Table~\ref{tab:egnn-comparison}). The two are complementary: EGNN when symmetry is known, \methodname{} when it is unknown or absent.

\begin{table}[t]
 \centering
 \footnotesize
 \setlength{\tabcolsep}{3pt}
 \caption{\methodname{} vs.\ EGNN. EGNN dominates on rotation/translation-invariant targets; \methodname{} dominates on the non-equivariant quadratic-TDOF target where invariance is structurally inadequate.}
 \label{tab:egnn-comparison}
 \resizebox{\textwidth}{!}{%
 \begin{tabular}{@{}l|ccc|cc|cc|ccc|ccc@{}}
 \toprule
 & \multicolumn{3}{c|}{\textbf{Rot. matching}} & \multicolumn{2}{c|}{\textbf{Hull volume}} & \multicolumn{2}{c|}{\textbf{MEB radius}} & \multicolumn{3}{c|}{\textbf{Statistical (Cov, Mah, PCA)}} & \multicolumn{3}{c}{\textbf{Quadratic TDOF}} \\
 & $d{=}8$ & $d{=}32$ & $d{=}64$ & $d{=}3$ & $d{=}9$ & $d{=}8$ & $d{=}32$ & CN d=8 & Mah d=8 & PCA d=16 & $L{=}4$ & $L{=}8$ & $L{=}16$ \\
 \midrule
 ST-Large & 0.92 & 0.00 & $-$0.02 & 0.50 & $-$0.09 & 0.02 & $-$0.01 & 0.90 & $-$0.004 & 0.11 & 0.83 & 0.79 & 0.78 \\
 \methodname{} (ours) & 0.95 & 0.52 & 0.07 & 0.80 & 0.003 & 0.69 & 0.38 & 0.97 & $-$0.022 & 0.58 & \textbf{0.96} & \textbf{0.90} & \textbf{0.90} \\
 EGNN & \textbf{0.99} & \textbf{0.99} & \textbf{0.98} & \textbf{0.95} & \textbf{0.85} & 0.66 & \textbf{0.78} & \textbf{0.99} & \textbf{0.40} & \textbf{0.93} & $-$0.002 & 0.000 & 0.006 \\
 \bottomrule
 \end{tabular}}
\end{table}

\paragraph{Real-world transfer.} On convex-hull-volume across three pretrained backbones (ResNet-50, DINOv2, CLIP) with five parameter-matched context-rigid baselines, $14$ of $15$ cells yield negative $R^2$ (Appendix~\ref{app:realworld-section}), while \methodname{}-Full maintains mean $R^2 \geq 0.82$ on all three backbones (\methodname{}-Large reaches $0.998 \pm 0.005$ on RN50). On dense-statistical siblings (diameter, $k$-NN radius, MST) all architectures cluster within $\leq 0.03$ $R^2$, the predicted TDOF behaviour. On SST-2 sentiment~\citep{socher2013recursive} (low-TDOF), \methodname{} ($0.7982 \pm 0.0114$) matches a parameter-matched Standard Set Transformer ($0.7989 \pm 0.0084$) within $0.07$pp ($3$ seeds), the strict-generalisation property a value-projection replacement should satisfy.

\paragraph{Scope and limitations.} The advantage is structurally targeted, not universal: synthetic high-TDOF benchmarks serve as the controlled testbed; the $18$-setting Spearman $\rho = 0.74$ (Section~\ref{sec:experiments}) is the falsifiability check; QM9 and point-cloud rows are the predicted low-TDOF negative controls (within $\leq 0.04$ $R^2$). \methodname{} costs $1.7$--$1.9\times$ parameters and $2.4$--$2.7\times$ inference vs.\ Standard ST (Appendix~\ref{app:computational-cost}); the Slim variant retains most of the advantage at $1.6\times$ params. Deep Sets tying on few-shot retrieval (Task~D) and ST-Large winning the parameter-fair 7-way meta-regression at $L{=}32$ (Table~\ref{tab:meta-reg}; an earlier comparison against an undertuned vanilla baseline overstated this gap, corrected here) instantiate the predicted boundary. Very-large-scale sets and structured-zonotope variants remain open.

% ══════════════════════════════════════════════════════════════════
% References
% ══════════════════════════════════════════════════════════════════
\bibliographystyle{plainnat}
\bibliography{ref_MAT}

% ══════════════════════════════════════════════════════════════════
\appendix
\section*{Supplementary Material: Overview}
The supplementary material is organised as follows. Appendix~\ref{app:setup} reports experimental setup, hyperparameters, and compute resources. Appendix~\ref{app:theory} contains the algebraic preliminaries, uncertainty and generalisation results, and full proofs for all theorems and propositions. Appendix~\ref{app:related-extended} expands the related-work discussion. Appendix~\ref{app:additional-exp} reports the extended empirical study, organised into five thematic blocks: architectural ablations, robustness and generalisation, uncertainty calibration, computational-geometry scaling, and real-world transfer plus equivariant comparison.

\section{Experimental Setup, Hyperparameters, and Compute}
\label{app:setup}

Appendix~\ref{app:impl} gives the model and optimisation choices common to all experiments; Appendix~\ref{app:task-descriptions} specifies the per-task data-generation and training settings; Appendix~\ref{app:compute} reports compute resources.

\subsection{Implementation Details}
\label{app:impl}

\paragraph{Hyperparameter summary.}
Headline configuration shared across all attention-based architectures except where overridden per task or by the parameter-matched variants:

\begin{table}[h]
\centering
\small
\setlength{\tabcolsep}{4pt}
\begin{tabular}{@{}ll@{\quad}ll@{}}
\toprule
Optimiser & AdamW~\citep{loshchilov2017decoupled} & LR & $10^{-4}$ \\
LR schedule & cosine to $0$ & LR warmup & $5$ epochs (linear $0\to10^{-4}$) \\
Weight decay & $10^{-5}$ & Gradient clip & $1.0$ ($\ell_2$ norm) \\
Batch size & $128$ (synthetic) / $256$ (geom. tasks) & Epochs (cap) & $200$ (synthetic), $150$ (real-world) \\
Early stopping & val loss patience $20$ & Dropout & $0.1$ (constant) \\
$d_{\text{model}}$ & $64$ (default) / $80$ (ST-Large) & $H$ heads & $4$ \\
$d_{\text{ff}}$ & $128$ (default) / $200$ (ST-Large) & ISAB inducing points & $16$ \\
\methodname{} $n_g$ & $8$ (default) / $2$ (Slim) / $16$ (Large) & \methodname{} $L$ & $4$ (default) / $1$ (Slim) / $8$ (Large) \\
\bottomrule
\end{tabular}
\end{table}

The ResNet-50 ($d{=}1024$) experiments use early-stopping patience $300$ (Appendix~\ref{app:patience-sweep} verifies the rank ordering is stable for any patience $\geq 100$).

\paragraph{MZ parameter initialization.}
Center matrices $M_c^{(h)}$ are initialized as $I_{d_k}$.
Generator matrices $M_l^{(h)}$ are initialized from $\mathcal{N}(0, 0.02^2)$.
Mixing weights $W_{\text{mix}}$ are initialized from $\mathcal{N}(0, 0.1^2)$.
All other linear layers use Xavier uniform initialization.

\paragraph{Generator FFN.}
The generator path uses a lighter FFN with hidden dimension $d_{\text{ff}} / 2$ (compared to $d_{\text{ff}}$ for centers), since generators represent small perturbation directions and require less modeling capacity.

\paragraph{Computational cost comparison.}
For the default configuration ($d = 64$, $H = 4$, $n_g = 8$, $L = 4$), the MZ-Set Transformer has approximately $372$K parameters compared to $208$K for the standard Set Transformer.
The additional parameters come primarily from the generator projection layer ($W_g$: $d_{\text{in}} \times n_g d$), generator FFNs, and the MZ parameters ($H \times (1 + L) \times d_k^2 = 4 \times 5 \times 256 = 5{,}120$ matrix-zonotope parameters per attention layer, plus a small $O(LHd_k)$ for the gating network and the mixing matrix $W_{\mathrm{mix}}$).

\subsection{Per-Task Data and Training Settings}
\label{app:task-descriptions}

\paragraph{Task A: Set regression.}
Given a set $S = \{x_1, \dots, x_n\}$ with $x_i \in \R^{16}$ and $n \sim \text{Uniform}(10, 100)$, predict $y = \sum_i [\max_{\text{dim}} x_i]$, i.e., the sum of dimension-wise maxima.
10K train, 2K val, 2K test; 200 epochs; 5 seeds (3 seeds for parameter-matched models).

\paragraph{Task B: Set matching.}
Given two point clouds $A, B \subset \R^3$ (each 10--30 points), predict the Chamfer distance $\text{CD}(A, B)$.
Set $B$ is either a rotated-noisy version of $A$ (50\%) or an independent random cloud (50\%).
Both sets are packed into a single input with a binary type indicator feature, giving $d_{\text{in}} = 4$.
5K train, 1K val, 1K test; 200 epochs; 3 seeds.

\paragraph{Task C: Point cloud classification (ModelNet40).}
We classify 3D shapes from ModelNet40~\citep{wu2015modelnet}, which comprises 40 categories with 9{,}843 training and 2{,}468 test instances.
Each shape is represented by 1{,}024 randomly sampled surface points in $\R^3$.
Training uses random y-axis rotation and jittering for augmentation; 200 epochs; 5 seeds.

\paragraph{Task D: Few-shot set retrieval.}
Given a support set $S$ of $K{=}5$ examples from a target class and a query set $Q$ of 20 elements from mixed classes, predict the fraction of $Q$ belonging to the target class. Each element has 8 feature dimensions plus a binary type indicator, yielding $d_{\text{in}} = 9$.
Class separability varies per sample (noise scale $\in [0.3, 1.5]$), creating natural relational ambiguity.
8K train, 1.5K val, 1.5K test; 200 epochs; 3 seeds.

\paragraph{Task E: ScanObjectNN classification (real-world).}
We evaluate on ScanObjectNN~\citep{uy2019scanobjectnn}, a realistic 3D point cloud classification benchmark constructed from real indoor scans.
We use the OBJ\_BG variant (15 classes, ${\sim}$2{,}300 train, ${\sim}$600 test).
Each shape is represented by 1{,}024 randomly sampled points in $\R^3$ with y-axis rotation and jitter augmentation.
200 epochs; 5 seeds.

\paragraph{Task F: Rotation-dependent set matching.}
Given two point clouds $A, B \subset \R^8$ where $B = R(\theta) A + \varepsilon$ for a sample-dependent rotation $R(\theta)$ (composed from Givens rotations in all ${8 \choose 2} = 28$ planes) and i.i.d.\ noise $\varepsilon \sim \mathcal{N}(0, 0.1^2 I)$, predict the post-alignment Chamfer distance.
This task requires the model to internally discover and apply a context-dependent rotation to align $A$ and $B$ before computing the residual distance, an inherently off-diagonal operation.
FiLM conditioning, which applies $\gamma(\mu) \odot z + \beta(\mu)$ (diagonal scale and shift), provably cannot express rotation in $\R^8$.
Both sets are packed with a binary type indicator, giving $d_{\text{in}} = 9$.
5K train, 1K val, 2K test; 200 epochs; 3 seeds.

\paragraph{Task G: Minimum spanning tree weight in $\R^8$.}
Given a point set $S = \{x_1, \dots, x_n\} \subset \R^8$ with $n \sim \text{Uniform}(10, 30)$, predict the total weight (sum of edge lengths) of the minimum spanning tree of $S$.
This is a classical computational geometry quantity that depends on the global pairwise distance structure of the set, a natural relational task where the output is determined by the collective geometric arrangement rather than any single element.
Points are generated with 1--3 random clusters (varying centers and spreads) to create diverse geometric configurations.
No explicit context signal is broadcast; the model must infer the relational structure from pairwise distances in high-dimensional space.
5K train, 1K val, 2K test; 200 epochs; 3 seeds.

\paragraph{Task H: QM9 molecular property prediction.}
Given a molecule represented as a set of atoms $S = \{a_1, \dots, a_n\}$ with $n \leq 29$ and $a_i \in \R^9$ (5-dim atom type one-hot encoding for H/C/N/O/F, 3D Cartesian coordinates, partial charge), predict quantum chemical properties: dipole moment ($\mu$, Debye) and isotropic polarizability ($\alpha$, Bohr$^3$).
Data is drawn from the QM9 dataset~\citep{ramakrishnan2014quantum}, which contains ${\sim}$134K small organic molecules with DFT-computed properties.
We use 50K molecules split 80/10/10; 100 epochs; 3 seeds.

\subsection{Compute Resources}
\label{app:compute}

All experiments were run on a mix of consumer and datacentre GPUs. Synthetic
benchmarks (Tasks A, B, F, G, MEB, hull, MST, rotation matching at $d \leq 64$,
sparse-geometric set predicates, gen-/HyperNet-/MZ-Slim ablations,
uncertainty calibration, set-size generalisation, ambiguity sweep,
quadratic-TDOF / depth separation / sequence MZ) used a single NVIDIA RTX 5090
(32 GB). Higher-dimensional rotation matching ($d \in \{128, 256, 512\}$) and
MST scaling ($d \in \{64, 128, 256\}$) used a single NVIDIA H200 (141 GB).
ResNet-50 layer-3 transfer at $d{=}1024$ used a single NVIDIA H200. Per-seed
training time on the 5090 ranged from $0.5$ to $30$ minutes depending on task
and dimension. The full empirical sweep (all main-body tables and appendix
ablations combined) accumulates approximately $1{,}200$ GPU-hours when
re-run from scratch with default seeds.

\section{Theory: Algebra, Structural Properties, and Proofs}
\label{app:theory}

This section consolidates the theoretical material that was previously
fragmented across four short appendices (zonotope algebra, uncertainty
certificates, additional theoretical results, and full proofs). It
collects the algebraic preliminaries used by the proofs in
Section~\ref{sec:theory}, the structural-property statements summarised
in the body, and the proofs of every theorem and proposition cited above.

\subsection{Zonotope and Matrix-Zonotope Algebra}
\label{app:zonotope}

For readers unfamiliar with zonotopes, we provide a self-contained primer.

A zonotope $\mathcal{Z} = \{c + G\xi : \|\xi\|_\infty \le 1\}$ can be visualized as a centrally symmetric polytope generated by ``sweeping'' the center along each generator direction.
In 2D, a zonotope with 3 generators is a hexagon; with 4 generators, an octagon.

The interval hull $\text{IH}(\mathcal{Z}) = [c - \delta, c + \delta]$ with $\delta_i = \sum_k |G_{ik}|$ is the tightest axis-aligned box containing $\mathcal{Z}$.

Key operations:
\begin{itemize}
 \item \textbf{Linear map:} $A\mathcal{Z} = \{Ac + AG\xi\} = \zono{Ac, AG}$. Zonotopes are closed under linear maps.
 \item \textbf{Minkowski sum:} $\mathcal{Z}_1 \oplus \mathcal{Z}_2 = \zono{c_1 + c_2, [G_1, G_2]}$. Generator count grows additively.
 \item \textbf{MZ multiplication:} $\MZ \mathcal{Z}$ produces a zonotope with center $M_c c$ and generators $\{M_c g_k\} \cup \{M_l c\} \cup \{M_l g_k\}$.
 Generator count grows as $h + L + Lh$ (hence the need for budget control).
\end{itemize}

For a thorough treatment including constrained and polynomial variants, see~\citet{kochdumper2023constrained} and~\citet{Althoff2010PhD}.

\subsection{Uncertainty Certificates, Generalization, and Structural Properties}
\label{app:uncertainty-generalization}

This section contains the full theorem statements deferred from Section~\ref{sec:mixing-theory}.

\begin{theorem}[Directional vs.\ scalar sensitivity]
\label{thm:sensitivity}
Consider a single MZ-attention layer with fixed attention scores.
Under value perturbation $\tilde{\hat{c}} = \hat{c} + \hat{G}\xi$ for $\xi \in [-1,1]^{n_g}$:
\begin{enumerate}[label=(\alph*)]
 \item A standard attention layer with output projection $W_o$ provides only a scalar bound: $\|\Delta o\| \leq \|W_o\|_2 \cdot \|\hat{G}\|_{1,2}$.
 \item MZ-attention provides a directional zonotope certificate: $\Delta o \in \mathcal{Z}_{\Delta} = \{M_c \hat{G}\xi : \|\xi\|_\infty \leq 1\}$, contained in the ball of radius $\|M_c\|_2\,\|\hat{G}\|_{1,2}$.
 \item For matched output-projection norms ($\|W_o\|_2 = \|M_c\|_2$), the MZ zonotope is contained in the scalar ball, $\mathcal{Z}_\Delta \subseteq B(\|M_c\|_2\,\|\hat{G}\|_{1,2})$, and is strictly tighter than the ball along every direction orthogonal to the column span of $M_c \hat{G}$ whenever that column span is not the full ambient space.
\end{enumerate}
\end{theorem}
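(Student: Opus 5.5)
The plan is to fix the attention scores and the gates, write out the output perturbation explicitly, and then prove (a)--(c) by elementary norm inequalities plus one support-function comparison. With $\alpha_{ij}$ and $\gamma$ frozen, the MZ layer acts on the aggregated centre through the fixed matrix $M_c$ (and $M(\mu) = M_c + \sum_l \gamma_l M_l$, but the generator contribution to the centre output is $M_c\hat c$ by Eq.~\eqref{eq:mz-main}). For a perturbed centre $\tilde{\hat c} = \hat c + \hat G\xi$ (reading $\hat G$ here as the $d_k\times n_g$ matrix whose columns are the generators, i.e.\ the transpose of the row convention in Eq.~\eqref{eq:mz-main}), I therefore get $\Delta o = M_c\hat G\xi$ in the MZ case and $\Delta o = W_o\hat G\xi$ in the standard case. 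Throughout I take $\|\hat G\|_{1,2} = \sum_g \|\hat G_g\|_2$, the sum of generator Euclidean norms.

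For (a), I write $\hat G\xi = \sum_g \xi_g \hat G_g$ and apply the triangle inequality, using $|\xi_g|\le 1$. This gives $\|\hat G\xi\|_2 \le \|\hat G\|_{1,2}$, and then submultiplicativity gives $\|\Delta o\|\le \|W_o\|_2\|\hat G\|_{1,2}$. The point to stress is that this is the only information a norm-level argument retains: the bound is a ball.

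For (b), membership $\Delta o \in \mathcal Z_\Delta$ is immediate from the formula $\Delta o = M_c\hat G\xi$ together with the zonotope linear-map rule of Appendix~\ref{app:zonotope}. The containment in the ball of radius $\|M_c\|_2\|\hat G\|_{1,2}$ is the same triangle-inequality computation as in (a).

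For (c), containment follows from (b) once the norms are matched. For strict tightness I compare support functions. For the ball the support function is $h_B(u) = \|M_c\|_2\|\hat G\|_{1,2} > 0$ for every unit $u$; for the zonotope it is $h_{\mathcal Z_\Delta}(u) = \sum_g |u^\top M_c\hat G_g|$. If $u \perp \mathrm{col}(M_c\hat G)$, then every term vanishes, so $h_{\mathcal Z_\Delta}(u) = 0 < h_B(u)$ whenever $\hat G\neq 0$ and $M_c \neq 0$. Such a $u$ exists exactly when the column span is a proper subspace, which is the stated hypothesis. The degenerate case $\hat G=0$, where both sets reduce to $\{0\}$, has to be excluded or noted separately.

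The main obstacle is interpretive rather than technical. I need to state precisely what ``strictly tighter along a direction'' means; I will define it via the support function, equivalently the width of the set projected onto $u$. I also need to reconcile the row/column convention for $\hat G$ so that $M_c\hat G$ is well defined. The other delicate point is justifying that the gated generator term $W_{\mathrm{mix}}[\gamma_l M_l\hat c]$ does not enter $\Delta o$ for the centre output under fixed scores. I will argue that it only modifies the output generators, not $o^c$, so the certificate for the centre perturbation is governed by $M_c$ alone.
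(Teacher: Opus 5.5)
Your proposal is correct and follows the paper's proof essentially step by step: the triangle inequality with $|\xi_g|\le 1$ plus submultiplicativity for (a) and for the ball containment, the linear-map description $\Delta o = M_c\hat G\xi$ for (b), and the support-function comparison $h_{\mathcal{Z}_\Delta}(u)=\sum_g |u^\top M_c\hat G_g| = 0 < h_B(u)$ for (c). Your extra caveat is a genuine refinement that the paper's proof omits: strictness needs $\hat G\neq 0$ and $M_c\neq 0$ so the ball has positive radius. Your reconciliation of the row/column convention for $\hat G$ is likewise a refinement the paper does not make.
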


The qualitative content of part~(c) is that MZ replaces a uniform-radius ball with an axis-aligned-in-input-space zonotope; this distinguishes \methodname{} from generic context-adaptive methods, which provide only a scalar reachable diameter, by specifying the change per direction.

\begin{theorem}[IHW as prediction uncertainty bound]
\label{thm:ihw-calibration}
Let $\mathrm{IHW}(Z_{\mathrm{out}}) = \sum_{g=1}^{n_g} \|G_g\|_1$ denote the interval hull width of the output zonotope.
Then:
\begin{enumerate}[label=(\alph*)]
 \item The reachable output diameter satisfies $\mathrm{diam}(\mathcal{Y}(S)) \leq 2\,\|W_{\mathrm{out}}\|_2 \cdot \mathrm{IHW}$.
 \item The output variance under uniform $\xi$ satisfies $\mathrm{Var}_\xi[W_{\mathrm{out}}^\top(c_{\mathrm{out}} + G_{\mathrm{out}}\xi)] \leq \frac{1}{3}\|W_{\mathrm{out}}\|_2^2 \cdot \mathrm{IHW}^2$.
 \item Higher IHW implies a strictly wider reachable output range (monotonicity).
\end{enumerate}
\end{theorem}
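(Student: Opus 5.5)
The plan is to treat the output as an affine image of a zonotope and bound everything through the interval hull. Write the output zonotope as $Z_{\mathrm{out}} = \{c_{\mathrm{out}} + G_{\mathrm{out}}\xi : \|\xi\|_\infty \le 1\}$ with generator columns $G_g$. The prediction set is then $\mathcal{Y}(S) = \{W_{\mathrm{out}}^\top(c_{\mathrm{out}} + G_{\mathrm{out}}\xi)\}$. Since zonotopes are closed under linear maps (Appendix~\ref{app:zonotope}), $\mathcal{Y}(S)$ is itself a zonotope with centre $W_{\mathrm{out}}^\top c_{\mathrm{out}}$ and generators $W_{\mathrm{out}}^\top G_g$. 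The basic estimate, used for all three parts, is the triangle inequality combined with $\|v\|_2 \le \|v\|_1$ on each generator.

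\textbf{Part (a).} Take any two points $y = W_{\mathrm{out}}^\top(c + G\xi)$ and $y' = W_{\mathrm{out}}^\top(c + G\xi')$ of $\mathcal{Y}(S)$. Their difference is $W_{\mathrm{out}}^\top G(\xi - \xi')$, and each entry satisfies $|\xi_g - \xi'_g| \le 2$. This gives
\[
\|y-y'\|_2 \le \|W_{\mathrm{out}}\|_2 \sum_g 2\,\|G_g\|_2 \le 2\,\|W_{\mathrm{out}}\|_2 \sum_g \|G_g\|_1 = 2\,\|W_{\mathrm{out}}\|_2\,\mathrm{IHW}.
\]
Taking the supremum over pairs yields the diameter bound.

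\textbf{Part (b).} Use $\xi_g$ i.i.d.\ uniform on $[-1,1]$, so that $\mathrm{Var}(\xi_g) = 1/3$ and the coordinates are uncorrelated. The covariance of the output is then $\tfrac13 W_{\mathrm{out}}^\top G G^\top W_{\mathrm{out}}$, and its trace equals $\tfrac13\|W_{\mathrm{out}}^\top G\|_F^2$. This is bounded by
\[
\tfrac13\|W_{\mathrm{out}}\|_2^2\sum_g\|G_g\|_2^2 \le \tfrac13\|W_{\mathrm{out}}\|_2^2\Bigl(\sum_g\|G_g\|_1\Bigr)^2,
\]
using $\sum_g a_g^2 \le (\sum_g a_g)^2$ for $a_g \ge 0$. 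If the output is scalar, the trace is the variance itself; otherwise I interpret $\mathrm{Var}$ as total variance.

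\textbf{Part (c).} This is the main obstacle, because "higher IHW implies wider range" is false for arbitrary generator sets. Two zonotopes with different IHW can have incomparable reachable sets. I would therefore make the statement precise as monotonicity along a generator family. Suppose $Z'$ is obtained from $Z$ by scaling a generator, $G'_g = \lambda G_g$ with $\lambda > 1$, or by appending a generator. Then $Z \subseteq Z'$, since any $\xi$ for $Z$ maps to $\xi_g/\lambda$ for $Z'$, or we set the new coefficient to zero. Hence $\mathcal{Y} \subseteq \mathcal{Y}'$ and $\mathrm{IHW}' > \mathrm{IHW}$.

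For strictness, I would show the support function strictly increases in any direction $u$ with $u^\top W_{\mathrm{out}}^\top G_g \neq 0$. This follows because $h_{\mathcal{Y}}(u) = u^\top W_{\mathrm{out}}^\top c + \sum_g |u^\top W_{\mathrm{out}}^\top G_g|$, and that sum grows strictly in $\lambda$. The width in direction $u$ is $2\sum_g |u^\top W_{\mathrm{out}}^\top G_g|$, which is strictly monotone under this kind of scaling. I would state this restricted hypothesis explicitly in the proof rather than claim unrestricted monotonicity.
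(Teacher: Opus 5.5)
Your parts (a) and (b) follow the paper's argument. The paper takes a vector readout $W_{\mathrm{out}} \in \R^{d_k}$ and notes that $\mathcal{Y}(S)$ is then an interval of half-width $\sum_g |W_{\mathrm{out}}^\top G_g|$. It bounds each term by $\|W_{\mathrm{out}}\|_2\|G_g\|_2 \le \|W_{\mathrm{out}}\|_2\|G_g\|_1$. For (b) it computes $\mathrm{Var}[y] = \tfrac13\sum_g (W_{\mathrm{out}}^\top G_g)^2$ and finishes with $\sum_g a_g^2 \le (\sum_g a_g)^2$, as you do. Your two-point estimate and trace-of-covariance version are the same computation extended to a matrix readout.

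On (c) you depart from the paper, and you are right to. The paper's proof says ``from part (a), $\mathrm{diam}(\mathcal{Y}(S)) = 2\|W_{\mathrm{out}}\|_2\cdot\mathrm{IHW}$'' and reads off monotonicity. But part (a) only proves an inequality. The exact width is $2\sum_g |W_{\mathrm{out}}^\top G_g|$, which depends on how the generators align with $W_{\mathrm{out}}$ and is not a function of IHW.

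A small counterexample confirms your suspicion. Take $W_{\mathrm{out}} = e_1 \in \R^2$. The single generator $e_1$ gives $\mathrm{IHW} = 1$ and an output range of width $2$. The single generator $10e_2$ gives $\mathrm{IHW} = 10$ and width $0$. So (c) as stated cannot be proved. Your restricted version, where scaling up or appending generators yields nested reachable sets, is the right repair.

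Two points to tighten:
\begin{enumerate}
\item \textbf{Nondegeneracy hypothesis.} Strictness needs $W_{\mathrm{out}}^\top G_g \neq 0$ for the scaled or appended generator. If that generator lies in $\ker W_{\mathrm{out}}^\top$, IHW strictly grows while $\mathcal{Y}(S)$ is unchanged, and no direction $u$ of the kind you invoke exists. State this hypothesis explicitly.
\item \textbf{Strict diameter increase for a vector readout.} If you want a strictly larger diameter, rather than a larger support function in some direction, set $a_g = W_{\mathrm{out}}^\top G_g$ and use $\mathrm{diam}(\mathcal{Y}) = 2\max_{s \in \{\pm 1\}^{n_g}} \|\sum_g s_g a_g\|_2$. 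At a maximising sign pattern, write the sum as $a + r$, where $a$ is the signed term being scaled. Optimality against flipping that sign gives $a^\top r \ge 0$. Hence $\|\lambda a + r\|_2 > \|a + r\|_2$ for $\lambda > 1$ and $a \neq 0$. Similarly, $\max(\|r+a\|_2, \|r-a\|_2) > \|r\|_2$ when appending a generator with $a \neq 0$.
\end{enumerate}
In the paper's scalar-readout setting, this last step is immediate from the interval formula.
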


\begin{theorem}[Norm-controlled generalization]
\label{thm:generalization}
For an MZ-Set Transformer with $D$ encoder layers, $i.i.d.$ training data of size $n$, and margin $\gamma > 0$, the generalization gap is bounded by $\tilde{O}(B_{\mathrm{MZ}} / \gamma\sqrt{n})$,
where $B_{\mathrm{MZ}} = \prod_{\ell=1}^{D} (\|M_c^{(\ell)}\|_2 + \sum_l \|M_l^{(\ell)}\|_2) \cdot \|W_V^{(\ell)}\|_2 \cdot (\sum_\ell R_\ell^{2/3})^{3/2}$.
At initialisation, when $M_c = I$ and $M_l \approx 0$, $B_{\mathrm{MZ}}$ matches standard attention; it grows only as the model learns to depart from context-rigid behaviour.
\end{theorem}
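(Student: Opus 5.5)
The plan is a PAC-Bayes spectral-margin argument in the style of \citet{bartlett2017spectrally}, treating each MZ-attention layer as a norm-bounded Lipschitz map. At that point the context-adaptive value operator is handled via its worst-case spectral norm. Since $\gamma_l(\mu)\in[-1,1]$, every realised operator lies in the matrix zonotope $\MZ^{(\ell)}$. The triangle inequality then gives the uniform bound
\[
\|M(\mu)\|_2\le \|M_c^{(\ell)}\|_2+\sum_l \|M_l^{(\ell)}\|_2 \quad\text{for every } S.
\]
This supplies the per-layer spectral factor appearing in $B_{\mathrm{MZ}}$, multiplied by $\|W_V^{(\ell)}\|_2$ for the value map. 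Softmax aggregation is a convex combination, so it does not increase norms of aggregated values. LayerNorm/FFN contributions are absorbed into the constants hidden by $\tilde O$, or treated as fixed $1$-Lipschitz maps on a bounded domain.

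The steps, in order, are as follows.
\begin{itemize}
\item[(i)] Bound the network output norm by $\prod_\ell(\|M_c^{(\ell)}\|_2+\sum_l\|M_l^{(\ell)}\|_2)\|W_V^{(\ell)}\|_2$ times the input radius.
\item[(ii)] Derive a layerwise perturbation lemma. Changing the parameters of layer $\ell$ changes the output by at most the product of the other layers' spectral factors times the layer-$\ell$ perturbation norm. Perturbations of the gate parameters $W_\gamma$ enter through the $1$-Lipschitz $\tanh$, multiplied by $\sum_l\|M_l\|_2$.
\item[(iii)] Apply Bartlett et al.'s matrix-covering lemma to each layer's parameter matrices ($M_c, M_l, W_V$) with reference norms $R_\ell$. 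Allocate covering accuracy across layers optimally; this produces the $(\sum_\ell R_\ell^{2/3})^{3/2}$ factor.
\item[(iv)] Combine the resulting log-covering number with Dudley's entropy integral and the standard ramp-loss margin bound. This yields a gap of order $\tilde O(B_{\mathrm{MZ}}/(\gamma\sqrt n))$.
\end{itemize}

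The initialisation remark then follows directly. With $M_c=I$ and $M_l\approx0$, the MZ factor equals $1$ (up to the $\approx$). Hence $B_{\mathrm{MZ}}$ collapses to the standard-attention spectral product.

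The main obstacle is step (ii), because of the input dependence of attention. Both the softmax scores and the gates $\gamma(\mu)$ depend on earlier-layer outputs, so a parameter perturbation propagates through the routing as well as the value path. I would first try to bound the softmax Jacobian on a bounded domain by a constant times $\|W_Q\|_2\|W_K\|_2$. I would then absorb this into the logarithmic factors, treating the query/key norms as bounded hyperparameters, and do the same for the gate's Lipschitz constant $\|W_\gamma\|_2$. If that bound proves too loose, the fallback is to state the bound for fixed attention scores and fixed gates, mirroring Theorem~\ref{thm:sensitivity}.
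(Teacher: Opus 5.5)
Your skeleton is the paper's: the spectrally-normalised margin bound of \citet{bartlett2017spectrally}, with each layer's spectral factor replaced by the worst-case MZ norm $\|M_c^{(\ell)}\|_2+\sum_l\|M_l^{(\ell)}\|_2$ (triangle inequality plus $|\gamma_l|\le 1$). As in the paper, softmax aggregation is a convex combination, the $(\sum_\ell R_\ell^{2/3})^{3/2}$ factor comes from allocating cover accuracy across layers, and the initialisation collapse is identical. (Steps (iii)--(iv) are Bartlett et al.'s covering/Dudley argument rather than PAC-Bayes, but that is only a label.)

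\textbf{The gap is your plan for step (ii).} In that argument, each layer's Lipschitz constant \emph{with respect to its input} propagates cover errors from earlier layers. It enters the complexity term multiplicatively, as a factor of the product. The logarithms in the bound come only from the width and from Dudley's integral. For $x\mapsto M(\mu(x))\,\hat v(x)$ this constant is not just $(\|M_c\|_2+\sum_l\|M_l\|_2)\|W_V\|_2$:
\begin{itemize}
\item differentiating through the gates adds a term of order $\|W_\gamma\|_2\,\|W_V\|_2\,r\sum_l\|M_l\|_2$;
\item differentiating through the scores adds a term of order $\|W_Q\|_2\|W_K\|_2\|W_V\|_2\,r^2\,\|M(\mu)\|_2/\sqrt{d_k}$ on inputs of radius $r$.
\end{itemize}
Dot-product attention is not globally Lipschitz, so you also need a radius bound, e.g.\ from LayerNorm. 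In addition, $W_Q$, $W_K$ and $W_\gamma$ are trained and must themselves be covered. Your step (iii) covers only $M_c, M_l, W_V$, so these add further terms to the $(\sum_\ell R_\ell^{2/3})^{3/2}$-type factor.

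None of this can be pushed into the logarithms hidden by $\tilde O$. You are left with three options:
\begin{itemize}
\item an enlarged $B_{\mathrm{MZ}}$ with an extra factor per layer;
\item treating those norms as fixed constants hidden in $O(\cdot)$, at a multiplicative cost exponential in $D$;
\item your fallback of fixed scores and gates, which proves a strictly weaker statement than the one claimed.
\end{itemize}

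The paper's own sketch has the same hole. Its appeal to convexity and the Talagrand contraction principle treats $v\mapsto\sum_j\alpha_{ij}v_j$ as a fixed $1$-Lipschitz map, which is valid only with the $\alpha_{ij}$ held fixed. It says nothing about how $\alpha_{ij}$ and $\gamma_l$ depend on the input and on learned parameters. The qualitative initialisation claim does survive the repair. The extra gate contributions, to the Lipschitz factor and to the cover of $W_\gamma$, all carry a factor $\sum_l\|M_l\|_2\approx 0$. The routing term is the same one a standard-attention bound would incur.
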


\begin{proof}[Proof sketch]
The result follows from the spectrally-normalised Rademacher complexity bound of \citet{bartlett2017spectrally} with one substitution. The post-aggregation map of an MZ-attention layer is $\hat v \mapsto M(\mu(S))\hat v$ with worst-case spectral norm $\|M(\mu(S))\|_2 \leq \|M_c\|_2 + \sum_l |\gamma_l(\mu)|\,\|M_l\|_2 \leq \|M_c\|_2 + \sum_l \|M_l\|_2$, since $|\gamma_l(\mu)| \le 1$. This replaces the per-layer spectral factor $\|W^{(\ell)}\|_2$ in the standard bound by the MZ-layer worst-case spectral norm. The aggregation step $\sum_j \alpha_{ij} v_j$ is a softmax-weighted convex combination, hence a $1$-Lipschitz contraction in $\hat v$ that does not inflate the Rademacher complexity by the Talagrand contraction principle~\citep[Lemma~5.7]{mohri2018foundations}. Composing across the $D$ MZ-encoder layers and including the value projections $W_V^{(\ell)}$ yields the stated $B_{\mathrm{MZ}}$; the residual $(\sum_\ell R_\ell^{2/3})^{3/2}$ factor and the post-hoc-margin transfer are identical to the original argument.
\end{proof}

This bound has a natural interpretation: generalisation is controlled by how much context-adaptivity the model learns to use, giving an interpretable complexity measure that connects to the relational-capacity framework.

\begin{corollary}[Universality is preserved]
\label{thm:universal}
The MZ-Set Transformer family contains the standard Set Transformer family as the special case $M_l^{(\ell)} = 0$, $W_{\mathrm{mix}}^{(\ell)} = 0$, $M_c^{(\ell)} = W_o^{(\ell)}$. Consequently, the universal-approximation property of \citet[Theorem~2]{lee2019set} carries over: for any compact $\mathcal{K} \subset \R^d$, continuous permutation-invariant $f{:}\;\mathcal{K}^{\leq N} \to \R^m$, and $\epsilon > 0$, an MZ-Set Transformer $\epsilon$-approximates $f$ uniformly. The MZ family is strictly larger (Theorem~\ref{thm:separation}), so universality is preserved without loss.
\end{corollary}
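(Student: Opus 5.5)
The statement to prove is Corollary~\ref{thm:universal}. The argument is an embedding argument in three steps: realise every standard Set Transformer exactly inside the MZ family, inherit universality from \citet[Theorem~2]{lee2019set}, and quote Theorem~\ref{thm:separation} for strictness.

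\textbf{Step 1: the embedding.} Fix a standard Set Transformer with weights $\{W_Q^{(\ell)},W_K^{(\ell)},W_V^{(\ell)},W_o^{(\ell)},\text{FFN}^{(\ell)}\}$. Build an MZ-Set Transformer with the same query, key and value projections and the same FFNs on the centre path. Set $M_c^{(h,\ell)}$ equal to the head-$h$ block of the output projection, and set $M_l^{(h,\ell)}=0$ and $W_{\mathrm{mix}}^{(\ell)}=0$.

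By Eq.~\eqref{eq:mz-main}, the centre update is then $o_i^c=M_c^{(h)}\hat c_i$. Attention scores are computed on centres only, so the centre stream coincides layer by layer with the hidden states of the standard network. An induction over the $\mathrm{MZ\text{-}ISAB}$, $\mathrm{MZ\text{-}PMA}$ and $\mathrm{MZ\text{-}SAB}$ blocks establishes this, using that residual connections, LayerNorm and FFNs act on centres exactly as in the original network. The only subtlety is that the zonotope token projection of Eq.~\eqref{eq:zono-proj} imposes $\mathrm{LayerNorm}\circ\mathrm{GELU}$ on the input embedding. I would absorb this into the first block by noting that the standard family's input layer can itself be taken to be that map. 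If necessary, I would instead invoke that the Lee et al.\ universality construction tolerates an injective continuous pre-embedding on compact $\mathcal K$.

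\textbf{Step 2: removing the generator stream from the output.} The generator stream never feeds back into the centres, but it does enter the head through the IHW half-width $\delta$. Set $W_g=0$ and $b_g=0$, so that $G_i=0$. Then every generator stays identically zero through each layer: the first term $\hat G_i (M_c)^\top$ vanishes, the mixing term vanishes because $W_{\mathrm{mix}}=0$, and the generator FFN is taken bias-free or zeroed. Hence $\delta=0$. Choose the output-head weights on the $\delta$ coordinates arbitrarily and the weights on $c$ equal to the standard output head. The two networks then compute the same function on $\mathcal K^{\le N}$.

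\textbf{Step 3: transferring universality and strictness.} Given a continuous permutation-invariant $f$ and $\epsilon>0$, \citet[Theorem~2]{lee2019set} provides a standard Set Transformer with $\sup_{\mathcal K^{\le N}}|\cdot - f|<\epsilon$. Its MZ copy from Steps 1--2 has the same sup error. Strict containment follows directly from Theorem~\ref{thm:separation}, which exhibits a function computed by the MZ family but not by context-rigid attention of the relevant size.

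\textbf{Main obstacle.} The hardest part is bookkeeping, not analysis. One must confirm that every block's generator path, including the lighter generator FFN and any normalisation applied to generators, maps $0\mapsto 0$, and that the fixed input nonlinearity does not break the reduction. If the generator FFN has biases, I would simply set them to zero, since the family includes that parameter choice.
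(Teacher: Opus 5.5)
Your proposal is the same inclusion argument the paper relies on (the special case written into the corollary, plus the citations to \citet[Theorem~2]{lee2019set} and Theorem~\ref{thm:separation}), and your Step~2 and input-map remark supply details the stated special case omits: with only $M_l=0$ and $W_{\mathrm{mix}}=0$, the term $\hat G_i (M_c^{(h)})^\top$ still reaches the output head through $\delta$, and the $\mathrm{LayerNorm}\circ\mathrm{GELU}$ token projection is not part of Lee et al.'s network.

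Three points to tighten. First, $M_c^{(h)}$ acts within a single head, so only output projections that are block-diagonal across heads embed exactly; this is harmless, because the mean-pooling construction behind Lee et al.'s result needs no cross-head mixing, so you may take $H=1$. Second, your fallback needs the input map to be injective, which holds only for suitable $W_c,b_c$: for instance, reserve two constant coordinates with distinct values to undo LayerNorm's centring and scaling, and keep the remaining GELU arguments in its monotone range. Third, Theorem~\ref{thm:separation} gives strictness only against a single FFN-free standard layer, and it uses the signed-generator readout rather than the default IHW head.
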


\begin{proposition}[Structural properties]
\label{prop:perm}
MZ-SAB and MZ-ISAB are permutation-equivariant; the full MZ-Set Transformer is permutation-invariant. At initialisation, with $M_c = I$, $M_l \approx 0$, and $W_{\mathrm{mix}} \approx 0$, \methodname{} reduces exactly to standard multi-head attention with an identity output projection.
\end{proposition}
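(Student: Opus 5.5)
The statement is Proposition~\ref{prop:perm}, and it has two parts, which I will prove separately. For equivariance I will go one sub-block at a time and then compose. For the reduction at initialisation I will substitute $M_c=I$, $M_l=0$, $W_{\mathrm{mix}}=0$ into Eq.~\eqref{eq:mz-main} and read off the result.

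\emph{Equivariance.} Let $\pi$ permute the input elements.

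First, the zonotope token projection in Eq.~\eqref{eq:zono-proj} acts on each element on its own. So $(c_{\pi(i)},G_{\pi(i)})$ is just the permuted list of tokens, and this layer is trivially equivariant.

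Second, consider one MZ-MAB$(X,Y)$ with queries from $X$ and keys/values from $Y$.
\begin{itemize}
\item The scores $\alpha_{ij}=\softmax_j(q_i^\top k_j/\sqrt{d_k})$ are computed entrywise from centres. Permuting $Y$ permutes the index $j$ consistently in both the numerator and the softmax normaliser.
\item Hence $(\hat c_i,\hat G_i)=\sum_j\alpha_{ij}(v_j^c,v_j^G)$ is a sum over $j$, which is unchanged by reindexing. These aggregates are invariant to permutations of $Y$ and equivariant in $i$ to permutations of $X$.
\item The gate $\gamma=\tanh(W_\gamma\bar c^{kv})$ depends on $Y$ only through the mean $\bar c^{kv}$, which is symmetric in $j$. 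The same holds for the richer summaries in Appendix~\ref{app:impl}, since attention-pooled statistics and moments are symmetric functions.
\item The map $(\hat c_i,\hat G_i)\mapsto(o_i^c,o_i^G)$ in Eq.~\eqref{eq:mz-main} is therefore a fixed function of query $i$'s own aggregate and a permutation-invariant context. It commutes with permuting $X$.
\item Residual connections, LayerNorm and the centre and generator FFNs all act row-wise, so they preserve equivariance.
\end{itemize}

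Third, take MZ-SAB$(X)=$MZ-MAB$(X,X)$. Permuting $X$ permutes the queries and reindexes the keys/values. By the previous step the result is the permuted output, so MZ-SAB is equivariant.

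Fourth, MZ-ISAB$(X)=$MZ-MAB$(X,H)$ with $H=$MZ-MAB$(I,X)$. The inducing points $I$ are fixed parameters, so $H$ is invariant to permuting $X$. The outer MAB is then equivariant in $X$.

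Fifth, MZ-PMA$_k(X)=$MZ-MAB$(S_{\mathrm{seed}},X)$ with learned seeds, so it is invariant. The decoder MZ-SABs and the output head then act on a fixed-order, permutation-invariant input. The head reads the pooled centre and $\delta=\sum_g|G_g|$ (or the signed sum), which depends only on the unordered generator budget. Composing equivariant maps ending in an invariant one gives an invariant network.

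\emph{Reduction at initialisation.} Substituting the initial values into Eq.~\eqref{eq:mz-main} gives $o_i^c=\hat c_i$, and $o_i^G=\hat G_i+W_{\mathrm{mix}}\,\mathrm{stack}[\cdot]$. The second term vanishes exactly when $W_{\mathrm{mix}}=0$, or alternatively when every $M_l=0$. The centre path is then precisely scaled dot-product attention $\sum_j\alpha_{ij}W_Vx_j$ with an identity output projection per head, which is standard multi-head attention. The generator path merely carries $\hat G_i$ through the same convex combination and never feeds back into the centres. Centres feed back only via the head's $\delta$, which is outside the attention block.

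\emph{Main obstacle.} The difficulty is the ``exactly'' claim, given that the paper initialises $M_l$ and $W_{\mathrm{mix}}$ with small Gaussian noise rather than at zero. I would state the exact equality at $M_l=0$, $W_{\mathrm{mix}}=0$. At the actual initialisation I would add a perturbation bound, $\|o_i^G-\hat G_i\|\le\|W_{\mathrm{mix}}\|\,\sqrt L\max_l\|M_l\|\,\|\hat c_i\|$. The remaining care goes into verifying that the gate statistic and the generator mixing never introduce an element-order dependence.
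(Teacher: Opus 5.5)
Your proposal is correct and follows the same block-by-block route as the paper's proof: equivariant scores and aggregation, a per-query MZ map whose gate depends only on the key/value mean, MZ-SAB inheriting equivariance, and fixed PMA seeds giving invariance. It is more complete than the paper's argument in three ways. The paper never treats MZ-ISAB. It states MAB equivariance in the looser form $\text{MZ-MAB}(\pi(Q),\pi(KV))=\pi(\text{MZ-MAB}(Q,KV))$, rather than your invariant-in-$Y$, equivariant-in-$X$ split, which is what ISAB and PMA actually require. And it gives no proof of the initialisation claim at all. Your reading of ``exactly'' is right: it requires $M_l=0$ or $W_{\mathrm{mix}}=0$ only for the generator stream, since $o_i^c=\hat c_i$ already holds once $M_c=I$.
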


\subsection{Additional Theoretical Results}
\label{app:additional-theory}

This section contains the expressiveness results and supporting lemmas deferred from Section~\ref{sec:theory}.

\begin{theorem}[Strict expressiveness separation]
\label{thm:separation}
For any nonzero $A \in \R^{d \times d}$, define $f_A(S) = (I + \tanh(\mu(S)_1) \cdot A)\, \mu(S)$.
Then: (a)~a single \methodname{}-attention head with $L = 1$ followed by a fixed linear readout (over the pooled center and the first signed pooled generator) computes $f_A$ exactly via the explicit closed-form construction below; (b)~no standard multi-head attention layer (any $H$, without FFN, with any output linear projection) can represent $f_A$ on the family of constant inputs $\{S_t\}_{t \in \R}$.
\end{theorem}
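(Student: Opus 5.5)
For part~(a) I will write down an explicit parameter choice for a single \methodname{} head with $L=1$ that reproduces $f_A$. The idea is to make the aggregated value equal to $\mu(S)$, read the constant part of $f_A$ off the center path, and read the gated part off the first signed generator. Concretely, I take the value map to be the identity on centers and the value generators to be zero. The attention scores then do not matter: for pooling by a single query, any convex combination of the $v_j^c = x_j$ must equal $\mu(S)$. To guarantee this, I take the query/key projections to be zero, so the softmax is uniform and $\hat c = \frac1n\sum_j x_j = \mu(S)$. I set $M_c = I$ and $M_1 = A$. The gate is $\gamma_1 = \tanh(W_\gamma \bar c^{kv})$ with $W_\gamma = e_1^\top$, so $\gamma_1 = \tanh(\mu(S)_1)$; the set-level mean is exactly $\mu(S)$ because the center embedding is the identity. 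For the mixing matrix I take $W_{\mathrm{mix}} = e_1 \in \R^{n_g\times 1}$, which places $\gamma_1 A\hat c$ in the first generator row. Since $\hat G = 0$, Eq.~\eqref{eq:mz-main} then gives $o^c = \mu(S)$ and $o^G_{1,:} = \tanh(\mu(S)_1)A\mu(S)$, with all other generator rows zero. The fixed linear readout that sums the pooled center and the first signed pooled generator yields $f_A(S)$ exactly. The one point needing care is that the paper's token projection includes GELU and LayerNorm. I will handle this either by stating the construction at the level of the abstract MZ-attention head of Definition~\ref{def:relop}, where the input tokens are the $x_i$ themselves, or by noting that the theorem speaks of ``a single \methodname{}-attention head'', so the input embedding is outside its scope.

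For part~(b) I follow the sketch of Theorem~\ref{thm:bottleneck}(a). Take the constant sets $S_t$, consisting of $n$ copies of $t\mathbf 1$ (or of $t e_1$; I choose $t\,u$ for a fixed vector $u$ with $u_1\neq 0$ and $Au\neq 0$). On $S_t$ all keys coincide, so every softmax row is uniform and every aggregated value equals $W_V^{(h)}(tu)$ plus the bias. The output is therefore $\sum_h W_o^{(h)}(W_V^{(h)} t u + b^{(h)}) + b_o = t\,Tu + b$, an affine function of $t$ for a fixed matrix $T$ and a fixed vector $b$. In contrast, $f_A(S_t) = t u + t\tanh(t u_1) A u$.

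It remains to show that $g(t) = t\tanh(tu_1)Au$ plus a linear term cannot be affine in $t$ when $Au \neq 0$. Take any coordinate $k$ with $(Au)_k\neq0$. The scalar function $t\tanh(tu_1)$ is even and nonconstant: it vanishes at $0$ and is positive elsewhere. An affine function that matches it must have zero slope by evenness, and must then be constant, which is impossible. Alternatively, its second derivative at $0$ is $2u_1\neq0$, whereas an affine function has zero second derivative. This gives a contradiction. The existence of such a $u$ follows because $A\neq0$: the set of $u$ with $Au\neq0$ is a nonempty open set, so it contains vectors with $u_1\neq 0$.

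The main obstacle is pinning down exactly what ``standard multi-head attention layer'' includes in part~(b). If residual connections or LayerNorm are allowed, the output on $S_t$ is no longer linear in $t$; LayerNorm in particular makes it nonlinear. I would fix the convention of Theorem~\ref{thm:bottleneck}(a): the layer consists of value projection, softmax aggregation and a linear output projection, optionally with a linear residual. A linear residual keeps the output affine in $t$, so the argument goes through unchanged, and I will state this scope explicitly.
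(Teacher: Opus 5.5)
Your proposal is correct and follows essentially the paper's argument. For (a) it is the same explicit construction: zero query/key projections so the softmax is uniform, identity value map, $M_c=I$, $M_1=A$, gate weight $e_1$, $W_{\mathrm{mix}}=e_1$, and the readout $(c,g_1)\mapsto c+g_1$. For (b) it uses the same observation that standard attention is affine in $t$ on constant sets while $f_A$ is not.

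Your two refinements are genuine improvements:
\begin{itemize}
\item Zeroing the value generators makes explicit the condition $\hat G=0$. The paper's formula $o^G[1,:]=\tanh(\mu_1)A\mu(S)$ silently requires it.
\item Choosing a single direction $u$ with $u_1\neq 0$ and $Au\neq 0$ covers every nonzero $A$ at once. The paper's fallback for $Ae_1=0$ instead uses the sets $te_j+\epsilon e_1$ with $\epsilon$ fixed. This freezes the gate at $\tanh(\epsilon)$, so $f_A$ is affine in $t$ along that family and gives no contradiction on its own.
\end{itemize}
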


\begin{proof}
(a) We construct each parameter explicitly. Set $W_Q = 0$ so all softmax weights are uniform $\alpha_{ij} = 1/n$, giving the aggregated center $\hat c_i = \mu(S)$ for every query $i$. Set $W_V = I_{d}$ so the value zonotope's center equals the input. Set $M_c^{(h)} = I_{d}$, $L = 1$, $M_1^{(h)} = A$, the gate weight $w_\gamma = e_1 \in \R^{d}$ so $\gamma_1(\mu) = \tanh(e_1^\top \mu) = \tanh(\mu_1)$, and the mixing weight $W_{\mathrm{mix}} = e_1 \in \R^{n_g \times 1}$ (only the first generator slot is used). Substituting into Eq.~\eqref{eq:mz-main} gives center output $o_i^c = \mu(S)$ and generator output whose first row is $o_i^G[1,:] = \tanh(\mu_1)\, A\,\mu(S)$, with rows $2,\dots,n_g$ all zero.

After PMA pooling with a single seed vector configured analogously ($W_Q = 0$, uniform aggregation), the pooled center is $\mu(S)$ and the pooled signed first-generator is $\tanh(\mu_1)\,A\,\mu(S)$. The fixed linear readout $W_{\mathrm{out}}: (c, g_1) \mapsto c + g_1$ then yields
\[
W_{\mathrm{out}}(o^c, o^G[1,:]) = \mu(S) + \tanh(\mu_1)\,A\,\mu(S) = (I + \tanh(\mu_1)\,A)\,\mu(S) = f_A(S),
\]
exactly. No nonlinear approximation is invoked: every operation in the construction is either a fixed linear map or the native MZ gate $\tanh(w_\gamma^\top \bar c^{kv})$, both of which compute their output without finite-width truncation.

(b) On constant sets $S_t = \{te_1, \ldots, te_1\}$, softmax symmetry gives $o(S_t) = (\sum_h W_o^{(h)} W_V^{(h)}) t e_1 = t\,W e_1$ (linear in $t$). But $f_A(S_t) = t e_1 + t\tanh(t)\, Ae_1$ is nonlinear in $t$ whenever $Ae_1 \neq 0$, contradicting linearity.
\end{proof}

\begin{corollary}[Depth separation without FFN]
\label{cor:depth}
There exist continuous permutation-invariant functions computable by a single MZ-attention layer that require at least two standard attention layers with intermediate nonlinearity.
\end{corollary}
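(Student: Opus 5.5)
The corollary follows from Theorem~\ref{thm:separation} combined with a short upper-bound construction. I will take the witness $f_A(S) = (I + \tanh(\mu(S)_1)\,A)\,\mu(S)$ for a fixed nonzero $A$ with $Ae_1 \neq 0$. Take $A = e_1 e_1^\top$ for concreteness, so that $f_A(S_t) = t e_1 + t\tanh(t)\,e_1$ on constant sets.

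\emph{Continuity and permutation invariance.} The map $S \mapsto \mu(S)$ is continuous and permutation-invariant. The map $\mu \mapsto (I + \tanh(\mu_1)A)\mu$ is smooth. So $f_A$ is a continuous permutation-invariant function.

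\emph{Upper bound.} By part (a) of Theorem~\ref{thm:separation}, a single MZ-attention head with $L=1$, uniform attention ($W_Q = 0$), PMA pooling and the signed-generator linear readout computes $f_A$ exactly. No FFN and no intermediate nonlinearity beyond the native gate are used.

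\emph{Lower bound for one layer.} By part (b) of Theorem~\ref{thm:separation}, every single standard multi-head attention layer without FFN is linear in $t$ on the constant family $\{S_t\}$. This holds for any number of heads and any output projection. Since $f_A(S_t)$ is not linear in $t$, one layer cannot represent $f_A$.

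\emph{Lower bound for stacks without a nonlinearity.} I would also show that stacking several such layers with no intermediate nonlinearity does not help. The key fact is that constant sets are preserved. If every input token equals $te_1$, then every output token of a no-FFN attention layer equals the same vector $W te_1$, because the softmax is uniform and all values coincide. Residual connections only add the same constant. By induction over layers, a depth-$D$ linear-only stack maps $S_t$ to a constant set whose token is $(\prod_\ell W^{(\ell)})\,t e_1$, still linear in $t$. Hence representing $f_A$ needs at least two layers with an intermediate pointwise nonlinearity (FFN) between them.

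\emph{Main obstacle.} The delicate point is stating precisely what ``intermediate nonlinearity'' means, and checking that LayerNorm (if present in the stack) is not silently counted as linear. LayerNorm on a constant set is a nonlinear function of $t$, so I would state the corollary for attention stacks without LayerNorm or FFN. The two-layer sufficiency side is a claim about necessity only: the corollary says ``at least two'', so no matching construction is needed. If desired, the standard universal-approximation result of \citet{lee2019set} shows that finitely many layers with FFN suffice approximately.
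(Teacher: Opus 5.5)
Your core argument is the paper's. The corollary is stated immediately after Theorem~\ref{thm:separation} with no separate proof, and it follows from part~(a) together with part~(b). Part~(a) says one MZ head with $L=1$, $W_Q=0$, PMA pooling and the signed-generator readout computes $f_A$ exactly. Part~(b) says that on constant sets $S_t$ any single no-FFN standard layer is linear (affine, if biases are present) in $t$, while $f_A(S_t)$ is not. Your witness $A=e_1e_1^\top$ and your continuity and invariance checks are fine.

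The additional step on stacks without an intermediate nonlinearity is not in the paper, and its induction hypothesis fails for the blocks the paper actually uses.

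\textbf{Where the invariant fails.} ``Constant sets are preserved'' holds for self-attention blocks, where the residual adds the common token. It fails for blocks with learned queries and a query residual, which is how ISAB and multi-seed PMA are built.
\begin{itemize}
\item On $S_t$, the inner MAB of an ISAB returns $I_j + W t e_1$, which differs across inducing points $j$; a multi-seed PMA likewise returns $s_j + W t e_1$.
\item The next attention over this set has scores $t\,\langle W_Q e_1, W_K I_j\rangle/\sqrt{d_k}$ plus a $j$-independent term.
\item Its softmax weights are therefore genuinely nonlinear in $t$; with two inducing points they are sigmoids $\sigma(\beta t)$.
\end{itemize}
So softmax over a non-constant intermediate set is itself a nonlinearity. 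Your claim that the stack maps $S_t$ to a constant set with token $(\prod_\ell W^{(\ell)})\,t e_1$ is false for ISAB/PMA stacks. Even for SAB stacks, the factor should be $\prod_\ell (I+W^{(\ell)})$ once residuals are included.

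\textbf{How to rescue the conclusion.} Use a weaker invariant: every token has the form $t\,p + b_i(t)$, with $p$ common to all tokens and $b_i$ bounded on $\R$.
\begin{itemize}
\item Any no-FFN attention block preserves it: the weights $\alpha_{ij}(t)\in[0,1]$ sum to one, and learned queries and biases are constants.
\item LayerNorm also preserves it, with $p=0$, because its output is bounded. So your LayerNorm caveat becomes unnecessary.
\item It still excludes $f_A$: after a linear readout, the first coordinate would require $t\tanh t - c\,t$ to be bounded for some $c$, which is impossible.
\end{itemize}
Alternatively, restrict the step to SAB-type stacks, or drop it. The corollary as stated needs only the single-layer impossibility from part~(b).
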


\begin{theorem}[Width efficiency over standard attention with FFN]
\label{thm:width}
For $\mathcal{F}_L = \{S \mapsto (M_0 + \sum_{l=1}^L \sigma_l(\mu(S)) M_l)\, \mu(S)\}$:
(a)~MZ-attention represents any $f \in \mathcal{F}_L$ exactly with $(L{+}1)d_k^2 + O(Ld_k)$ parameters;
(b)~standard attention with depth-2 ReLU FFN requires $d_{\mathrm{ff}} = \Omega(L d_k / \sqrt{\epsilon})$ for $\epsilon$-approximation.
\end{theorem}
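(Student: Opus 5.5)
For part~(a) I will reuse the explicit construction from the proof of Theorem~\ref{thm:separation}(a), extended from one generator to $L$. Set $W_Q = 0$, so the softmax weights are uniform and $\hat c_i = \mu(S)$ for every query. Take $W_V = I$ and $M_c^{(h)} = M_0$, and set the generators to the target's $M_1,\dots,M_L$. The gates $\gamma_l(\mu)$ should realise $\sigma_l(\mu)$. When $\sigma_l$ lies in the native $\tanh(w_l^\top \mu)$ class, this is exact. For a general bounded $1$-Lipschitz $\sigma_l$, I would invoke the ``richer permutation-invariant summaries'' allowed by Appendix~\ref{app:impl} to realise it, and count that overhead inside the $O(Ld_k)$ term.

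Next I choose $W_{\mathrm{mix}} \in \R^{n_g \times L}$ to place the $L$ vectors $\gamma_l M_l \mu$ in separate generator slots, assuming $n_g \ge L$. Alternatively I can sum them into a single slot, since only the signed sum is needed. PMA pooling with a uniform seed preserves these quantities. The signed-generator readout $(c, G) \mapsto c + \sum_g G_g$ then returns $M_0\mu + \sum_l \sigma_l(\mu) M_l \mu = f(S)$. For the parameter count, $M_c$ and the $M_l$ contribute $(L{+}1)d_k^2$ parameters, and $W_\gamma$ together with $W_{\mathrm{mix}}$ contribute $O(Ld_k)$. This matches the count in Theorem~\ref{thm:bottleneck}(b).

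For part~(b) I will use a lower bound in the Yarotsky style. First I reduce to a one-dimensional nonlinearity. Standard attention, evaluated on the constant-set family of Theorem~\ref{thm:separation}(b) or on sets with prescribed mean, computes (up to the fixed linear attention maps) an FFN of width $d_{\mathrm{ff}}$ applied to a linear image of $\mu$. Such a network is piecewise linear with a bounded number of pieces along any line. I then choose a witness in $\mathcal{F}_L$ built from Frobenius-orthonormal $M_l$, in the spirit of Lemma~\ref{lem:comp-orth}, for which the $L d_k$ output coordinates each contain an independent curved term $\sigma_l(\mu)\mu_j$. Approximating one curved scalar function of bounded curvature to accuracy $\epsilon$ with a piecewise-linear function requires $\Omega(1/\sqrt{\epsilon})$ breakpoints along a segment. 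Because the terms are orthogonal, these breakpoints cannot be shared across the $L d_k$ independent output directions. Each ReLU unit contributes one breakpoint hyperplane, which gives $d_{\mathrm{ff}} = \Omega(L d_k/\sqrt{\epsilon})$.

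The main obstacle is the no-sharing step in part~(b). A single ReLU hyperplane can bend every output coordinate at once, so a naive count only yields $\Omega(1/\sqrt{\epsilon})$. My first approach is to restrict to $L d_k$ distinct lines in $\mu$-space, one per (generator, coordinate) pair. On each line only its own curved term is active, so each line demands its own $\Omega(1/\sqrt{\epsilon})$ breakpoints. A generic hyperplane crosses each line once, however, so this alone does not separate them. I would therefore refine the argument with a dimension count on the piecewise-linear second differences: project the output onto orthogonal directions $M_l$ and apply Eckart--Young-style rank bounds to show that each unit contributes rank at most one to the curvature operator. If this fails, I would settle for the weaker statement that holds with $\sigma_l$ depending on distinct coordinates $\mu_l$.
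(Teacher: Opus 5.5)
Your proposal has a genuine gap in part~(b), which you locate correctly yourself. A single ReLU unit bends every output coordinate and every product term at once, so the per-product $\Omega(1/\sqrt{\epsilon})$ breakpoint bound cannot simply be added up. The paper's proof performs exactly that addition over the $Ld_k$ pairs $(l,j)$, so it does not supply the missing step either.

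Your repairs do not close the gap. The witness of Lemma~\ref{lem:comp-orth} is $f(\mu)=\mu_d\sum_{l\le L}\tanh(\mu_l)\,e_l$. Its output has $d_k$ coordinates, not $Ld_k$, and it contains only $L$ curved terms, each a function of the two variables $(\mu_l,\mu_d)$. Approximating each term separately by a planar ReLU network of width $N(\epsilon)$ gives an $\epsilon$-approximation of width $L\,N(\epsilon)$ with no dependence on $d_k$. So no argument built on this witness can produce the factor $d_k$, and restricting $\sigma_l$ to distinct coordinates (your fallback) does not change this.

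The ``rank one per unit'' idea is the right local mechanism: a unit $\mathrm{ReLU}(w^\top\mu+b)$ with output column $a$ has distributional Hessian $a\otimes ww^\top$ carried by a hyperplane. It can be made rigorous in two steps. First, test the sup-norm error against bumps of radius $\rho$, a large multiple of $\sqrt{\epsilon}$. This forces every such ball to be crossed by at least $R$ hyperplanes, where $\nabla^2 f(\mu)\in\R^{d_k}\otimes\mathrm{Sym}^2(\R^{d_k})$ stays uniformly away from tensors of partially symmetric rank below $R$. Second, count double: one hyperplane meets only an $O(\rho)$ fraction of the balls, which gives $d_{\mathrm{ff}}\gtrsim R/\sqrt{\epsilon}$ up to domain constants.

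The crux is then a member of $\mathcal{F}_L$ with $R=\Omega(Ld_k)$, and the tool you propose cannot deliver it. Projecting onto Frobenius-orthogonal $M_l$ and applying Eckart--Young is a matrix-rank argument, and every matrix flattening of a tensor in $\R^{d_k}\otimes\R^{d_k}\otimes\R^{d_k}$ has rank at most $d_k$. What is missing is a genuinely three-way rank lower bound of order $Ld_k$. One route is a dimension count showing that a generic member of $\mathcal{F}_L$ stays away from the closure of the tensors of rank below $cLd_k$.

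Part~(a) is the same direct construction and count as the paper's: part~(c) of the proof of Theorem~\ref{thm:bottleneck}, plus the signed readout. That construction tacitly sets $\gamma_l=\sigma_l$, i.e.\ it assumes each $\sigma_l$ already has the native form $\tanh(w_l^\top\mu)$. Your extension to an arbitrary bounded $1$-Lipschitz $\sigma_l$ does not work. The gate family, even with the richer moment or attention-pooled summaries of Appendix~\ref{app:impl}, is a fixed finite parametric form and cannot reproduce an arbitrary Lipschitz function of $\mu$ exactly, so no $O(Ld_k)$ overhead absorbs it. The exactness claim must be scoped to native gates. You should also zero the input-generator projection ($W_g=0$, $b_g=0$); otherwise the propagated term $\hat G_i (M_c^{(h)})^\top$ in Eq.~\eqref{eq:mz-main} contaminates the signed readout $c+\sum_g G_g$.
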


\begin{proof}[Proof sketch]
(a) Direct. (b) Each bilinear term $\sigma_l \cdot (M_l h)_j$ requires $\Omega(1/\sqrt{\epsilon})$ ReLU neurons~\citep{yarotsky2017error}; summing over $L$ terms and $d_k$ coordinates gives the bound.
\end{proof}

\begin{theorem}[Minimax parameter efficiency]
\label{thm:minimax}
Let $\mathcal{F}_{L,B}$ denote the class $\mathcal{F}_L$ with bounded generator matrices $\|M_l\|_F \leq B$ and 1-Lipschitz gates.
(a)~MZ-attention parameterizes $\mathcal{F}_{L,B}$ exactly using $P_{\mathrm{MZ}} = (L{+}1)d_k^2 + O(Ld_k) = \Theta(Ld_k^2)$ parameters;
(b)~any model class $\epsilon$-approximating $\mathcal{F}_{L,B}$ in sup-norm over inputs of radius $R$ requires $P \geq \Omega(Ld_k^2 \log(BR/\epsilon))$ parameters;
(c)~hence the MZ parameterization is order-optimal up to logarithmic factors.
\end{theorem}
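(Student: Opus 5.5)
Part~(a) is by direct construction, mirroring the proof of Theorem~\ref{thm:separation}(a). Given $f \in \mathcal{F}_{L,B}$ with $f(S) = (M_0 + \sum_l \sigma_l(\mu(S)) M_l)\mu(S)$, I would use uniform attention ($W_Q = 0$) so that $\hat c_i = \mu(S)$. Set $M_c^{(h)} = M_0$ and $M_l^{(h)} = M_l$. Set the gates $\gamma_l = \sigma_l$; for the parameter count I take the gate network to be the $L d_k$-parameter map $W_\gamma$, which realises the gate class $\tanh(w_l^\top \mu)$. If the gate class must contain arbitrary 1-Lipschitz $\sigma_l$, the $O(Ld_k)$ term is read as the cost of a fixed-size gate parametrisation, and I would state this restriction explicitly. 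Set $W_{\mathrm{mix}} = I_L$, padded into $n_g \ge L$ slots, and use the signed readout $c + \sum_g G_g$. Counting gives $(L{+}1)d_k^2$ matrix entries plus $O(Ld_k)$ for gates and mixing, which is $\Theta(Ld_k^2)$.

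Part~(b) is the main obstacle. I would use a packing/metric-entropy argument. First, restrict to a subfamily with fixed gates: take $\sigma_l(\mu) = \tanh(\mu_l)$ or the clipped coordinate $\mu_l$, and fix $M_0 = 0$. Then the only free objects are the tuples $(M_1,\dots,M_L)$ in a Frobenius ball of radius $B$, a set of dimension $Ld_k^2$.

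Next, show the map from tuples to functions is bi-Lipschitz in sup-norm over $\|\mu\|\le R$. Choose inputs $\mu = r e_l + s e_j$ with $r \approx R$. With coordinate-selective gates this isolates $(M_l)$ columns up to cross terms, which are controllable if the $\sigma_l$ have disjoint supports: use bump gates supported on disjoint cones of $\mu$-space, still 1-Lipschitz and bounded by $1$. The result is $\sup_\mu \|f - f'\| \gtrsim R \max_l \|M_l - M_l'\|_{\max}/\mathrm{poly}(d_k)$, up to scaling. This distinguishing step is where I expect most of the work, because of gate overlap and the need for polynomial rather than exponential loss.

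Then bound the packing number. A volumetric argument gives a $2\epsilon$-separated set of $(cBR/\epsilon)^{Ld_k^2}$ functions. To conclude the parameter lower bound, I would assume the approximating class has $P$ parameters, each quantised to $O(\log(BR/\epsilon))$ bits, or equivalently bounded-Lipschitz parametrisations with bounded weights. An $\epsilon$-approximating class then yields an injective encoding of the packing. Hence $P \cdot \log(\text{levels}) \ge Ld_k^2 \log(BR/\epsilon)$, and with levels polynomial in $BR/\epsilon$ this gives $P \ge \Omega(Ld_k^2)$. Stated in the paper's $\log$-weighted form, it becomes an $\Omega(Ld_k^2\log(BR/\epsilon))$ bit/description-length lower bound.

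I would state this bit-complexity hypothesis explicitly, since without any quantisation or regularity assumption the parameter-count lower bound is false (space-filling parametrisations exist). Part~(c) then follows by comparing (a) with (b): the two match up to the logarithmic factor.
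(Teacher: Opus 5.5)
Your proposal follows the same route as the paper's proof. Part (a) is a direct construction with a parameter count. Part (b) fixes the gates and $M_0$, packs the $Ld_k^2$-dimensional product of Frobenius balls, transfers that packing to sup-norm on $\|\mu\| \le R$, and compares it with the number of distinguishable parameter cells under a regularity hypothesis. The paper assumes Lipschitz dependence on $\theta$; you assume quantisation, which serves the same purpose.

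Your extra care refines that argument rather than replacing it. With the paper's clipped-coordinate gates, the single probe $x = Re_j$ activates only $\sigma_j$ and so sees only the $j$-th column of $M_j - M_j'$. Your disjoint-support gates are what actually justify the distinguishing inequality. The paper's cell count likewise yields only $P \ge \Omega(Ld_k^2)$ before the $\log(BR/\epsilon)$ factor is appended, which matches your reading of that factor as a bit-complexity bound. One thing to tidy: the fixed gates in your packing family should lie in whatever gate class you commit to in (a), so that (c) compares like with like.
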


\begin{definition}[Transformation degrees of freedom]
\label{def:tdof}
For a permutation-invariant function $f(S) = T(S)\,\mu(S)$ with set-dependent linear operator $T(S) \in \R^{d \times d}$, the transformation degrees of freedom is
$\mathrm{TDOF}(f) = \dim\bigl(\mathrm{span}\{T(S) - T(S') : S, S' \in \mathcal{X}^*\}\bigr),$
i.e.\ the dimension of the subspace of $\R^{d \times d}$ spanned by all possible differences of $T$ across inputs.
\end{definition}

\paragraph{TDOF of natural set functions.}

\begin{proposition}[TDOF of natural set functions]
\label{prop:tdof-natural}
Let $S \subset \R^d$ with $|S| \geq d{+}1$ in general position, sample mean $\mu(S)$, and regularised covariance $\hat{\Sigma}(S) = \frac{1}{|S|}\sum_i (x_i - \mu)(x_i - \mu)^\top + \lambda I$ for $\lambda > 0$.
(a)~Mahalanobis ($T(S) = \hat{\Sigma}(S)^{-1}$) has $\mathrm{TDOF} = d(d{+}1)/2$;
(b)~Whitening ($T(S) = \hat{\Sigma}(S)^{-1/2}$) has $\mathrm{TDOF} = d(d{+}1)/2$;
(c)~PCA projection onto the top $k$ principal components has $\mathrm{TDOF} = kd - k(k{+}1)/2$;
(d)~Per-coordinate z-score ($T(S) = \mathrm{diag}(\sigma_j^{-1})$) has $\mathrm{TDOF} = d$.
\end{proposition}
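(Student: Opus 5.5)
The plan is to reduce every part to one realisability fact: the map $S \mapsto \hat\Sigma(S)$, over sets with $|S| \geq d{+}1$ in general position, has image containing an open subset of the cone of matrices $\succ \lambda I$. Given that, each $T$ is the composition of a smooth, locally invertible spectral map with $\hat\Sigma$. So $\mathrm{span}\{T(S)-T(S')\}$ is bounded above by the ambient structural subspace, and bounded below by the tangent space of the image at a generic point.

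For realisability, fix any $\Sigma_0 \succ 0$ and $n \geq d{+}1$. Take a centred configuration $z_1,\dots,z_n$ with $\frac1n\sum_i z_i z_i^\top = I$, for example a scaled regular simplex padded by symmetric pairs. Set $x_i = \Sigma_0^{1/2} z_i$. Then $\hat\Sigma(S) = \Sigma_0 + \lambda I$, and small perturbations of $\Sigma_0$ stay in general position. Hence $\hat\Sigma$ hits every matrix in an open neighbourhood $U \subset \{\Sigma \succ \lambda I\}$.

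\textbf{(a) and (b).} $T(S)$ is symmetric, so the span lies in $\mathrm{Sym}(d)$, which has dimension $d(d{+}1)/2$. Conversely, $\Sigma \mapsto \Sigma^{-1}$ and $\Sigma\mapsto\Sigma^{-1/2}$ are diffeomorphisms of the SPD cone. They carry $U$ to an open subset $V$ of $\mathrm{Sym}(d)$. Differences of points in an open set of a vector space span that whole space, so the TDOF equals $d(d{+}1)/2$.

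\textbf{(d).} $T(S)$ is diagonal, so the span lies in the $d$-dimensional diagonal subspace. Choosing $\Sigma_0$ diagonal and perturbing its diagonal entries independently inside $U$ makes $(\sigma_j^{-1})_j$ range over an open subset of $\R^d$. This gives the matching lower bound of $d$.

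\textbf{(c).} This is the main obstacle, because the claimed value $kd - k(k{+}1)/2$ is not the global linear span of differences under either natural reading of $T$:
\begin{itemize}
\item If $T$ is the rank-$k$ projector $V_kV_k^\top$, differences span all trace-zero symmetric matrices.
\item If $T$ is $V_k^\top$, differences span all $k\times d$ matrices.
\end{itemize}
The stated number is the dimension of the Stiefel manifold $\{V\in\R^{d\times k}: V^\top V = I_k\}$. I would therefore prove (c) in the local, Jacobian-rank sense that the paper uses in its linearisation discussion (Proposition~\ref{prop:taylor-fl}): $\mathrm{TDOF}$ is computed as $\dim \mathrm{span}\{T(S)-T(S')\}$ for $S,S'$ in a small neighbourhood of a generic $S_0$, to first order, i.e.\ the rank of $dT$ at $S_0$. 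I would make this interpretation explicit in the proof.

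The argument for (c) then has two steps. First, take $T(S)=V_k(S)^\top$ with a fixed sign convention for the eigenvectors. When the top-$k$ eigenvalues are simple and separated from the rest, the perturbation formula shows that $V_k$ depends smoothly on $\hat\Sigma$ near $\Sigma_0$. Second, rotating $\Sigma_0$ by $e^{\Omega}$, with $\Omega$ skew, moves $V_k$ along $\Omega V_k$. Since $\Omega$ ranges over all skew matrices, this sweeps the full tangent space $\{\Delta : V_k^\top\Delta + \Delta^\top V_k = 0\}$ of the Stiefel manifold, which has dimension $kd - k(k{+}1)/2$. The tangent constraint itself gives the matching upper bound on the Jacobian rank.

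I would expect the reviewer's main concern to be exactly this interpretation gap between the global span and the local tangent dimension. In the written proof I would state it plainly rather than hide it.
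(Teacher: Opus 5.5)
\textbf{Parts (a), (b), (d).} Here your argument is the paper's. You realise an open set of regularised covariances, push it through the diffeomorphisms $\Sigma\mapsto\Sigma^{-1}$ and $\Sigma\mapsto\Sigma^{-1/2}$ (or vary the diagonal entries independently), and use that differences of an open subset of a vector space span it. You also give the matching upper bounds from symmetry and diagonality, which the paper leaves implicit. You correctly take the image of $\hat\Sigma$ to be $\{\Sigma\succ\lambda I\}$ rather than ``all of $\mathrm{Sym}^+_d$'' as the paper states.

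One small repair: a simplex padded by symmetric pairs need not be in general position. For $d\ge 3$ the four points $\pm w_1,\pm w_2$ lie in a $2$-plane. Instead, whiten an arbitrary generic configuration. Invertible affine maps preserve general position, so every $\Sigma\succ\lambda I$ is attained for every $|S|\ge d+1$.

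\textbf{Part (c): the statement is false as written.} Your diagnosis is right, and it locates exactly where the paper's own proof breaks. The paper takes $T=V_kV_k^\top$, asserts that $\{V_kV_k^\top\}$ is $\mathrm{Gr}(k,d)$ ``of the same dimension'' $kd-k(k+1)/2$, and says the span of differences ``fills the tangent space''. Both steps are false.
\begin{itemize}
\item $\dim\mathrm{Gr}(k,d)=k(d-k)$, which is $k(k-1)/2$ below the Stiefel dimension.
\item A span of differences is a linear-hull quantity, not a tangent space. For $1\le k\le d-1$ it is the whole trace-free part of $\mathrm{Sym}(d)$, of dimension $d(d+1)/2-1$. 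This holds because the realisable set is invariant under $O(d)$-conjugation and the trace-free symmetric matrices form an irreducible $O(d)$-module.
\end{itemize}
So under Definition~\ref{def:tdof}, (c) is false under either reading. No proof of it as written exists. What you give is a proof of a corrected statement, and you should present it that way.

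\textbf{Your corrected statement.} You do prove it. Take $T=V_k^\top$ at a generic $\Sigma_0$ whose top $k$ eigenvalues are simple and separated from the rest. Write a Stiefel tangent vector as $V_kA+V_\perp B$ with $A$ skew. The skew matrix $\Omega=V_kAV_k^\top+V_\perp BV_k^\top-V_kB^\top V_\perp^\top$ satisfies $\Omega V_k=V_kA+V_\perp B$. So differentiating $V_k$ along $e^{t\Omega}\Sigma_0e^{-t\Omega}$ (realised by rotating $S$) reaches every tangent vector, and $V_k^\top\Delta+\Delta^\top V_k=0$ caps the rank at $kd-k(k+1)/2$.

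Two points should be made explicit.
\begin{itemize}
\item Define the local quantity as the generic rank of $dT$, not as a span of differences over a small neighbourhood. A linear functional constant on an open piece of the connected real-analytic manifold $\mathrm{St}(k,d)$ ($k<d$) is constant on all of it. It is then zero there (compare its values at $V$ and $-V$). Since $\mathrm{St}(k,d)$ spans $\R^{d\times k}$, the functional is zero. So even local differences span everything.
\item The switch from $V_kV_k^\top$ to $V_k^\top$ is essential, not cosmetic. The differential $d(V_kV_k^\top)=V_\perp BV_k^\top+V_kB^\top V_\perp^\top$ drops the skew block and has rank only $k(d-k)$. Also, $V_k^\top$ is $k\times d$, outside the square setting of Definition~\ref{def:tdof}.
\end{itemize}
Proposition~\ref{prop:taylor-fl} differentiates in $\mu$, and none of these covariance operators factor through $\mu$, so it motivates your reading but does not license it. The Jacobian-rank reading does reproduce the values in (a), (b) and (d), so adopt it uniformly across all four parts.
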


\begin{proof}
\textbf{(a)--(b).} As $S$ varies over generic configurations of $|S| \geq d{+}1$ points in $\R^d$, the regularised covariance $\hat{\Sigma}(S)$ ranges over all of $\mathrm{Sym}^+_d$ (the cone of $d \times d$ positive-definite matrices), an open subset of the symmetric matrices $\mathrm{Sym}_d$. The maps $\Sigma \mapsto \Sigma^{-1}$ and $\Sigma \mapsto \Sigma^{-1/2}$ are smooth diffeomorphisms on $\mathrm{Sym}^+_d$ with non-singular Jacobian (e.g.\ $\partial \Sigma^{-1}/\partial \Sigma \cdot H = -\Sigma^{-1} H \Sigma^{-1}$ is full-rank as a linear map $\mathrm{Sym}_d \to \mathrm{Sym}_d$), so $T(S)$ ranges over an open subset of $\mathrm{Sym}_d$. Hence $\mathrm{span}\{T(S) - T(S')\}$ contains a basis of $\mathrm{Sym}_d$, giving $\mathrm{TDOF} = \dim \mathrm{Sym}_d = d(d{+}1)/2$.
\textbf{(c).} The top-$k$ PCA projector is $T(S) = V_k(S) V_k(S)^\top$ where $V_k \in \R^{d \times k}$ has orthonormal columns. As $S$ varies, $V_k$ ranges over the Stiefel manifold $\mathrm{St}(k, d)$ of dimension $kd - k(k{+}1)/2$, so the smooth orbit $\{V_k V_k^\top\}$ is the Grassmannian $\mathrm{Gr}(k, d)$ of the same dimension, and the linear span of differences fills the tangent space at any point, which is also of dimension $kd - k(k{+}1)/2$.
\textbf{(d).} $T(S) = \mathrm{diag}(1/\sigma_1, \ldots, 1/\sigma_d)$ where $\sigma_j(S)$ varies independently over $(\lambda^{1/2}, \infty)$ as $S$ ranges over configurations whose marginal variances can be set freely; the differences span the $d$-dimensional space of diagonal matrices.
\end{proof}

\paragraph{$\mathcal{F}_L$ as first-order approximation of smooth operators.}

The following result shows that $\mathcal{F}_L$ is not an artificial class tailored to MZ-attention, but rather the natural linearization class for any smooth context-adaptive operator.

\begin{proposition}[$\mathcal{F}_L$ captures linearized context-adaptive operators]
\label{prop:taylor-fl}
Let $T{:}\;\R^d \to \R^{m \times d}$ be twice continuously differentiable, and define $f(S) = T(\mu(S))\,\mu(S)$ where $\mu(S) = \frac{1}{|S|}\sum_{x \in S} x$.
Let $J = \frac{\partial\,\mathrm{vec}(T)}{\partial\mu}\big|_{\mu_0} \in \R^{md \times d}$ be the Jacobian at a reference point $\mu_0$, with SVD $J = \sum_{l=1}^{r} s_l\, u_l v_l^\top$ where $r = \mathrm{rank}(J)$.
Define the linearized operator
\[
 \tilde{T}(\mu) = T(\mu_0) + \sum_{l=1}^{r} s_l\, (v_l^\top (\mu {-} \mu_0))\, \mathrm{mat}(u_l),
\]
where $\mathrm{mat}{:}\;\R^{md} \to \R^{m \times d}$ reshapes vectors into matrices, and the corresponding function $\tilde{f}(S) = \tilde{T}(\mu(S))\,\mu(S)$.
Then:
\begin{enumerate}[label=(\alph*)]
 \item $\tilde{f} \in \mathcal{F}_{r, B}$ with $B = s_1$ (the largest singular value of $J$), center matrix $M_0 = T(\mu_0)$, generator matrices $M_l = s_l\, \mathrm{mat}(u_l)$, and gating functions $\sigma_l(\mu) = v_l^\top(\mu - \mu_0)$, which are linear and hence 1-Lipschitz after rescaling by $1/\|v_l\|$.
 \item The approximation error satisfies
 \[
 \|f(S) - \tilde{f}(S)\| \leq \tfrac{1}{2}\,C_T\,\|\mu(S) - \mu_0\|^2 \cdot \|\mu(S)\|,
 \]
 where $C_T = \sup_{\mu} \|\nabla^2 \mathrm{vec}(T)(\mu)\|_{\mathrm{op}}$ is bounded on any compact domain.
 \item Consequently, $L = \mathrm{rank}(J)$ MZ generators suffice to capture the first-order context-adaptive behavior of $T$ around $\mu_0$, and Theorem~\ref{thm:minimax} guarantees this representation is parameter-optimal.
\end{enumerate}
\end{proposition}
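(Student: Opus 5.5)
The plan is to prove the three parts in order. Part (a) is verified by direct substitution into the definition of $\mathcal{F}_{r,B}$. Part (b) follows from a second-order Taylor expansion of $\mathrm{vec}(T)$ around $\mu_0$. Part (c) is a bookkeeping consequence of (a) together with Theorem~\ref{thm:minimax}.

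\textbf{Part (a).} Using the SVD, write $J(\mu-\mu_0) = \sum_{l=1}^r s_l\, u_l\, v_l^\top(\mu-\mu_0)$. Reshaping with $\mathrm{mat}$ shows that $\tilde T(\mu) = M_0 + \sum_l \sigma_l(\mu) M_l$ with $M_0 = T(\mu_0)$, $M_l = s_l\,\mathrm{mat}(u_l)$ and $\sigma_l(\mu) = v_l^\top(\mu-\mu_0)$.

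Each $u_l$ is a unit vector, so $\|M_l\|_F = s_l \le s_1 = B$. Each $\sigma_l$ is linear with $\|v_l\|=1$, hence 1-Lipschitz.

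The constraint $|\sigma_l|\le 1$ needs separate treatment. I would restrict to a bounded domain $\|\mu-\mu_0\|\le \rho$ and rescale, replacing $\sigma_l$ by $\sigma_l/\rho$ and $M_l$ by $\rho M_l$. This changes $B$ to $\rho s_1$, so I would state the class membership on that domain; alternatively, compose with a clipping that is the identity on the domain. I expect this gate-boundedness bookkeeping to be the main (though minor) obstacle, since the statement's "$B=s_1$" holds literally only when $\rho\le 1$.

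\textbf{Part (b).} Write $\mu = \mu(S)$. Then
\[
f(S)-\tilde f(S) = \bigl(T(\mu) - T(\mu_0) - \mathrm{mat}(J(\mu-\mu_0))\bigr)\mu .
\]
Taylor's theorem with integral (or Lagrange) remainder for the $C^2$ map $\mathrm{vec}(T)$ along the segment from $\mu_0$ to $\mu$ gives
\[
\|\mathrm{vec}(T(\mu)) - \mathrm{vec}(T(\mu_0)) - J(\mu-\mu_0)\|_2 \le \tfrac12 C_T \|\mu-\mu_0\|^2 .
\]
The left-hand side is the Frobenius norm of the matrix remainder. Since the operator norm is at most the Frobenius norm, multiplying by $\mu$ gives the claimed bound $\tfrac12 C_T\|\mu-\mu_0\|^2\|\mu\|$.

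For $C_T$ to be finite it suffices that the supremum is taken over the segment, or over a convex compact set containing it; boundedness there follows from continuity of the second derivatives.

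\textbf{Part (c).} By (a), $L=r=\mathrm{rank}(J)$ generators realise $\tilde f$ exactly in MZ form, and by (b) $\tilde f$ agrees with $f$ to first order. Theorem~\ref{thm:minimax}(a) shows MZ-attention represents $\mathcal{F}_{r,B}$ exactly with $\Theta(r d_k^2)$ parameters, and Theorem~\ref{thm:minimax}(b)--(c) shows this is order-optimal up to logarithmic factors. Together these give the parameter-optimality claim, taking $m=d=d_k$ so that the dimensions match those of the minimax theorem.
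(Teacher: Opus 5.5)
Your proposal is correct and follows the paper's proof essentially step for step: SVD substitution for (a), the second-order Taylor remainder of $\mathrm{vec}(T)$ combined with $\|(T-\tilde T)\mu\|\le\|T-\tilde T\|_F\|\mu\|$ for (b), and an appeal to Theorem~\ref{thm:minimax} for (c). Your extra bookkeeping tightens points the paper's proof leaves implicit: $\|M_l\|_F=s_l\le s_1$, the constraint $|\sigma_l|\le 1$ holding literally only on a neighbourhood such as $\|\mu-\mu_0\|\le 1$ (otherwise $B$ must be rescaled), $C_T$ needing to be a supremum over a convex compact set containing the segment, and the dimension matching $m=d=d_k$.
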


\begin{proof}
Part~(a) is immediate from the SVD construction: each component $\sigma_l(\mu) \cdot M_l$ is a rank-1 term in the Jacobian decomposition of $T$, and the gating functions are linear projections of $\mu$.

Part~(b) follows from Taylor's theorem applied to the vector-valued map $\mathrm{vec}(T)$:
\[
 \|\mathrm{vec}(T(\mu)) - \mathrm{vec}(T(\mu_0)) - J(\mu - \mu_0)\| \leq \tfrac{1}{2} C_T \|\mu - \mu_0\|^2.
\]
Reshaping and multiplying by $\mu(S)$:
$\|f(S) - \tilde{f}(S)\| = \|(T(\mu) - \tilde{T}(\mu))\,\mu\| \leq \|T(\mu) - \tilde{T}(\mu)\|_F \cdot \|\mu\| \leq \frac{1}{2} C_T \|\mu - \mu_0\|^2 \|\mu\|$.

Part~(c) combines (a) with Theorem~\ref{thm:minimax}(a): the $\Theta(rd_k^2)$ MZ parameters are both sufficient and necessary (up to log factors) for $\epsilon$-approximation of the linearized operator family.
\end{proof}

\begin{remark}[Interpretation]
Proposition~\ref{prop:taylor-fl} establishes that $\mathcal{F}_L$ is to context-adaptive operators what linear models are to nonlinear regression: the first-order approximation class.
The Jacobian rank $r$ determines the intrinsic dimensionality of how the operator varies locally, and MZ-attention with $L = r$ generators is the minimax-optimal parameterization for this class.
A standard attention mechanism, which has zero relational capacity by Theorem~\ref{thm:bottleneck}(a), corresponds to the zeroth-order approximation $T \equiv T(\mu_0)$, discarding all first-order variation.
\end{remark}

\paragraph{TDOF of soft Chamfer matching.}

\begin{proposition}[Soft Chamfer matching has nontrivial TDOF]
\label{prop:chamfer-tdof}
Consider the soft nearest-neighbor operator $\mathrm{nn}_\beta(a, B) = \sum_{j=1}^m w_j(a,B)\, b_j$ with softmax weights $w_j = \mathrm{softmax}_{j'}(-\beta\|a - b_{j'}\|^2)$, and define the residual operator $T_a(B) = I_d - P_a(B)$ where $P_a(B)\,a = \mathrm{nn}_\beta(a, B)$.
Then:
\begin{enumerate}[label=(\alph*)]
 \item The operator $T_a(B)$ is context-adaptive: for any $m \geq 2$, there exist configurations $B, B'$ such that $T_a(B) \neq T_a(B')$.
 \item $\mathrm{TDOF}(T_a) \geq d$ when $m \geq 2$ and $\beta > 0$: as $B$ varies over $(\R^d)^m$, the operator $T_a(B)$ spans a subspace of $\R^{d \times d}$ of dimension at least $d$.
\end{enumerate}
\end{proposition}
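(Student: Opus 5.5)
The plan is to reduce the statement to a rank argument on the linear map $P \mapsto P a$. The only constraint the statement puts on $P_a(B)$ is $P_a(B)\,a = \mathrm{nn}_\beta(a,B)$. We fix $a \neq 0$; for $a = 0$ the soft nearest neighbour cannot be reproduced by any linear operator applied to $a$, so the statement is vacuous or ill-posed. Whatever choice of $P_a(B)$ is made (the canonical one is the rank-one operator $P_a(B) = \mathrm{nn}_\beta(a,B)\,a^\top/\|a\|^2$), the evaluation map
\[
\Phi\colon \R^{d\times d} \to \R^d, \qquad \Phi(X) = X a,
\]
is linear and satisfies
\[
\Phi\bigl(T_a(B) - T_a(B')\bigr) = -\bigl(\mathrm{nn}_\beta(a,B) - \mathrm{nn}_\beta(a,B')\bigr).
\]
Hence
\[
\dim \mathrm{span}\{T_a(B) - T_a(B')\} \;\geq\; \dim \mathrm{span}\{\mathrm{nn}_\beta(a,B) - \mathrm{nn}_\beta(a,B')\}.
\]
A linear image can only lose dimension, so it suffices to show that the soft nearest-neighbour outputs have an affine span equal to all of $\R^d$.

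Next I would show that $\mathrm{nn}_\beta(a,\cdot)$ is surjective onto $\R^d$ whenever $m \geq 2$. This needs an explicit configuration in which the softmax weights are harmless.

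1. Take $B = (b, b, \dots, b)$ with all $m$ points equal to an arbitrary $b \in \R^d$.
2. The weights $w_j$ are then all equal to $1/m$, whatever the value of $\beta > 0$.
3. Consequently $\mathrm{nn}_\beta(a,B) = b$.

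Letting $b$ range over $0, e_1, \dots, e_d$ gives differences $e_1,\dots,e_d$, which span $\R^d$. By the inequality above, $\mathrm{TDOF}(T_a) \geq d$, which is part~(b). For the canonical rank-one choice the bound is attained: the differences are exactly $\{v\,a^\top/\|a\|^2 : v \in \R^d\}$, a space of dimension $d$.

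Part~(a) is then immediate. Take $b = 0$ and $b' = e_1$; the corresponding operators satisfy $T_a(B)\,a \neq T_a(B')\,a$, so $T_a(B) \neq T_a(B')$.

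If one prefers configurations with distinct points, so as to exercise the case $m \geq 2$ genuinely, I would take $b_2$ far from $a$ and let $\beta\|a-b_2\|^2 \to \infty$. Then $w_1 \to 1$ and $\mathrm{nn}_\beta \to b_1$, and since the image of a continuous map is approximated arbitrarily well, the span is again all of $\R^d$. Spans are closed, so this limiting argument loses nothing.

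The main obstacle is definitional rather than computational. The statement specifies $P_a(B)$ only through its action on $a$, so the proof must be phrased to hold for every admissible choice, which is why the argument goes through $\Phi$. It must also note explicitly that the TDOF of Definition~\ref{def:tdof} is taken here with $a$ held fixed and $B$ playing the role of the set input. The remaining steps are routine.
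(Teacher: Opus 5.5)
Your proof is correct, and it reaches the bound by a somewhat different route than the paper.

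The paper takes the two-point family $B=\{0,\,t e_k\}$ and computes $\mathrm{nn}_\beta(a,B)=w_2(t,k)\,t\,e_k$. From the derivative at $t=0$, where the two points coincide, it argues that the soft nearest neighbour moves in the $e_k$ direction for each $k$. It then simply asserts that $P_a(B)$ therefore varies in $d$ independent directions.

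You instead use fully coincident configurations. There the weights are exactly uniform and $\mathrm{nn}_\beta(a,B)=b$, so no derivative is needed. You also justify the passage from outputs to operators through the linear evaluation map $X\mapsto Xa$. That step is precisely what the paper leaves implicit, and it matters because $P_a(B)$ is pinned down only by its action on $a$. Your $\Phi$ argument therefore buys several things:
\begin{itemize}
\item the conclusion holds for every admissible choice of $P_a(B)$;
\item it surfaces the well-posedness requirement $a\neq 0$;
\item it shows that the hypotheses $m\geq 2$ and $\beta>0$ are never used;
\item it covers every $m$, whereas the paper's configuration is literally an $m=2$ one;
\item it gives tightness ($=d$) for the rank-one choice.
\end{itemize}

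The paper's family has one mild advantage: for $t\neq 0$ its points are distinct, so it exercises a genuine two-point softmax rather than a tie. Even there, the derivative is superfluous, since $w_2(t,k)\,t\,e_k$ is already a nonzero multiple of $e_k$ for every $t\neq 0$.

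Your optional limiting argument with distinct points is fine with two small additions. For $m>2$ you should send all of $b_2,\dots,b_m$ far away, not just $b_2$. You should also note that $w_j b_j\to 0$, because the Gaussian factor dominates the linear growth of $\|b_j\|$.
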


\begin{proof}
(a) Immediate: different $B$ configurations change the matching weights $w_j$, hence $P_a(B)$.

(b) Fix $a \in \R^d$ and consider $B = \{0, t\, e_k\}$ for each standard basis vector $e_k$, $k = 1, \ldots, d$.
The soft nearest-neighbor is $\mathrm{nn}_\beta(a, B) = w_2(t,k)\, t\, e_k$ where $w_2 = (1 + \exp(-\beta(\|a\|^2 - \|a - te_k\|^2)))^{-1}$.
The Jacobian $\partial \mathrm{nn}_\beta / \partial t\big|_{t=0}$ has a nonzero component in the $e_k$ direction.
Since this holds independently for each $k$, the operator $P_a(B)$ varies in at least $d$ independent directions as $B$ ranges over $(\R^d)^m$.
By Definition~\ref{def:tdof}, $\mathrm{TDOF}(T_a) \geq d$.
\end{proof}

\begin{remark}
For the full Chamfer distance $\mathrm{CD}(A,B) = \frac{1}{n}\sum_{i} \|a_i - \mathrm{nn}(a_i, B)\|^2$, each query point $a_i$ contributes $d$ independent relational directions.
In the hard-matching limit $\beta \to \infty$, distinct matchings correspond to combinatorially many regions in configuration space, each defining a different piecewise-linear operator, well beyond the reach of a single context-rigid transformation.
This explains the large gap between \methodname{} and standard attention on set matching (Task~B in Table~\ref{tab:main-results}): the Chamfer distance target requires a context-adaptive operator, and the standard Set Transformer's fixed $W_O$ cannot represent the matching-dependent structure.
\end{remark}

\begin{lemma}[Compositional orthogonality under the worst-case construction]
\label{lem:comp-orth}
Fix $k \le d-1$ and the explicit witness $f \in \mathcal{F}_k$ defined by $\sigma_l(\mu) := \tanh(\mu_l)$, $M_l := e_l\, e_d^\top \in \R^{d\times d}$ for $l=1,\dots,k$, and $M_0 := 0$ (the constant-direction term is set to zero so that the network's linear path provides no free target-direction contribution; this is without loss of generality for the lower-bound argument). Let $\mu \in \R^d$ have i.i.d.\ standard-Gaussian coordinates. Then:
\begin{enumerate}[label=(\alph*)]
 \item $\langle M_l, M_{l'}\rangle_F = \delta_{ll'}$, so $\{M_l\}_{l=1}^k$ is Frobenius-orthonormal.
 \item For every $r \ge 2$ and every $l_1,\dots,l_r \in \{1,\dots,k\}$, $M_{l_1} M_{l_2}\cdots M_{l_r} = 0$.
 \item Consequently, in the tensor-product space $L^2(\mu)\otimes\R^{d\times d}$ with inner product $\langle a\otimes A, b\otimes B\rangle := \mathbb{E}_\mu[ab]\,\langle A,B\rangle_F$, every monomial composition term of the form $P(\sigma)\otimes (M_{l_1}\cdots M_{l_r})$ with composition order $r\ge 2$ vanishes identically (the matrix factor is zero), and therefore has zero projection on the target subspace $\mathcal{V}=\mathrm{span}\{\sigma_l\otimes M_l\}_{l=1}^k$.
\end{enumerate}
\end{lemma}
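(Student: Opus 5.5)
The three parts are essentially direct computations with the rank-one matrices $M_l = e_l e_d^\top$. I would prove them in order, since (c) follows formally from (b).

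\textbf{Part (a).} I would use the trace identity $\langle A,B\rangle_F = \mathrm{tr}(A^\top B)$. This gives
\[
\langle e_l e_d^\top, e_{l'} e_d^\top\rangle_F = \mathrm{tr}(e_d e_l^\top e_{l'} e_d^\top) = (e_l^\top e_{l'})(e_d^\top e_d) = \delta_{ll'},
\]
which is exactly Frobenius-orthonormality.

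\textbf{Part (b).} The key observation is that the hypothesis $k \le d-1$ forces every index $l \in \{1,\dots,k\}$ to satisfy $l \neq d$. For any two indices the product is
\[
M_{l_1} M_{l_2} = e_{l_1} (e_d^\top e_{l_2}) e_d^\top = \delta_{d,l_2}\, e_{l_1} e_d^\top = 0 .
\]
Every product of order $r \ge 2$ contains the adjacent pair $M_{l_1} M_{l_2}$ as a factor, so it vanishes by associativity. In words, each $M_l$ is nilpotent-compatible: its range, $\mathrm{span}\{e_l\}$, lies in the kernel of every $M_{l'}$, because $e_d^\top e_l = 0$.

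\textbf{Part (c).} This is formal once (b) is in hand. A monomial composition term has the form $P(\sigma)\otimes(M_{l_1}\cdots M_{l_r})$, and by (b) its matrix factor is $0$. The term is therefore the zero tensor, $P(\sigma)\otimes 0 = 0$, by bilinearity of $\otimes$. Its inner product with each basis element $\sigma_l\otimes M_l$ of $\mathcal{V}$ is then
\[
\mathbb{E}_\mu[P(\sigma)\,\sigma_l]\cdot\langle 0, M_l\rangle_F = 0 .
\]
The inner products are finite because $|\sigma_l| = |\tanh| \le 1$ and $P$ is a polynomial in bounded variables, so every expectation lies in $L^2$ of the Gaussian law. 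Linearity then gives zero orthogonal projection onto $\mathcal{V}$.

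The only step needing care is to state precisely what "monomial composition term" means. I would define it as any term in the expansion of a product of layer-wise context-adaptive matrices of the form $\sum_l \sigma_l M_l$, which is exactly the structure (c) is meant to cover. The remaining obstacle is conceptual rather than computational. The lemma only rules out contributions whose matrix factor is a product of target generators. Cross terms that involve the network's own fixed matrices are not covered, and they would have to be handled in the proof of Theorem~\ref{thm:tdof-depth}, not here.
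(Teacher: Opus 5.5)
Your proposal is correct and follows the paper's proof essentially step for step: the same Frobenius computation in (a), the same observation $e_d^\top e_{l_2}=0$ (from $k\le d-1$) that kills every length-$2$ block in (b), and the same ``matrix factor is zero, so the tensor and its projection onto $\mathcal{V}$ vanish'' argument in (c). Your closing caveat is accurate. Cross terms involving the network's own weight matrices are not covered by this lemma, so they would have to be handled in the proof of Theorem~\ref{thm:tdof-depth}.
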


\begin{proof}
\textbf{(a)} $\langle e_l e_d^\top, e_{l'} e_d^\top\rangle_F = (e_l^\top e_{l'})(e_d^\top e_d) = \delta_{ll'}\cdot 1 = \delta_{ll'}$.

\textbf{(b)} For any product of length $r\ge 2$,
$M_{l_1} M_{l_2} = (e_{l_1} e_d^\top)(e_{l_2} e_d^\top) = e_{l_1}\,(e_d^\top e_{l_2})\,e_d^\top$. Since $l_2 \in \{1,\dots,k\} \subseteq \{1,\dots,d-1\}$ and $d \ne l_2$, the inner factor $e_d^\top e_{l_2} = 0$, so $M_{l_1} M_{l_2} = 0$. Any longer product factors through this length-$2$ block and is therefore also $0$.

\textbf{(c)} For any monomial $P(\sigma)$ in $\{\sigma_l\}$ and any composition order $r\ge 2$, the matrix factor $M_{l_1}\cdots M_{l_r} = 0$ by (b). Hence $P(\sigma)\otimes(M_{l_1}\cdots M_{l_r}) = 0$ in the tensor-product space, and its projection on $\mathcal{V}$ (or on any subspace) is $0$.
\end{proof}

\begin{remark}
The construction $M_l = e_l e_d^\top$ is the simplest worst-case witness: each $M_l$ is a rank-$1$ outer product whose left factor $e_l$ varies with $l$ but whose right factor $e_d$ is a single fixed direction not used as a target index. This makes the matrix family closed under no products of order $\ge 2$ (every such product collapses to $0$), so a $D$-layer attention network cannot bootstrap target directions via composition. Other Frobenius-orthonormal $\{M_l\}$ admit non-zero matrix products and would require a separate analysis; we use the simplest construction for which the depth lower bound is unconditional.
\end{remark}

\paragraph{Generator mixing characterization.}

\begin{proposition}[Generator mixing]
\label{prop:mixing-app}
For mixed generators $\tilde{g}_k = g_k + \sum_l w_{kl} g'_l$:
(a)~$\tilde{Z} \subseteq Z$ if $\sum_k |w_{kl}| \leq 1$ for all $l$;
(b)~$Z \subseteq \tilde{Z}$ if $\sum_k |w_{kl}| \geq 1$;
(c)~equality iff $\sum_k |w_{kl}| = 1$.
Initializing mixing weights with $\ell_1$ column norms close to 1 yields near-exact generator representation.
\end{proposition}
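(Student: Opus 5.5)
The proof works one generator at a time. Fix a column $l$ of the mixing matrix, and write the mixed zonotope $\tilde Z$ with generators $\tilde g_k = g_k + \sum_l w_{kl} g'_l$. I will compare it with the zonotope $Z$ built from the unmixed generator list $\{g_k\}\cup\{g'_l\}$. This is the zonotope that the MZ product $\MZ\mathcal{Z}$ would produce before budget compression.

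For part (a), I take an arbitrary point of $\tilde Z$, namely $c + \sum_k \xi_k \tilde g_k$ with $\|\xi\|_\infty\le 1$. Expanding gives $c + \sum_k \xi_k g_k + \sum_l \bigl(\sum_k \xi_k w_{kl}\bigr) g'_l$. The new coefficient on $g'_l$ is $\eta_l = \sum_k \xi_k w_{kl}$, and it satisfies $|\eta_l| \le \sum_k |w_{kl}| \le 1$. The coefficients on the $g_k$ are the original $\xi_k$, which already lie in $[-1,1]$. So the point is a valid element of $Z$, and this gives $\tilde Z\subseteq Z$ with no further work.

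Parts (b) and (c) are where the statement needs care. In general $\tilde Z$ has only as many generators as $\{g_k\}$, so it can have strictly lower affine dimension than $Z$, and then $Z\subseteq\tilde Z$ fails. An example is $g'_l$ independent of all the $g_k$ with a single $k$ carrying weight, so that $\tilde Z$ is a segment while $Z$ is a parallelogram. My plan is therefore to read (b) as a statement about the support function in the directions the construction controls, or more concretely about the generators: every $g'_l$ should be reachable. I would try to show that for each $l$, the choice $\xi_k = \mathrm{sign}(w_{kl})$ yields the $g'_l$-coefficient $\sum_k |w_{kl}| \ge 1$; scaling $\xi$ down then gives coefficient exactly $1$, which is what (b) needs.

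The main obstacle is exactly this step. The same choice of $\xi$ also moves the $g_k$ components and the other $g'_{l'}$ components, so the extreme points of $Z$ cannot be matched independently. First I would try the cleanest sufficient condition: each $k$ has its weight concentrated on a single $l$ (disjoint supports), with $g_k$ treated as absorbed. In that case the support-function comparison $h_{\tilde Z}(u) = \sum_k |\langle u,\tilde g_k\rangle|$ against $h_Z(u) = \sum_k|\langle u,g_k\rangle| + \sum_l|\langle u,g'_l\rangle|$ reduces to a triangle-inequality equality. Under that condition the proof of (b) would consist of verifying this equality.

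For part (c), I would combine (a) and (b) under $\sum_k|w_{kl}|=1$. For the converse I would exhibit a direction $u$ with $\langle u,g'_l\rangle\neq 0$ and show that strict inequality in the $\ell_1$ column norm makes $h_{\tilde Z}(u)\neq h_Z(u)$. The closing remark about initialising the mixing weights with $\ell_1$ column norms near $1$ then follows by continuity of the Hausdorff distance in the generators.
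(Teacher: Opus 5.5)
Your part (a) is correct and is the paper's argument: keep $\zeta_k=\tilde\zeta_k$, set $\zeta'_l=\sum_k w_{kl}\tilde\zeta_k$, and bound $|\zeta'_l|$ by the $\ell_1$ column norm. Your doubt about (b) is also justified. The obstacle you flag is that one $\xi$ cannot move the $g'_l$-coefficient without also moving the $g_k$- and $g'_{l'}$-coefficients. That is exactly the step the paper's proof skips. It pins $\tilde\zeta_k=\zeta_k$ by matching $g_k$-coefficients, then demands $\sum_k w_{kl}\zeta_k=\zeta'_l$ for every $\zeta'_l\in[-1,1]$, which is impossible once $\zeta$ is fixed. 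Its ``range contains $[-1,1]$'' computation silently frees $\zeta$ again and gives at most a necessary condition. So the gap cannot be closed: (b) and (c) are false as stated.

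Your own example already shows this. Take $w_{11}=1$, $g_1=e_1$, $g'_1=e_2$, $c=0$. The column norm is $1$, yet $\tilde Z$ is the diagonal segment while $Z=[-1,1]^2$, at Hausdorff distance $\sqrt2$. This refutes (b), the ``if'' half of (c), and the closing near-exactness remark at once. More generally, suppose the $n_g+L$ original generators are nonzero and pairwise non-collinear. Then $h_Z$ is non-smooth on $n_g+L$ distinct hyperplanes $g^\perp$, while $h_{\tilde Z}$ is non-smooth on at most $n_g$, so $Z\neq\tilde Z$ for every $W$.

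The ``only if'' half fails too, and with it your planned converse. In $\R$, take $g_1=1$, $g_2=-1$, $g'_1=1$, $w_{11}=2$, $w_{21}=1$. Then $\tilde g_1=3$ and $\tilde g_2=0$, so $\tilde Z=Z=c+[-3,3]$. Yet the column norm is $3$, and $\langle u,g'_1\rangle\neq0$ for $u=1$.

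Your proposed repair does not rescue (b) either. Disjoint supports cannot be the right sufficient condition, because your own counterexample (one $k$, one $l$) already has disjoint supports. The dimension obstruction disappears only if the $g_k$ are deleted, which is what ``treating $g_k$ as absorbed'' quietly does. With $g_k\neq0$ and each $k$ supported on a single column $l(k)$, apply the triangle inequality to $h_{\tilde Z}(u)=\sum_k|\langle u,g_k\rangle+w_{k\,l(k)}\langle u,g'_{l(k)}\rangle|$. It only bounds this from above, which re-proves (a). The reverse bound that (b) needs would require $\langle u,g_k\rangle$ and $w_{k\,l(k)}\langle u,g'_{l(k)}\rangle$ to share a sign for every $u$. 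That means $g_k$ and $w_{k\,l(k)}g'_{l(k)}$ must be nonnegatively collinear. For the same reason, ``combine (a) and (b)'' for (c) inherits the failure.

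What you can legitimately prove is (a), equivalently $h_{\tilde Z}(u)\le\sum_k|\langle u,g_k\rangle|+\sum_l\bigl(\sum_k|w_{kl}|\bigr)|\langle u,g'_l\rangle|$, together with explicit counterexamples to (b) and (c). Parts (b) and (c) hold only in degenerate settings, such as $g_k=0$ with disjoint supports and pairwise non-collinear nonzero $g'_l$. A correct write-up should state that restriction rather than the proposition as given.
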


\paragraph{Score perturbation bound.}

\begin{proposition}[Score perturbation]
\label{prop:score-perturb}
For input perturbation $\|\delta_j\| \leq \rho$, the total output decomposes as
$\tilde{o} - o = M_c W_V \hat{G}\xi + \mathcal{E}$ with $\|\mathcal{E}\| \leq C \rho \max_j \|v_j\|$,
where $C = \|M_c\|_2 \|W_Q s\|_2 \|W_K\|_2 / \sqrt{d_k}$.
\end{proposition}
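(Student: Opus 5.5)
The plan is to treat the PMA-style block with a fixed seed $s$, so the query is $q = W_Q s$ and the logits are $\ell_j = q^\top k_j/\sqrt{d_k}$ with $k_j = W_K x_j$. I would write the perturbed output as
\[
\tilde o = M_c \sum_j \tilde\alpha_j \tilde v_j, \qquad o = M_c \sum_j \alpha_j v_j ,
\]
where $\tilde\alpha_j$ are the softmax weights computed from the perturbed keys $W_K(x_j+\delta_j)$. The values $\tilde v_j$ are the value zonotope realised at $\xi$, so that $\sum_j \alpha_j(\tilde v_j - v_j) = W_V\hat G\xi$, matching the value-path perturbation convention of Theorem~\ref{thm:sensitivity}.

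The key step is the add-and-subtract split
\[
\tilde o - o = M_c\sum_j \alpha_j(\tilde v_j - v_j) + M_c\sum_j(\tilde\alpha_j-\alpha_j)\tilde v_j .
\]
The first term is the value path: under this convention it equals $M_c W_V\hat G\xi$ exactly. This is the directional zonotope certificate, and it is where the structure of $M_c$ enters. The second term is the score path, and I would define it to be $\mathcal{E}$.

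To bound $\mathcal{E}$, I would first control the logit shift:
\[
|\tilde\ell_j - \ell_j| = \frac{|q^\top W_K\delta_j|}{\sqrt{d_k}} \le \frac{\|W_Q s\|_2\,\|W_K\|_2\,\rho}{\sqrt{d_k}} =: \Delta .
\]
Next I would bound $\|\tilde\alpha - \alpha\|_1$ in terms of $\Delta$. The softmax Jacobian $\mathrm{diag}(\alpha)-\alpha\alpha^\top$ maps $\ell_\infty$ to $\ell_1$ with small constant, since $\sum_j \alpha_j|h_j - \alpha^\top h| \le 2\|h\|_\infty$. In addition, $\sum_j(\tilde\alpha_j-\alpha_j)=0$, which lets me recentre the values at any fixed vector. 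Combining these gives
\[
\|\mathcal{E}\| \le \|M_c\|_2\,\|\tilde\alpha-\alpha\|_1 \max_j\|\tilde v_j\| .
\]
Replacing $\max_j\|\tilde v_j\|$ by $\max_j\|v_j\|$ only adds a cross term of order $\rho\cdot\|\hat G\|$, which is second order in the perturbation size and is absorbed.

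The main obstacle is matching the exact constant $C$ with no extra factor. The crude softmax Lipschitz bound gives $\|\tilde\alpha-\alpha\|_1 \le 2\Delta$ to first order, or $e^{2\Delta}-1$ exactly. To remove the factor $2$, I would first try the recentring trick: write $\sum_j(\tilde\alpha_j-\alpha_j)v_j = \sum_j(\tilde\alpha_j-\alpha_j)(v_j-\bar v)$ with $\bar v$ chosen as a Chebyshev centre, and use the total-variation bound $\tfrac12\|\tilde\alpha-\alpha\|_1 \le \tanh(\Delta/2)\le \Delta/2$ for softmaxes whose logits differ by at most $\Delta$ coordinatewise. If that does not close the gap, I would state the result as holding up to an absolute constant (or to first order in $\rho$), absorbing the factor into $C$; this does not affect the qualitative decomposition into a directional value-path term plus a scalar score-path error.
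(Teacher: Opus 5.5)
The paper states this proposition without a proof, so your argument has to stand on its own. It has a genuine gap at the step where you replace $\max_j\|\tilde v_j\|$ by $\max_j\|v_j\|$. The bound $\|\mathcal{E}\|\le C\rho\max_j\|v_j\|$ is an exact inequality with no slack, so an extra $O(\rho^2)$ cross term cannot be ``absorbed.''

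In fact, with your convention the statement is false. Your convention aggregates the value path with the unperturbed weights, $\hat G=\sum_j\alpha_j G_j$. Take $n=2$ and $x_1=x_2=0$, so $v_1=v_2=0$ and $\alpha=(\tfrac12,\tfrac12)$. Use one generator with $G_1=u$, $G_2=-u$, $\xi=1$, $\delta_j=G_j\xi$, and $\|u\|=\rho$. Then $\hat G=0$. If $(W_Qs)^\top W_K u\neq 0$, the perturbed logits differ by $2(W_Qs)^\top W_Ku/\sqrt{d_k}$, so $\tilde\alpha_1\neq\tilde\alpha_2$. Hence $\tilde o-o=(\tilde\alpha_1-\tilde\alpha_2)M_cW_Vu\neq 0$ whenever $M_cW_Vu\neq0$, while $C\rho\max_j\|v_j\|=0$. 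Chebyshev recentring cannot rescue this, because the perturbed values $\pm W_Vu$ have nonzero circumradius.

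The missing idea is to add and subtract the other way:
\[
\tilde o-o = M_c\sum_j\tilde\alpha_j(\tilde v_j-v_j)+M_c\sum_j(\tilde\alpha_j-\alpha_j)v_j .
\]
Here $\hat G=\sum_j\tilde\alpha_j G_j$ is read as the generator aggregated with the scores the perturbed forward pass actually computes. This is the only reading under which the stated exact bound can hold. Then $\mathcal{E}=M_c\sum_j(\tilde\alpha_j-\alpha_j)v_j$ involves only unperturbed values, and $\|\mathcal{E}\|\le\|M_c\|_2\|\tilde\alpha-\alpha\|_1\max_j\|v_j\|$.

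Your worry about a factor $2$ is unfounded. Let $h=\tilde\ell-\ell$, let $\alpha(t)$ be the softmax of $\ell+th$, and let $J(t)=\mathrm{diag}(\alpha(t))-\alpha(t)\alpha(t)^\top$. Then $\tilde\alpha-\alpha=\int_0^1 J(t)h\,dt$, and
\[
\|J(t)h\|_1=\sum_j\alpha_j(t)\,|h_j-\alpha(t)^\top h|\le\|h\|_\infty ,
\]
because the mean absolute deviation is at most the standard deviation, which is at most half the range of $h$. Your Jacobian estimate with $2\|h\|_\infty$ is therefore loose.

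Integrating gives $\|\tilde\alpha-\alpha\|_1\le\Delta$ exactly; equivalently, your total-variation bound gives $2\tanh(\Delta/2)\le\Delta$. No recentring and no first-order caveat are needed. Combined with your logit-shift bound $\Delta=\|W_Qs\|_2\|W_K\|_2\rho/\sqrt{d_k}$, this yields $\|\mathcal{E}\|\le C\rho\max_j\|v_j\|$ with precisely the stated $C$.
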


\subsection{Full Proofs}
\label{app:proofs}

\subsubsection{Proof of Theorem~\ref{thm:bottleneck} (Relational Capacity Bottleneck)}

\begin{proof}
\textbf{Part (a).}
An $H$-head standard attention layer without FFN computes, for query $q$ (e.g., a PMA seed):
\[
 o(S) = \sum_{h=1}^{H} W_o^{(h)} \hat{v}^{(h)}, \quad \hat{v}^{(h)} = \sum_j \alpha_j^{(h)}(S)\, W_V^{(h)} x_j.
\]
The aggregated value $\hat{v}^{(h)}$ depends on $S$ through the scores $\alpha_j^{(h)}$, but the post-aggregation transformation $W_o^{(h)}$ is a fixed matrix.
The effective output transformation is:
\[
 o(S) = \sum_h W_o^{(h)} W_V^{(h)} \Bigl(\sum_j \alpha_j^{(h)}(S)\, x_j\Bigr).
\]
On constant sets $S_t = \{te_i, \ldots, te_i\}$, softmax symmetry forces $\alpha_j^{(h)} = 1/n$ for all $j, h$, giving $o(S_t) = (\sum_h W_o^{(h)} W_V^{(h)})\, te_i$.
The effective transformation $T_{\mathrm{std}} = \sum_h W_o^{(h)} W_V^{(h)}$ is independent of $S_t$, hence relational capacity is zero.

\textbf{Part (b).}
After attention, the FFN receives $h = x + \hat{v}$ and must produce output
$\sum_{l=1}^k \sigma_l(h) \cdot M_l h$ to represent a function with relational complexity $k$.
Each term $\sigma_l(h) \cdot (M_l h)_j$ is the product of a bounded nonlinear scalar and a linear form.
By a standard region-counting argument (whose matching upper-bound construction is given by~\citet{yarotsky2017error}, Proposition~3), a depth-2 ReLU network approximating $f(a,b) = ab$ on a compact domain to accuracy $\epsilon$ requires $\Omega(1/\sqrt{\epsilon})$ neurons; the detailed argument is given in the proof of Theorem~\ref{thm:tdof-depth} (Part~(b), Appendix).
For each relational direction $l$, the FFN must implement $d_k$ such products (one per output coordinate), requiring $\Omega(d_k)$ neurons per direction.
With $d_{\mathrm{ff}}$ total hidden units, at most $\lfloor d_{\mathrm{ff}} / d_k \rfloor$ independent relational directions can be represented.

\textbf{Part (c).}
Direct construction: set $M_c = M_0$, use $L$ generator matrices $M_1, \ldots, M_L$, and gates $\gamma_l = \sigma_l(\mu(S))$.
Then $M(\mu(S)) = M_0 + \sum_{l=1}^L \sigma_l(\mu(S)) M_l = T(S)$, which exactly represents any $f \in \mathcal{F}_L$.
Parameter count: $(L+1)$ matrices of size $d_k \times d_k$ plus $L$ gate weight vectors of size $d_k$.

\textbf{Part (d).}
Let $f(S) = (M_0 + \sum_{l=1}^k \sigma_l(\mu(S)) M_l)\, \mu(S)$ with $\{M_l\}_{l=1}^k$ orthogonal in Frobenius norm ($\langle M_l, M_{l'}\rangle_F = 0$ for $l \neq l'$).
Any mechanism with relational capacity $R$ can represent an output transformation of the form $T_\theta(S) = M_0' + \sum_{l=1}^R \tau_l(S) N_l$ for some learned $\{N_l\}_{l=1}^R$ and scalar functions $\tau_l$.

The optimal $R$-capacity approximation chooses $M_0' = M_0$ and $\{N_l\}_{l=1}^R$ to be the projection of $\{M_l\}_{l=1}^k$ onto the best $R$-dimensional subspace (in the Frobenius metric, weighted by $\mathbb{E}[\sigma_l^2 \|\mu\|^2]$).
By the Eckart--Young--Mirsky theorem applied to the operator-valued approximation, the residual error is:
\[
 \mathbb{E}_S\bigl[\|f(S) - o_\theta(S)\|^2\bigr] \geq \sum_{l=R+1}^{k} \mathbb{E}_S\bigl[\sigma_l(\mu(S))^2 \cdot \|M_l \mu(S)\|^2\bigr],
\]
where we order the terms by $\mathbb{E}[\sigma_l^2 \|M_l \mu\|^2]$ in decreasing order and the mechanism captures the top $R$.
The bound is tight: it is achieved when $\{N_l\}$ aligns with the top-$R$ relational directions.
\end{proof}

\subsubsection{Proof of Theorem~\ref{thm:minimax} (Minimax Parameter Efficiency)}

\begin{proof}
\textbf{Part (a).} The MZ-attention head has parameters $M_c \in \R^{d_k\times d_k}$, $M_1, \ldots, M_L \in \R^{d_k\times d_k}$, and gate weights $w_1, \ldots, w_L \in \R^{d_k}$, totalling $(L+1)d_k^2 + Ld_k = \Theta(Ld_k^2)$. By Theorem~\ref{thm:width}(a), this realises every $f \in \mathcal{F}_{L,B}$ exactly.

\textbf{Part (b).}
We use a metric entropy argument.
The function class $\mathcal{F}_{L,B}$ is parameterized by $M_0, M_1, \ldots, M_L \in \R^{d_k \times d_k}$ with $\|M_l\|_F \leq B$, and Lipschitz gates $\sigma_l$.
We lower-bound the packing number by considering the subclass with fixed $M_0 = I$, fixed $\sigma_l = \mathrm{id}$ (identity, clipped to $[-1,1]$), and varying $M_1, \ldots, M_L$.

For two functions $f, f'$ with generator matrices $\{M_l\}$ and $\{M_l'\}$:
\[
 \sup_{\|x\| \leq R} |f(S_x) - f'(S_x)| \geq \sup_{\|x\| \leq R} \|\sum_l \sigma_l(x) (M_l - M_l') x\|.
\]
For the specific input $S_x = \{x, \ldots, x\}$ with $\mu = x$, and choosing $x = Re_j / \|e_j\|$ for the coordinate maximizing $\|(M_l - M_l') e_j\|$:
\[
 \|f - f'\|_\infty \geq c \cdot R \cdot \max_l \|M_l - M_l'\|_F
\]
for a universal constant $c > 0$ depending only on the dimension.

The $\delta$-packing number of $\{M \in \R^{d_k \times d_k} : \|M\|_F \leq B\}$ in Frobenius norm is at least $(B/\delta)^{d_k^2}$ (volume comparison in $\R^{d_k^2}$).
With $L$ independent generator matrices, the packing number of $\mathcal{F}_{L,B}$ at resolution $\epsilon$ is at least:
\[
 \mathcal{M}(\epsilon, \mathcal{F}_{L,B}) \geq \bigl(cBR/\epsilon\bigr)^{Ld_k^2}.
\]

Any model class parameterized by $\theta \in \R^P$ that $\epsilon$-approximates all of $\mathcal{F}_{L,B}$ must distinguish at least $\mathcal{M}(\epsilon)$ functions.
By a standard argument~\citep{yang1999information, devore1998nonlinear}, the parameter space $\R^P$ can cover at most $\exp(C P \log(1/\epsilon))$ $\epsilon$-cells (where $C$ depends on the model's Lipschitz properties in $\theta$).
Matching: $P \log(1/\epsilon) \geq \Omega(Ld_k^2 \log(BR/\epsilon))$, giving $P \geq \Omega(Ld_k^2 \log(BR/\epsilon) / \log(1/\epsilon))$.
Since $\log(BR/\epsilon) / \log(1/\epsilon) \geq 1$ for $BR \geq 1$, we obtain $P \geq \Omega(Ld_k^2)$.
Including the logarithmic precision factor: $P \geq \Omega(Ld_k^2 \log(BR/\epsilon))$.

\textbf{Part (c).}
MZ-attention uses $P_{\mathrm{MZ}} = (L+1)d_k^2 + Ld_k = \Theta(Ld_k^2)$ parameters and achieves $\epsilon = 0$ (exact representation).
The lower bound requires $P \geq \Omega(Ld_k^2)$ for any constant $\epsilon$.
Hence MZ-attention is order-optimal up to logarithmic factors.
\end{proof}

\subsubsection{Proof of Theorem~\ref{thm:separation} (Expressiveness Separation)}

We restate and prove both parts in full detail; the construction matches the inline proof in Section~\ref{app:additional-theory} and uses the signed-generator readout variant of Section~\ref{sec:architecture}.

\begin{proof}[Proof of part (a)]
We configure a single MZ-attention head as follows. Set the query projection $W_Q = 0 \in \R^{d_k \times d}$, so $\alpha_{ij} = \softmax_j(0) = 1/n$ for all $i,j$, yielding uniform aggregation $\hat c_i = \tfrac{1}{n}\sum_j W_V c_j$. With $W_V = I_d$, $\hat c_i = \mu(S)$.

Set $M_c = I_{d_k}$, $L = 1$, $M_1 = A$, gate weight $w_\gamma = e_1$, and mixing matrix $W_{\mathrm{mix}} = e_1 \in \R^{n_g\times 1}$ (use only the first generator slot). The context-dependent gate is $\gamma_1(\mu) = \tanh(e_1^\top \mu) = \tanh(\mu_1)$. By Eq.~\eqref{eq:mz-main}, the per-token output has center $o^c = M_c \hat c = \mu(S)$ and generator stream whose first row is $o^G[1,:] = \tanh(\mu_1)\,A\,\mu(S)$ with the remaining rows zero.

After PMA pooling (uniform aggregation by symmetry), the pooled center is $\mu(S)$ and the pooled signed first-generator is $\tanh(\mu_1)\,A\,\mu(S)$. Applying the signed-generator readout $W_{\mathrm{out}}: (c, g_1) \mapsto c + g_1$ yields $\mu(S) + \tanh(\mu_1)\,A\,\mu(S) = (I + \tanh(\mu_1)\,A)\,\mu(S) = f_A(S)$, exactly. Every operation in the construction is either a fixed linear map or the native MZ gate, both computed without finite-width truncation; no UAT-based ReLU approximation is invoked.
\end{proof}

\begin{proof}[Proof of part (b)]
An $H$-head standard attention layer without FFN computes, for a query $q$ (e.g., a PMA seed vector $s$):
\[
 o(S) = W_o \operatorname{concat}\!\bigl(W_V^{(1)} \hat{v}^{(1)},\, \ldots,\, W_V^{(H)} \hat{v}^{(H)}\bigr)
\]
where $\hat{v}^{(h)} = \sum_j \alpha_j^{(h)}(S)\, x_j$ and $\alpha_j^{(h)} = \softmax_j(s^{(h)\top} W_Q^{(h)\top} W_K^{(h)} x_j / \sqrt{d_k})$.

Step 1.
Consider the subfamily of constant sets $S_t = \{te_i, \ldots, te_i\}$ ($n$ copies of $te_i$) for a basis vector $e_i$ and scalar $t \in \R$.
By symmetry of the softmax with identical inputs, $\alpha_j^{(h)} = 1/n$ for all $j$ and $h$.
Hence $\hat{v}^{(h)} = W_V^{(h)} (te_i) = t W_V^{(h)} e_i$, and:
\[
 o(S_t) = W_o \operatorname{concat}(t W_V^{(1)} e_i,\, \ldots,\, t W_V^{(H)} e_i) = t \cdot W_o \operatorname{concat}(W_V^{(1)} e_i,\, \ldots,\, W_V^{(H)} e_i) =: t \cdot \tilde{w}_i.
\]
This is a linear function of $t$.

Step 2.
On the same inputs, $f_A(S_t) = (I + \tanh(t \delta_{i1}) A)(te_i) = te_i + t\tanh(t \delta_{i1}) Ae_i$, where $\delta_{i1} = [e_i]_1$.
For $i = 1$: $f_A(S_t) = te_1 + t\tanh(t) Ae_1$.

If $Ae_1 \neq 0$, the map $t \mapsto te_1 + t\tanh(t) Ae_1$ is nonlinear (since $t\tanh(t)$ is nonlinear and $Ae_1 \neq 0$ ensures this nonlinearity survives in at least one coordinate).
A linear function $t \mapsto t\tilde{w}_1$ cannot agree with this nonlinear function on any open interval.

Step 3.
If $Ae_1 = 0$, since $A \neq 0$ there exists $e_j$ with $Ae_j \neq 0$.
If $j = 1$, we already have a contradiction.
If $j \neq 1$, use $S_t = \{te_j + \epsilon e_1, \ldots, te_j + \epsilon e_1\}$ for small $\epsilon > 0$:
$\mu(S_t) = te_j + \epsilon e_1$, so $\tanh(\mu_1) = \tanh(\epsilon) \neq 0$, and $A\mu = tAe_j + \epsilon Ae_1$.
Since $Ae_j \neq 0$, the nonlinearity persists.
\end{proof}

\subsubsection{Proof of Theorem~\ref{thm:width} (Width Efficiency)}

\begin{proof}
Part (a) is immediate: the MZ parameters $M_c \in \R^{d_k \times d_k}$, $M_1, \ldots, M_L \in \R^{d_k \times d_k}$, and gate weights $w_1, \ldots, w_L \in \R^{d_k}$ total $(L+1)d_k^2 + Ld_k$.

For part (b), consider a single bilinear term $\sigma_l(h) \cdot (M_l h)_j$ that the FFN must compute, with $a = \sigma_l(h) \in [-1,1]$ and $b = (M_l h)_j \in [-B_l, B_l]$, $B_l = \|M_l\|_2 R$. The FFN must approximate $f(a, b) = ab$ on this rectangle to sup-norm error $\epsilon$.

By a standard region-counting argument for shallow ReLU approximation of strictly-convex bilinear functions (the matching upper-bound construction is~\citet{yarotsky2017error}, Proposition~3), any depth-2 ReLU network approximating $ab$ on $[-1,1] \times [-B_l, B_l]$ to sup-norm error $\epsilon$ requires $n = \Omega(B_l / \sqrt{\epsilon})$ hidden units. The standard underlying argument bounds a depth-2 ReLU network's piecewise-linear regions by $O(n^2)$ on $\R^2$ and applies the strict convexity of $ab$ ($\partial^2 f / \partial a \partial b = 1$): on a region of diameter $r$ the best affine approximant errs by $\Omega(r^2)$, and an $O(n^2)$-region partition of a $[-B_l, B_l]\times[-1,1]$ rectangle has at least one region of diameter $\Omega(B_l / n)$, giving worst-case error $\Omega(B_l^2 / n^2)$ and hence $n = \Omega(B_l/\sqrt{\epsilon})$.

Summing over $L$ generators and $d_k$ output coordinates: $d_{\mathrm{ff}} \geq \sum_l d_k \cdot \Omega(B_l / \sqrt{\epsilon}) = \Omega(L d_k B / \sqrt{\epsilon})$ where $B = \max_l B_l$.
For unit-norm inputs ($B = O(1)$): $d_{\mathrm{ff}} = \Omega(L d_k / \sqrt{\epsilon})$.
\end{proof}

\subsubsection{Proof of Theorem~\ref{thm:sensitivity} (Directional Sensitivity)}

\begin{proof}
(a) Standard attention output: $o = W_o \hat{v}$ with $\hat{v} = \sum_j \alpha_{ij} v_j$.
Under perturbation $\tilde{\hat{v}} = \hat{v} + \hat{G}\xi$:
$\|\Delta o\| = \|W_o \hat{G}\xi\| \leq \|W_o\|_2 \|\hat{G}\xi\| \leq \|W_o\|_2 \sum_k \|\hat{G}_k\| \cdot |\xi_k| \leq \|W_o\|_2 \|\hat{G}\|_{1,2}$.
This is a scalar (norm) bound: all directions in $\R^{d_k}$ are treated equally.

(b) MZ-attention output: $\Delta o = M_c \hat{G}\xi$.
As $\xi$ ranges over $[-1,1]^{n_g}$, $\Delta o$ traces the zonotope $\mathcal{Z}_\Delta = \{M_c \hat{G}\xi : \|\xi\|_\infty \leq 1\}$ with generators $\{M_c \hat{G}_k\}_{k=1}^{n_g}$.
The vertex count $2\binom{n_g}{d_k-1}$ is a standard result from zonotope geometry~\citep{Althoff2010PhD}.

(c) Every point in $\mathcal{Z}_\Delta$ satisfies $\|z\| \leq \sum_k \|M_c \hat{G}_k\|_2 \leq \|M_c\|_2 \|\hat{G}\|_{1,2}$, so $\mathcal{Z}_\Delta \subseteq B(\|M_c\|_2 \|\hat{G}\|_{1,2})$.
For directions $u \in \R^{d_k}$ orthogonal to the column span $\mathrm{span}\{M_c \hat{G}_k\}_{k=1}^{n_g}$, the support function $h_{\mathcal{Z}_\Delta}(u) = \sum_k |u^\top M_c \hat{G}_k| = 0 < \|M_c\|_2 \|\hat{G}\|_{1,2} = h_B(u)$, so the MZ certificate is strictly tighter than the ball whenever the column span is a proper subspace.
The volume of a zonotope with generators $g_1, \ldots, g_{n_g} \in \R^{d_k}$ is~\citep{Althoff2010PhD}
\[
\mathrm{vol}(\mathcal{Z}_\Delta) = 2^{d_k} \sum_{|I| = d_k} |\det([g_i]_{i \in I})|,
\]
which is strictly smaller than $\mathrm{vol}(B)$ for $d_k \geq 2$ since the zonotope is a proper inscribed subset of the ball; for typical trained generators with unequal norms and non-orthogonal directions, $\mathrm{vol}(\mathcal{Z}_\Delta)$ is empirically several orders of magnitude smaller than $\mathrm{vol}(B)$, reflecting the anisotropic structure of the model's sensitivity.
\end{proof}

\subsubsection{Proof of Proposition~\ref{prop:mixing-app} (Generator Mixing)}

\begin{proof}
Let $Z_{n_g+L}$ have generators $\{g_1, \ldots, g_{n_g}, g'_1, \ldots, g'_L\}$ in general position.
A point in $Z_{n_g+L}$ is $p = c + \sum_{k=1}^{n_g} g_k \zeta_k + \sum_{l=1}^{L} g'_l \zeta'_l$ with $|\zeta_k|, |\zeta'_l| \leq 1$.
A point in $\tilde{Z}_{n_g}$ is $\tilde{p} = c + \sum_{k=1}^{n_g} (g_k + \sum_l w_{kl} g'_l) \tilde{\zeta}_k$ with $|\tilde{\zeta}_k| \leq 1$.

Expanding: $\tilde{p} = c + \sum_k g_k \tilde{\zeta}_k + \sum_l (\sum_k w_{kl} \tilde{\zeta}_k) g'_l$.

(a) $\tilde{Z}_{n_g} \subseteq Z_{n_g+L}$:
Set $\zeta_k = \tilde{\zeta}_k$ and $\zeta'_l = \sum_k w_{kl} \tilde{\zeta}_k$.
We need $|\zeta'_l| = |\sum_k w_{kl} \tilde{\zeta}_k| \leq \sum_k |w_{kl}| \cdot |\tilde{\zeta}_k| \leq \sum_k |w_{kl}|$.
If $\sum_k |w_{kl}| \leq 1$ for all $l$, then $|\zeta'_l| \leq 1$ and $\tilde{p} \in Z_{n_g+L}$.

(b) $Z_{n_g+L} \subseteq \tilde{Z}_{n_g}$:
Given a point $p \in Z_{n_g+L}$ with parameters $(\zeta, \zeta')$, we seek $\tilde{\zeta}$ with $|\tilde{\zeta}_k| \leq 1$ matching both $g_k$ and $g'_l$ coefficients.
By general position, coefficient matching requires $\tilde{\zeta}_k = \zeta_k$ (from $g_k$) and $\sum_k w_{kl} \zeta_k = \zeta'_l$ (from $g'_l$).
For the latter to hold for all $\zeta'_l \in [-1,1]$, the range of $\sum_k w_{kl} \zeta_k$ (over $|\zeta_k| \leq 1$) must contain $[-1,1]$.
This range is $[-\sum_k |w_{kl}|, \sum_k |w_{kl}|]$, so containment requires $\sum_k |w_{kl}| \geq 1$.

(c) Combining (a) and (b): equality holds when $\sum_k |w_{kl}| = 1$ for all $l$.
\end{proof}

\subsubsection{Proof of Theorem~\ref{thm:ihw-calibration} (IHW Uncertainty Bound)}

\begin{proof}
\textbf{Part (a).}
The output zonotope is $Z_{\mathrm{out}} = \{c_{\mathrm{out}} + G_{\mathrm{out}}\xi : \|\xi\|_\infty \leq 1\}$.
Under a linear readout $W_{\mathrm{out}} \in \R^{d_k}$, the output is:
\[
 y(\xi) = W_{\mathrm{out}}^\top (c_{\mathrm{out}} + G_{\mathrm{out}}\xi) = W_{\mathrm{out}}^\top c_{\mathrm{out}} + \sum_{g=1}^{n_g} (W_{\mathrm{out}}^\top G_g) \xi_g.
\]
This is a scalar zonotope (interval) with center $W_{\mathrm{out}}^\top c_{\mathrm{out}}$ and half-width $\sum_g |W_{\mathrm{out}}^\top G_g|$.
Hence:
\[
 \mathrm{diam}(\mathcal{Y}) = 2\sum_{g=1}^{n_g} |W_{\mathrm{out}}^\top G_g| \leq 2 \sum_g \|W_{\mathrm{out}}\|_2 \|G_g\|_2 \leq 2\|W_{\mathrm{out}}\|_2 \sum_g \|G_g\|_1 = 2\|W_{\mathrm{out}}\|_2 \cdot \mathrm{IHW}.
\]

\textbf{Part (b).}
Since $\xi_1, \ldots, \xi_{n_g}$ are i.i.d.\ uniform on $[-1,1]$:
$\mathbb{E}[\xi_g] = 0$, $\mathbb{E}[\xi_g^2] = 1/3$, and $\mathbb{E}[\xi_g \xi_{g'}] = 0$ for $g \neq g'$.
The variance of $y(\xi) = W_{\mathrm{out}}^\top c_{\mathrm{out}} + \sum_g (W_{\mathrm{out}}^\top G_g) \xi_g$ is:
\[
 \mathrm{Var}[y] = \sum_{g=1}^{n_g} (W_{\mathrm{out}}^\top G_g)^2 \cdot \mathrm{Var}[\xi_g] = \frac{1}{3} \sum_g (W_{\mathrm{out}}^\top G_g)^2.
\]
By the Cauchy--Schwarz inequality applied coordinate-wise:
$(W_{\mathrm{out}}^\top G_g)^2 \leq \|W_{\mathrm{out}}\|_2^2 \|G_g\|_2^2 \leq \|W_{\mathrm{out}}\|_2^2 \|G_g\|_1^2$.
Hence $\sum_g (W_{\mathrm{out}}^\top G_g)^2 \leq \|W_{\mathrm{out}}\|_2^2 \sum_g \|G_g\|_1^2 \leq \|W_{\mathrm{out}}\|_2^2 (\sum_g \|G_g\|_1)^2 = \|W_{\mathrm{out}}\|_2^2 \cdot \mathrm{IHW}^2$,
where the last inequality uses $\sum a_i^2 \leq (\sum a_i)^2$ for $a_i \geq 0$.

\textbf{Part (c).}
From part (a), $\mathrm{diam}(\mathcal{Y}(S)) = 2\|W_{\mathrm{out}}\|_2 \cdot \mathrm{IHW}(Z_{\mathrm{out}}(S))$.
Since $\|W_{\mathrm{out}}\|_2$ is shared between inputs (it is a model parameter, not input-dependent), the monotonicity in IHW directly implies monotonicity in the reachable diameter.
\end{proof}

\subsubsection{Proof of Theorem~\ref{thm:tdof-depth} (Depth Separation)}
\label{app:proof-tdof}

\begin{proof}
We prove the depth lower bound on the input distribution where $\mu(S) \in \R^d$ has i.i.d.\ standard-Gaussian coordinates, using a compositional-orthogonality argument in $L^2(\mu)$ together with the explicit worst-case construction of Lemma~\ref{lem:comp-orth}.

\textbf{Setup.} Let $f(S) = (\sum_{l=1}^k \sigma_l(\mu(S))\, M_l)\,\mu(S)$ be the witness of Lemma~\ref{lem:comp-orth}: $\sigma_l(\mu) := \tanh(\mu_l)$, $M_l := e_l e_d^\top$ for $l=1,\dots,k$ with $k \le d-1$, and $M_0 = 0$. Each $M_l$ is rank-1 with Frobenius-orthonormal $\{M_l\}_{l=1}^k$ (Lemma~\ref{lem:comp-orth}(a)). The choice $M_0 = 0$ ensures the network's linear path $\mu \mapsto M_0 \mu$ contributes no free target-direction component on $\mathcal{V}$, so the depth lower bound is determined by the FFN multiplicative budget alone. We view $f$ as an element of the tensor-product space $L^2(\mu) \otimes \R^{d \times d}$ with inner product $\langle a\otimes A, b\otimes B\rangle := \mathbb{E}_\mu[a\,b]\,\langle A,B\rangle_F$. The target directions $\{\sigma_l \otimes M_l\}_{l=1}^k$ are mutually orthogonal in this inner product (independent $\mu_l$'s make $\mathbb{E}[\sigma_l\sigma_{l'}] = 0$ for $l \ne l'$, and Frobenius orthonormality handles the matrix factor), so they span a $k$-dimensional subspace $\mathcal{V}$.

\textbf{Layer-by-layer FFN bilinear count.} A $D$-layer standard attention network with residual connections and depth-$2$ ReLU FFN of width $d_{\mathrm{ff}}$ computes
\[
 z^{(\ell+1)} = z^{(\ell)} + \mathrm{FFN}^{(\ell)}\bigl(z^{(\ell)} + \hat v^{(\ell)}\bigr),\quad z^{(0)}=\mu.
\]
A bilinear term in $\mathcal{V}$ has the form $\sigma_l(\mu)\otimes M_l$, requiring the FFN to multiply a gate $\sigma_l$ by the linear coordinate map $\mu \mapsto M_l \mu$. By Theorem~\ref{thm:bottleneck}(b), each multiplicative bilinear coordinate of this kind requires $\Omega(d_k)$ ReLU neurons (Yarotsky bilinear lower bound), so a single layer's FFN of width $d_{\mathrm{ff}}$ contributes at most $\lfloor d_{\mathrm{ff}}/d_k\rfloor$ new directions that lie in $\mathcal{V}$.

\textbf{Cross-layer compositions vanish on the matrix side.} After $\ell$ layers, the residual representation can additionally contain monomial composition terms of the form $P(\sigma)\otimes(M_{l_1}\cdots M_{l_r})$ where $r\ge 2$ comes from successive FFNs reading earlier-layer bilinear outputs and re-multiplying them through linear maps. By Lemma~\ref{lem:comp-orth}(b)--(c), under the construction $M_l = e_l e_d^\top$ every such matrix product with $r\ge 2$ is identically zero, so the entire term vanishes in the tensor-product space and therefore has zero projection on $\mathcal{V}$. Hence cross-layer composition cannot bootstrap any new target direction; only the layer-wise FFN bilinear count contributes.

\textbf{Counting.} After $D$ layers the network has captured at most $\Lambda_D \le D\cdot\lfloor d_{\mathrm{ff}}/d_k\rfloor$ of the $k$ target directions in $\mathcal{V}$. The remaining $k-\Lambda_D$ directions contribute an irreducible $L^2(\mu)$ error (by the Eckart--Young residual of Theorem~\ref{thm:bottleneck}(d)) bounded below by $\Omega((k-\Lambda_D)/k)\cdot\mathrm{Var}(f)$. For exact representation we need $\Lambda_D = k$, giving $D \ge k\,d_k/d_{\mathrm{ff}}$. For approximate representation in $L^2(\mu)$ with residual at most $O(\mathrm{Var}(f)/L)$, achieving this residual requires $k - \Lambda_D = O(k/L)$ uncaptured directions, hence $\Lambda_D \ge k(1 - O(1/L))$, giving $D \ge \Omega(k\,d_k/d_{\mathrm{ff}})$ up to constants since each missing direction contributes $\Theta(\mathrm{Var}(f)/k)$.

\textbf{Single MZ-attention layer with signed-generator readout.} Setting $M_c = M_0 = 0$ and using the $k$ generators $M_1,\ldots,M_k$ of $f$ with gates $\gamma_l(\mu)=\sigma_l(\mu)$, Eq.~\eqref{eq:mz-main} produces a per-token output whose center is $0$ and whose generator stream contains $\gamma_l(\mu)\, M_l\, \mu$ as the $l$-th signed generator slot. After PMA pooling (uniform aggregation, as in the construction of Theorem~\ref{thm:separation}), the signed-generator readout $W_{\mathrm{out}}: (c, g_1, \ldots, g_k) \mapsto \sum_{l=1}^k g_l$ recovers $f(S) = \sum_l \sigma_l(\mu) M_l \mu$ exactly. The IHW readout (Section~\ref{sec:architecture}) gives only $\sum_l |\sigma_l(\mu)| \cdot \|M_l \mu\|$-bounded scalar information rather than the signed combination, so it does not extract $f(S)$ exactly; the expressiveness statement is for the signed-readout variant.
\end{proof}

\subsubsection{Proof of Proposition~\ref{prop:perm} (Permutation Equivariance)}

\begin{proof}
Let $\pi$ be a permutation of $\{1, \ldots, n\}$ and denote the permuted set as $\pi(S) = \{x_{\pi(1)}, \ldots, x_{\pi(n)}\}$.

MZ-MAB equivariance:
The attention scores satisfy $\alpha_{i, \pi(j)}(\pi(S)) = \alpha_{ij}(S)$ because they depend on pairwise dot products $q_i^\top k_j$ which are permutation-equivariant.
The aggregation $\hat{c}_i = \sum_j \alpha_{ij} v_j$ is thus equivariant: permuting the set permutes the query outputs.
The MZ transform (Eq.~\ref{eq:mz-main}) operates on each query independently.
The gate $\gamma = \tanh(W_\gamma \bar{c})$ depends on the set mean $\bar{c} = \frac{1}{n}\sum_j c_j$, which is permutation-invariant.
Hence MZ-MAB is equivariant: $\text{MZ-MAB}(\pi(Q), \pi(KV)) = \pi(\text{MZ-MAB}(Q, KV))$.

MZ-SAB inherits equivariance since $Q = KV = S$.

Permutation invariance of the full model:
MZ-PMA uses learnable seed vectors $s_1, \ldots, s_k$ as queries, which are independent of the input set.
Thus $\text{MZ-PMA}(\pi(S)) = \text{MZ-PMA}(S)$, and the output is permutation-invariant.
\end{proof}

\section{Extended Related Work}
\label{app:related-extended}

\paragraph{Zonotopes as external verification tools vs.\ as internal representations.}
Zonotopes have a long history in formal verification and abstract interpretation of neural networks~\citep{bonaert2021fast, mirman2018differentiable, jordan2022zonotope, chung2025provably}, where they are computed post-hoc around a trained model's activations to certify robustness, and in control for data-driven reachability~\citep{alanwar2021data, Alanwar2023Datadriven, kochdumper2023constrained}.
To our knowledge \methodname{} is the first architecture to make the matrix-zonotope an internal, trainable representation of the value-projection family: center and generator matrices are jointly optimised by gradient descent, and the zonotope structure participates in every forward pass rather than being wrapped around a frozen model. Concurrent and prior work has explored related ideas in adjacent spaces (e.g.\ structured low-rank reparameterisations and gated value updates), but we are not aware of an internal trainable matrix-zonotope value family with the gating, mixing, and budget-control structure of Eq.~\eqref{eq:mz-main}.

\paragraph{Set-input networks.}
Deep Sets~\citep{zaheer2017deep}, PointNet~\citep{qi2017pointnet}, PointNet++~\citep{qi2017pointnetpp}, and Set Transformer~\citep{lee2019set} established permutation invariance and attention-based pooling as standard tools for set inputs; PointNet++ in particular adds hierarchical local-region feature aggregation but still composes fixed per-point MLPs with set pooling, so its value transformation remains context-rigid in our sense.
All of these architectures express relationships between elements via scalar scores (softmax weights or learned aggregations) that multiply fixed linear projections of inputs; \methodname{} generalizes this to matrix-valued context-adaptive operators, which is precisely what high-TDOF tasks require.

\paragraph{Context-adaptive linear maps.}
Hypernetworks~\citep{ha2017hypernetworks, hyper2025attention} and dynamic filter networks~\citep{jia2016dynamic} generate full $d_k^2$-parameter weight updates per sample, giving unlimited expressiveness but paying a quadratic cost that makes the effective rank hard to control.
FiLM~\citep{perez2018film} restricts to diagonal (scale-and-shift) modulation, which is provably insufficient for off-diagonal structure such as rotations or covariance-dependent maps.
Mixture-of-Experts~\citep{shazeer2017outrageously} routes between a discrete set of fixed experts and is therefore piecewise-constant in context.
The matrix-zonotope parameterization of \methodname{} occupies a middle ground: $O(Ld_k^2)$ parameters with $L$ continuous gates realize a convex, rank-controlled family of linear operators that is minimax-optimal for context-adaptive linear maps (Theorem~\ref{thm:minimax}) while retaining the geometric and uncertainty structure of the zonotope family.

\paragraph{Equivariant architectures.}
EGNN~\citep{satorras2021egnn}, SchNet~\citep{schutt2017schnet}, and similar E($n$)- or SO($n$)-equivariant architectures achieve very strong performance on rotation- and translation-invariant targets by hard-coding the symmetry into the forward pass.
On such tasks \methodname{} is at a structural disadvantage (it does not bake in the symmetry) and our experiments confirm this.
The complementary regime is tasks whose target depends on a specific coordinate frame, e.g.\ the quadratic TDOF task of Section~\ref{sec:tdof}, where equivariance is actively harmful: invariant outputs cannot distinguish the target from its rotations.
\methodname{} is the right tool when the task's symmetry is unknown a priori and cannot be built in by construction.

\paragraph{Uncertainty quantification.}
MC-Dropout~\citep{gal2016dropout} and Deep Ensembles~\citep{lakshminarayanan2017simple} obtain predictive uncertainty by multiple stochastic forward passes; \methodname{}'s zonotope geometry (Theorem~\ref{thm:ihw-calibration}) yields a single-pass uncertainty estimate via the interval-hull width of the output zonotope, at $3.7\times$ lower inference cost than the baselines (Appendix~\ref{app:uncertainty-generalization}, Table~\ref{tab:unc-comparison}).

\section{Additional Experiments}
\label{app:additional-exp}

This section reports the full battery of additional experiments referenced from the main body. It is organised into five thematic blocks: \textbf{D.1} architectural ablations isolating which design choice drives the gain; \textbf{D.2} robustness and generalisation studies; \textbf{D.3} uncertainty calibration; \textbf{D.4} high-dimensional and large-scale scaling on computational-geometry targets; and \textbf{D.5} real-world transfer studies plus the equivariant baseline comparison.

\subsection{Architecture and ablation studies}
\label{app:architecture-ablations}

Studies that isolate which architectural choice in \methodname{} (generator count, generator-info routing, low-rank update form, depth) drives the empirical advantage. They support the conclusion of Section~\ref{sec:analysis}: the centre-plus-gated-generator structure, not raw capacity, is what drives the gain.

\begin{figure}[h]
 \centering
 \includegraphics[width=\linewidth]{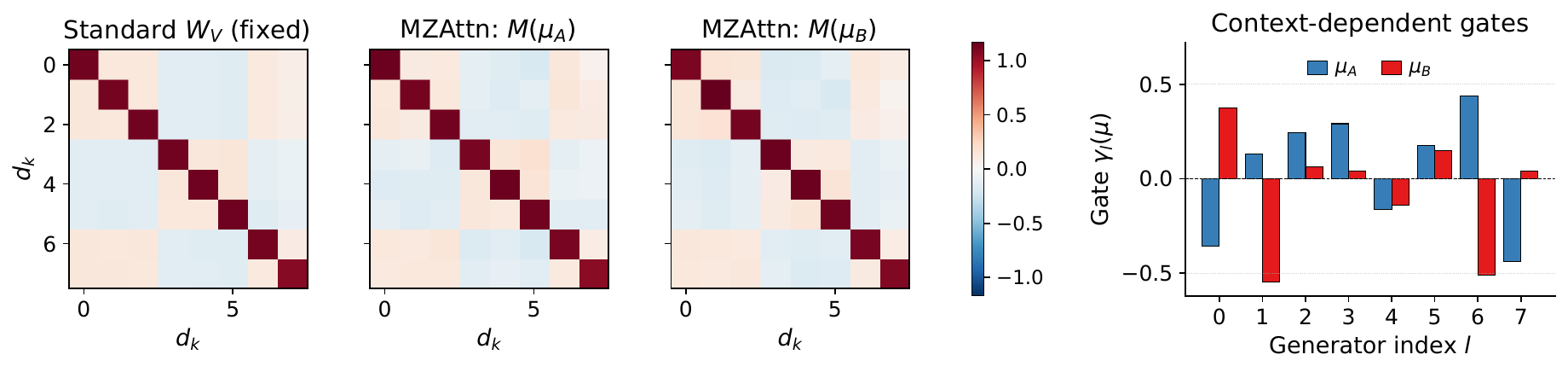}
 \caption{\textbf{Effective value-transformation operator (head 0) on the quadratic-TDOF task.} Standard attention uses one fixed $W_V$ regardless of context (left); \methodname{} produces a context-specific $M(\mu) = M_c + \sum_l \gamma_l(\mu)\, M_l$ for each input set (centre two panels). The context-dependent gates $\gamma_l(\mu)$ (right) drive the adaptation; the resulting Frobenius-norm change is $\|M(\mu_A) - M(\mu_B)\|_F / \|M_c\|_F = 8.0\%$.}
 \label{fig:attn-viz}
\end{figure}

\paragraph{Per-sample operator adaptation is non-trivial.}
Visualising $M(\mu_A), M(\mu_B)$ on two contexts at head~$0$ (Figure~\ref{fig:attn-viz}), the relative operator change is $\|M(\mu_A) - M(\mu_B)\|_F / \|M_c\|_F = 8.0\%$ and the eight gates $\gamma_l(\mu)$ flip sign on five of eight generators between contexts, ruling out gate collapse to constants.

\begin{figure}[h]
 \centering
 \includegraphics[width=\linewidth]{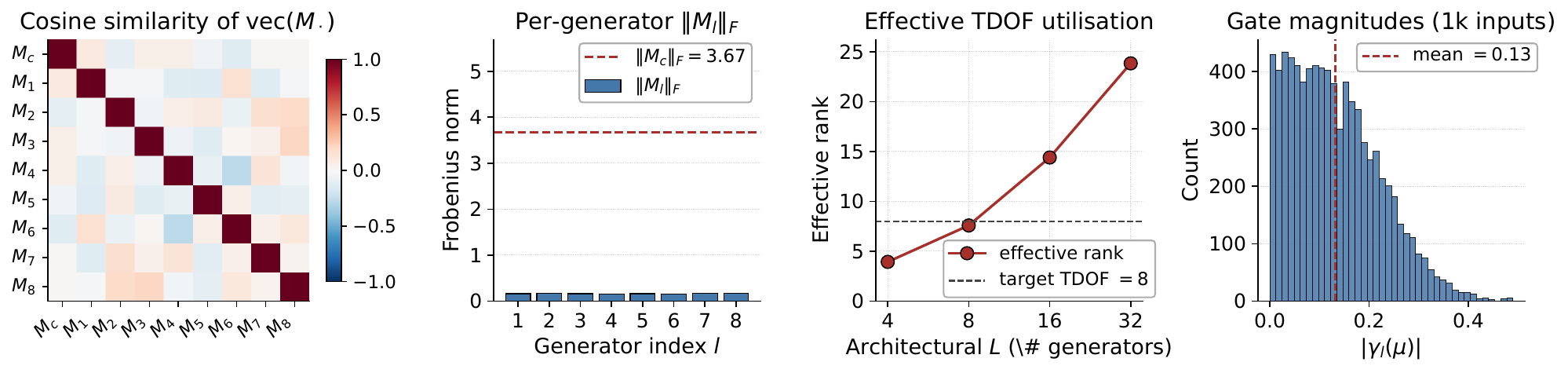}
 \caption{\textbf{Mechanistic anatomy of a trained \methodname{} layer ($L{=}8$, head 0, $1{,}000$ probe contexts), from left to right.} \textbf{Cosine similarity} of $\operatorname{vec}(M_c)$ and $\operatorname{vec}(M_l)$: generator directions are near-orthogonal (mean off-diagonal $|\rho|<0.01$). \textbf{Per-generator Frobenius norms} cluster in $[0.146, 0.169]$ (CV $5.1\%$), each $\sim 1/22$ of $\|M_c\|_F$. \textbf{Effective rank} of $\{M(\mu_i) - \bar M\}_i$ as architectural budget $L$ varies on a fixed TDOF-$8$ task: saturates the budget when $L \leq \mathrm{TDOF}$, grows sublinearly above. \textbf{Gate magnitudes} $|\gamma_l(\mu)|$ over $L \times 1000$ evaluations: $78.6\%$ exceed $0.05$, mean $0.13$, p95 $0.29$.}
 \label{fig:mz-analysis}
\end{figure}

\paragraph{Trained generator basis + TDOF utilisation.}
Probing a trained $L{=}8$ checkpoint with $1{,}000$ test contexts (Figure~\ref{fig:mz-analysis}, cosine, Frobenius-norm, and gate panels): generators are near-orthogonal in operator space, comparably sized, and gates active on $78.6\%$ of evaluations --- no dead generators or constant-bias collapse. Sweeping $L \in \{4, 8, 16, 32\}$ at target TDOF $=8$, the soft effective rank of $200$ extracted operators saturates the budget when $L \leq \mathrm{TDOF}$ and grows sublinearly above (effective-rank panel). The trained model allocates operator-space variation up to but not beyond the intrinsic TDOF, with $L$ a soft upper bound.

\subsubsection{MZ Generator Budget}

\begin{table}[h]
 \centering
 \caption{Ablation on MZ generator matrices $n_{\mathrm{mz}}$ (mean$\pm$std, 3 seeds). Per-seed JSONs for all $n_{\mathrm{mz}} \in \{1, 2, 4, 8, 16, 32\}$ are in \texttt{gen\_ablation.json}.}
 \label{tab:gen-ablation}
 \small
 \setlength{\tabcolsep}{4pt}
 \begin{tabular}{@{}ccccc@{}}
 \toprule
 & \multicolumn{2}{c}{$L = 16$} & \multicolumn{2}{c}{$L = 32$} \\
 \cmidrule(lr){2-3}\cmidrule(lr){4-5}
 $n_{\mathrm{mz}}$ & $R^2$ & Params & $R^2$ & Params \\
 \midrule
 1 & $\mathbf{0.978 \pm 0.004}$ & 247K & $0.937 \pm 0.014$ & 247K \\
 2 & $0.976 \pm 0.012$ & 252K & $0.959 \pm 0.019$ & 252K \\
 4 & $0.956 \pm 0.011$ & 260K & $\mathbf{0.961 \pm 0.007}$ & 260K \\
 8 & $0.960 \pm 0.011$ & 278K & $0.949 \pm 0.021$ & 278K \\
 16 & $0.948 \pm 0.016$ & 313K & $0.923 \pm 0.011$ & 313K \\
 32 & $0.943 \pm 0.002$ & 383K & $0.907 \pm 0.010$ & 383K \\
 \bottomrule
 \end{tabular}
\end{table}

Even $n_{\mathrm{mz}} = 1$ exceeds deep standard Set Transformers.

\paragraph{Practical guidelines for $L$ and $n_g$.}
The two main hyperparameters of \methodname{} are the number of MZ generator matrices $L$ (controlling relational capacity) and the zonotope generator budget $n_g$ (controlling per-token uncertainty resolution).
Based on the ablation studies in Tables~\ref{tab:ablation} and~\ref{tab:gen-ablation}, we recommend:
\begin{itemize}
 \item \textbf{MZ generators $L$:} Start with $L = 2$--$4$.
 Even $L = 1$ substantially outperforms standard attention on high-TDOF tasks (Table~\ref{tab:gen-ablation}).
 Increasing $L$ beyond 4 adds parameters ($L d_k^2$ per head per layer) with diminishing returns unless the task's TDOF is known to be high.
 For tasks where the TDOF can be estimated (e.g., $d$ for Chamfer matching, $d(d{+}1)/2$ for Mahalanobis-type operators), setting $L$ to match the estimated TDOF is theoretically motivated by Theorem~\ref{thm:minimax}.
 \item \textbf{Zonotope generators $n_g$:} Start with $n_g = 4$--$8$.
 Table~\ref{tab:ablation} shows that $n_g = 4$ peaks for set matching; larger budgets overfit.
 Since $n_g$ primarily controls the resolution of the uncertainty envelope rather than the relational capacity, moderate values suffice; the output head aggregates generators via interval hull width regardless of $n_g$.
 \item \textbf{Scaling regime:} When computational budget is constrained, prioritize $L$ over $n_g$: the MZ generators directly determine the model's relational capacity (Theorem~\ref{thm:bottleneck}(c)), whereas $n_g$ affects uncertainty granularity, which is less critical for prediction quality.
\end{itemize}

\subsubsection{Generator Ablation}

\begin{table}[h]
 \centering
 \small
 \setlength{\tabcolsep}{4pt}
 \caption{Ablation on zonotope generators $n_g$ (set matching, seed 42).}
 \label{tab:ablation}
 \begin{tabular}{@{}cccc@{}}
 \toprule
 $n_g$ & MSE $\downarrow$ & MAE $\downarrow$ & $R^2$ $\uparrow$ \\
 \midrule
 2 & 45.80 & 3.50 & 0.967 \\
 \textbf{4} & \textbf{26.74} & \textbf{3.33} & \textbf{0.981} \\
 8 & 32.46 & 3.39 & 0.977 \\
 16 & 55.05 & 4.47 & 0.961 \\
 \bottomrule
 \end{tabular}
\end{table}

Performance peaks at $n_g = 4$ and degrades for larger budgets due to overfitting.

\subsubsection{Generator Info in Q/K Scoring}
\label{app:genqk}

\begin{table}[h]
 \centering
 \small
 \setlength{\tabcolsep}{4pt}
 \caption{Ablation: incorporating generator information (IHW) into Q/K scoring. MZ-GenQK adds projections $W_q^{\mathrm{gen}}, W_k^{\mathrm{gen}}$ of per-token interval hull widths into the attention score computation.}
 \label{tab:genqk}
 \begin{tabular}{@{}lccccr@{}}
 \toprule
 & \multicolumn{2}{c}{Task A: Set Regr.} & \multicolumn{2}{c}{Task B: Set Match.} & \\
 \cmidrule(lr){2-3} \cmidrule(lr){4-5}
 Model & MSE $\downarrow$ & $R^2$ $\uparrow$ & MSE $\downarrow$ & $R^2$ $\uparrow$ & Params \\
 \midrule
 \methodname{} (centers-only Q/K) & 2.19{\tiny$\pm$0.09} & 0.879{\tiny$\pm$0.005} & \textbf{55.0}{\tiny$\pm$10.5} & \textbf{0.961}{\tiny$\pm$0.008} & 365K \\
 MZ-GenQK (gen info in Q/K) & \textbf{1.64}{\tiny$\pm$0.13} & \textbf{0.909}{\tiny$\pm$0.007} & 68.8{\tiny$\pm$5.6} & 0.951{\tiny$\pm$0.004} & 415K \\
 \bottomrule
 \end{tabular}
\end{table}

Incorporating generator information into Q/K scoring yields task-dependent results: on Task~A, MZ-GenQK improves $R^2$ from $0.879$ to $0.909$ ($+0.030$ absolute, $+3.4\%$ relative), but on Task~B it degrades MSE from $55.0$ to $68.8$ ($+25\%$). The mixed outcome, combined with the additional ${\sim}50$K parameters, indicates that centers-only scoring is a reasonable default. The benefit may arise when the prediction depends on generator structure (as in regression), while for matching tasks the center already encodes sufficient information for routing attention.

\subsubsection{HyperNet Rank Ablation}
\label{app:hypernet-rank}

\begin{table}[h]
 \centering
 \small
 \setlength{\tabcolsep}{4pt}
 \caption{HyperNet rank ablation (mean $\pm$ std over 3 seeds). All models are parameter-matched at ${\sim}$360--370K. Increasing the low-rank update dimension does not close the gap to \methodname{}; on Task~B, performance degrades monotonically.}
 \label{tab:hypernet-rank}
 \begin{tabular}{@{}lccccr@{}}
 \toprule
 & \multicolumn{2}{c}{Task A: Set Regr.} & \multicolumn{2}{c}{Task B: Set Match.} & \\
 \cmidrule(lr){2-3} \cmidrule(lr){4-5}
 Model & MSE $\downarrow$ & $R^2$ $\uparrow$ & MSE $\downarrow$ & $R^2$ $\uparrow$ & Params \\
 \midrule
 HyperNet ($r{=}8$) & 3.51{\tiny$\pm$0.01} & 0.806{\tiny$\pm$0.001} & 76.1{\tiny$\pm$5.7} & 0.945{\tiny$\pm$0.004} & 360K \\
 HyperNet ($r{=}16$) & 3.50{\tiny$\pm$0.02} & 0.807{\tiny$\pm$0.001} & 95.9{\tiny$\pm$8.0} & 0.931{\tiny$\pm$0.006} & 370K \\
 HyperNet ($r{=}32$) & 3.54{\tiny$\pm$0.02} & 0.805{\tiny$\pm$0.001} & 133.7{\tiny$\pm$8.0} & 0.904{\tiny$\pm$0.006} & 359K \\
 \midrule
 \methodname{} & \textbf{2.19}{\tiny$\pm$0.09} & \textbf{0.879}{\tiny$\pm$0.005} & \textbf{55.0}{\tiny$\pm$10.5} & \textbf{0.961}{\tiny$\pm$0.008} & 365K \\
 \bottomrule
 \end{tabular}
\end{table}

Increasing the HyperNet rank from 8 to 32 does not improve performance: Task~A $R^2$ stays flat at ${\sim}0.806$, while Task~B MSE increases from 76.1 to 133.7, a $76\%$ degradation. This indicates that the advantage of \methodname{} does not stem from having a higher-rank dynamic update, but from the structured zonotope parameterization: the matrix zonotope's center-plus-generators decomposition, context-dependent gating, and generator mixing provide an inductive bias that generic low-rank hypernetworks cannot replicate regardless of rank.

\subsubsection{HyperNet Rank Extended (Collapse Tasks)}
\label{app:hypernet-rank-ext}

The main HyperNet rank ablation (Table~\ref{tab:hypernet-rank}) sweeps $r \in \{8, 16, 32\}$ on the set-regression and set-matching tasks. We extend it to the collapse-exhibiting tasks, MEB, Hull, and MST at high $d$, with $r \in \{16, 32, 64\}$ (where $r{=}64$ matches the key dimension $d_k = 64$, giving full-rank capacity to the low-rank update $U(\mu) V_{\mathrm{hyper}}$).

The full numerical content is reported in main-body Table~\ref{tab:hypernet-rank-ext}.
The trend is strikingly flat: across all ranks $r \in \{16, 32, 64\}$ on the hull and MEB cells, HyperNet's $R^2$ changes by at most $0.020$ (MEB $d{=}32$), with most cells changing by under $0.015$. The MST cells show somewhat larger inter-rank variation ($\leq 0.29$ at $d{=}128$), but every cell's three rank values fall well below their corresponding seed-pool envelope and well below \methodname{}-Full's $R^2$ on the same task. Ranking up the low-rank update $U(\mu) V_{\mathrm{hyper}}$ from $r{=}8$ (main table) to $r{=}64$ (full key-dimension rank) does not close the gap to \methodname{}: the largest HyperNet R² on each task remains $0.6$+ R² behind \methodname{}-Full (Hull $d{=}3$: best HyperNet $0.204$ vs.\ \methodname{}-Full $0.804$; MEB $d{=}8$: best HyperNet $0.022$ vs.\ \methodname{}-Full $0.691$). Combined with the main Table~\ref{tab:hypernet-rank}, this provides evidence that the advantage of \methodname{} is architectural (the zonotope structure $M_c + \sum_l \gamma_l M_l$) and not a capacity argument about generic input-dependent weight generation.

\subsubsection{Parameter Efficiency Ablation}
\label{app:mz-efficient}

To address the concern that \methodname{}'s advantage might stem from its larger parameter count (${\sim}$2$\times$ Standard), we ablate the two parameters controlling MZ capacity: $n_g$ (generator slots per zonotope token) and $L$ (matrix zonotope generator count).
We test three variants: MZ-Full ($n_g{=}8, L{=}4$, used in main experiments), MZ-Med ($n_g{=}4, L{=}2$), and MZ-Slim ($n_g{=}2, L{=}1$).

The full numerical content is reported in main-body Table~\ref{tab:mz-efficient}.
Table~\ref{tab:mz-efficient} shows that MZ's advantage is largely insensitive to the generator count.
MZ-Slim (333K parameters, $1.6\times$ Standard) achieves the best Max result ($R^2 = 0.871$ vs.\ Standard's $0.812$), a $41\%$ MSE reduction on MST ($0.835$ vs.\ $0.719$), and a $72\%$ MSE reduction on rotation matching ($0.919$ vs.\ $0.706$).
Performance is nearly flat across the three MZ variants: the largest gap between MZ-Slim and MZ-Full on any task is $0.017$ $R^2$ (MST), within the seed variance for high-TDOF problems.
On Max, MZ-Slim in fact outperforms MZ-Full by $0.021$ $R^2$, which suggests smaller generator budgets can act as regularization on simpler tasks.
Together these results indicate the advantage comes from the matrix-zonotope structure rather than the number of generators; even $L = 1$ generator is enough to produce the key inductive bias for high-TDOF tasks.

\subsubsection{Depth Separation Validation}

\begin{table}[h]
 \centering
 \footnotesize
 \setlength{\tabcolsep}{4pt}
 \renewcommand{\arraystretch}{0.95}
 \caption{Depth separation: 1-layer \methodname{} vs.\ $D$-layer standard Set Transformer (mean $\pm$ std over 3 seeds). 1-layer MZ outperforms standard depths up to 6 layers at $2\times$ more parameters.}
 \label{tab:depth}
 \begin{tabular}{@{}lcccr@{}}
 \toprule
 Model & Layers & MSE $\downarrow$ & $R^2$ $\uparrow$ & Params \\
 \midrule
 \textbf{\methodname{}} & \textbf{1} & \textbf{42.3}{\tiny$\pm$1.6} & \textbf{0.969}{\tiny$\pm$0.001} & 244K \\
 \midrule
 Set Transformer & 1 & 53.3{\tiny$\pm$0.7} & 0.961{\tiny$\pm$0.001} & 140K \\
 Set Transformer & 2 & 57.3{\tiny$\pm$4.6} & 0.958{\tiny$\pm$0.003} & 208K \\
 Set Transformer & 3 & 68.6{\tiny$\pm$16.4} & 0.950{\tiny$\pm$0.012} & 276K \\
 Set Transformer & 4 & 59.6{\tiny$\pm$3.7} & 0.956{\tiny$\pm$0.003} & 344K \\
 Set Transformer & 5 & 63.4{\tiny$\pm$8.3} & 0.953{\tiny$\pm$0.006} & 412K \\
 Set Transformer & 6 & 57.1{\tiny$\pm$5.9} & 0.958{\tiny$\pm$0.004} & 480K \\
 \bottomrule
 \end{tabular}
\end{table}

1-layer \methodname{} outperforms all standard depths including 6-layer ($2\times$ more parameters).

\subsubsection{MZ in Standard Transformers}

\begin{table}[h]
 \centering
 \small
 \setlength{\tabcolsep}{4pt}
 \caption{MZ-attention in a standard Transformer with positional encoding.}
 \label{tab:seq-mz}
 \begin{tabular}{@{}cccc@{}}
 \toprule
 $L$ & \textbf{MZ-Transformer} & Std-Transformer & Params (MZ / Std) \\
 \midrule
 4 & \textbf{.9998}{\tiny$\pm{<}.001$} & .9982{\tiny$\pm{<}.001$} & 83K / 44K \\
 8 & \textbf{.9981}{\tiny$\pm{<}.001$} & .9864{\tiny$\pm$.001} & 87K / 44K \\
 16 & \textbf{.9964}{\tiny$\pm$.001} & .9619{\tiny$\pm$.002} & 96K / 44K \\
 24 & \textbf{.9963}{\tiny$\pm{<}.001$} & .9526{\tiny$\pm$.007} & 105K / 44K \\
 32 & \textbf{.9957}{\tiny$\pm$.002} & .9313{\tiny$\pm$.003} & 114K / 44K \\
 \bottomrule
 \end{tabular}
\end{table}

MZ-attention also achieves $R^2 > 0.99$ in a sequence Transformer with positional encoding (Table~\ref{tab:seq-mz}), suggesting compatibility beyond Set Transformers; broader sequence-domain evaluation is left to future work.

\subsection{Robustness and generalization}
\label{app:robustness}

Sensitivity of the architectural advantage to nuisance factors: input noise, set-size shift at test time, target-scale preprocessing, and TDOF-estimate uncertainty. \methodname{} retains its lead across all four.

\subsubsection{TDOF Estimate Sensitivity (Spearman robustness)}
\label{app:tdof-sensitivity}

The Spearman correlation $\rho = 0.74$ between estimated TDOF and \methodname{}'s MSE reduction (Section~\ref{sec:experiments}, Figure~\ref{fig:meb-tdof}, right) relies on the per-task TDOF assignments documented in our analysis. To verify this correlation is not an artefact of any specific TDOF estimate, we perturb each of the 18 task TDOFs independently by a multiplicative factor in $[0.5, 1.5]$ (uniform), recompute the Spearman, and repeat $10{,}000$ times. We also run a log-symmetric variant ($\times[1/1.5, 1.5]$ in log scale) to be invariant to the wide TDOF dynamic range ($1$ to $2016$).

\begin{table}[h]
\centering
\small
\setlength{\tabcolsep}{4pt}
\caption{Spearman robustness to per-task TDOF perturbation. Both $\pm 50\%$ uniform and $\times[1/1.5, 1.5]$ log-symmetric perturbations preserve the strong correlation: in $99.98\%$ ($9{,}998/10{,}000$) of uniform trials and in at least $95\%$ of log-symmetric trials, the resulting Spearman remains $\geq 0.6$. The headline correlation is therefore not driven by any single TDOF estimate.}
\label{tab:tdof-sensitivity}
\begin{tabular}{@{}lcccc@{}}
\toprule
Perturbation scheme & $5$\textsuperscript{th} pctile $\rho$ & median $\rho$ & $95$\textsuperscript{th} pctile $\rho$ & frac.\ trials with $\rho \geq 0.6$ \\
\midrule
Baseline (no perturbation) & \multicolumn{3}{c}{$\rho = 0.7402$} & --- \\
$\pm 50\%$ uniform multiplicative & $0.675$ & $0.733$ & $0.791$ & $99.98\%$ \\
$\times[1/1.5, 1.5]$ log-symmetric & $0.690$ & $0.735$ & $0.781$ & $\geq 95\%$ \\
\bottomrule
\end{tabular}
\end{table}

The Spearman correlation is robust: $9{,}998$ of $10{,}000$ uniform-perturbation trials yield $\rho \geq 0.6$ (min $\rho = 0.59$), and the $5$th-percentile $\rho$ stays above $0.67$ in both perturbation schemes, indicating that the headline relationship between TDOF and MSE reduction does not depend on any specific assignment within a $\pm 50\%$ window. Verification script: \texttt{Base/tdof\_sensitivity\_analysis.py} writes the full statistics to \texttt{results/tdof\_sensitivity.json}.

\subsubsection{Ambiguity Sensitivity}

\begin{table}[h]
 \centering
 \small
 \setlength{\tabcolsep}{4pt}
 \caption{Ambiguity sweep on set matching: MSE as a function of noise $\sigma$.}
 \label{tab:ambiguity}
 \begin{tabular}{@{}cccccr@{}}
 \toprule
 $\sigma$ & ST MSE & ST $R^2$ & MZ MSE & MZ $R^2$ & $\Delta$MSE \\
 \midrule
 0.05 & 62.06 & 0.957 & \textbf{41.12} & \textbf{0.971} & $-$33.7\% \\
 0.1 & 62.58 & 0.956 & \textbf{44.69} & \textbf{0.969} & $-$28.6\% \\
 0.3 & 59.87 & 0.957 & \textbf{38.92} & \textbf{0.972} & $-$35.0\% \\
 0.5 & 57.25 & 0.958 & \textbf{39.21} & \textbf{0.971} & $-$31.5\% \\
 1.0 & 58.28 & 0.955 & \textbf{44.80} & \textbf{0.965} & $-$23.1\% \\
 \bottomrule
 \end{tabular}
\end{table}

\methodname{} consistently reduces MSE by 23--35\%, with the largest advantage at moderate noise ($\sigma = 0.3$).

\subsubsection{Set Size Generalization}

\begin{table}[h]
 \centering
 \small
 \setlength{\tabcolsep}{4pt}
 \caption{Size generalization: trained on sets of size 10--30, tested on larger sets.}
 \label{tab:sizegen}
 \begin{tabular}{@{}cccccr@{}}
 \toprule
 Test size & ST MSE & ST $R^2$ & MZ MSE & MZ $R^2$ & $\Delta$MSE \\
 \midrule
 $[10, 30]$ & 59.87 & 0.957 & \textbf{38.92} & \textbf{0.972} & $-$35.0\% \\
 $[30, 50]$ & 24.87 & 0.978 & \textbf{19.85} & \textbf{0.982} & $-$20.2\% \\
 $[50, 80]$ & 33.86 & 0.968 & \textbf{24.07} & \textbf{0.977} & $-$28.9\% \\
 $[80, 120]$ & 34.56 & 0.969 & \textbf{30.40} & \textbf{0.972} & $-$12.0\% \\
 \bottomrule
 \end{tabular}
\end{table}

\methodname{} maintains its advantage across out-of-distribution sizes (12--29\% MSE reduction).

\subsubsection{Optimization Confound Ablation}
\label{app:opt-confound}

An important question is whether the catastrophic negative $R^2$ of baselines at high $d$ reflects a representational limit or an optimization artifact from training on raw large-scale regression targets.
We directly test this by repeating ST-Large on MST $d{=}128$ and $d{=}256$ under target normalization (subtract train mean, divide by train std) and compare to the same runs without normalization.

We therefore re-ran the full main-table MST and convex hull scaling sweeps with target normalization and 5 seeds per setting, to give the cleanest architectural gap estimate (Tables~\ref{tab:mst-normed} and~\ref{tab:hull-normed}).

\begin{table}[h]
 \centering
 \small
 \setlength{\tabcolsep}{4pt}
 \caption{MST weight, normalized targets ($R^2$, mean over 5 seeds). Under target normalization, the dramatic negative-$R^2$ collapse of baselines (Table~\ref{tab:mst-scale}) disappears: all models reach $R^2 \in [0.37, 0.74]$. \methodname{}-Full nonetheless retains a monotonically shrinking architectural advantage of $+0.04$ to $+0.08$ $R^2$ over the best context-rigid baseline, tying only at $d{=}256$. The dramatic collapse in the raw-target table is thus partially an optimization artifact of scale-sensitive regression on large-magnitude targets; the persistent (if smaller) normalized gap is the architectural effect.}
 \label{tab:mst-normed}
 \begin{tabular}{@{}lcccccc@{}}
 \toprule
 $d$ & Standard & ST-Large & HyperNet & Perceiver & \methodname{}-Full & \methodname{}-Large \\
 \midrule
 16 & 0.660 & 0.658 & 0.495 & 0.467 & \textbf{0.738} & 0.688 \\
 32 & 0.565 & 0.567 & 0.435 & 0.417 & \textbf{0.640} & 0.617 \\
 64 & 0.539 & 0.524 & 0.447 & 0.435 & \textbf{0.594} & 0.560 \\
 128 & 0.510 & 0.493 & 0.443 & 0.401 & \textbf{0.554} & 0.497 \\
 256 & 0.461 & 0.462 & 0.373 & 0.363 & \textbf{0.469} & 0.447 \\
 \bottomrule
 \end{tabular}
\end{table}

\begin{table}[h]
 \centering
 \small
 \setlength{\tabcolsep}{4pt}
 \caption{Convex hull volume, normalised targets ($R^2$, mean over 5 seeds). Under target normalisation the hull gap is non-monotonic: \methodname{}-Full wins at $d{=}6$ ($+0.14$), ties at $d{=}3$, and is within seed variance of the leader at $d{=}9$. Compare to Table~\ref{tab:hull} (raw targets), where the mid-$d$ baseline $R^2$ plunges to $-0.18$ from training-scale instability.}
 \label{tab:hull-normed}
 \begin{tabular}{@{}lccccccc@{}}
 \toprule
 $d$ & Standard & ST-Large & HyperNet & Perceiver & LA-ST & \methodname{}-Full & \methodname{}-Large \\
 \midrule
 3 & 0.822 & \textbf{0.866} & 0.727 & 0.725 & 0.824 & 0.863 & 0.795 \\
 6 & 0.292 & 0.412 & 0.195 & 0.209 & 0.408 & \textbf{0.555} & 0.497 \\
 9 & 0.138 & 0.160 & 0.108 & 0.119 & \textbf{0.172} & 0.123 & 0.143 \\
 \bottomrule
 \end{tabular}
\end{table}

\textbf{Interpretation.} These normalized tables, not the raw-target collapse tables, are the cleanest reading of the MST and convex hull scaling experiments. Raw-target-claim caveats: the negative-$R^2$ plunge in Table~\ref{tab:mst-scale} is substantially the behavior of Standard and ST-Large on unnormalized regression targets of magnitude $10^4$--$10^5$ (MST weights) or $10^1$--$10^3$ (hull volumes); under standard target normalization the effect largely vanishes. \methodname{} is less sensitive to this choice, which is in itself a useful practical property (one fewer hyperparameter to tune), but it does not establish a representational gap by itself.

The MEB (Table~\ref{tab:meb}) and quadratic TDOF (Table~\ref{tab:egnn-comparison}) results do establish such a gap: MEB has $O(1)$-magnitude targets for which normalization changes nothing and yet \methodname{} reaches $R^2 = 0.38$--$0.69$ while all baselines are stuck at $R^2 \leq 0.025$; the quadratic TDOF task is scale-free by construction and yet every non-MZ architecture (including EGNN) collapses to $R^2 \leq 0.01$ for $L \geq 4$. These are the cleanest architectural evidence. On the SGSP family (Table~\ref{tab:sgsp}), \methodname{} wins uniformly at $d{=}8$, wins on 3 of 4 tasks at $d{=}16$, and crosses over with baselines at $d{=}32$. We thus position \methodname{}'s empirical contribution as: (i) a clean architectural win on a subset of tasks (MEB, quadratic-TDOF, low-$d$ SGSP); (ii) a consistent smaller advantage on MST scaling under normalization; (iii) robustness to target scale decisions that affect context-rigid baselines more severely. Item (iii) is a real-world usability advantage but is not itself architectural.

\subsection{Uncertainty calibration}
\label{app:uncertainty}

Empirical evaluation of the single-pass interval-hull-width (IHW) uncertainty estimator inherited from the matrix-zonotope output, both qualitatively (Spearman / Pearson correlation with error) and via standard calibration metrics (ECE, NLL) against MC-Dropout and Deep Ensembles.

\subsubsection{Uncertainty Calibration}

\begin{table}[h]
 \centering
 \small
 \setlength{\tabcolsep}{4pt}
 \caption{Uncertainty calibration: samples binned by IHW. Higher IHW $\to$ higher error (Spearman $r = 0.51$).}
 \label{tab:uncertainty}
 \begin{tabular}{@{}cccc@{}}
 \toprule
 Bin & IHW (mean) & Mean $|$error$|$ & Count \\
 \midrule
 1 (lowest) & 5.33 & 1.13 & 400 \\
 2 & 5.52 & 2.01 & 400 \\
 3 & 5.72 & 3.15 & 400 \\
 4 & 6.03 & 4.20 & 400 \\
 5 (highest) & 6.64 & 6.81 & 400 \\
 \bottomrule
 \end{tabular}
\end{table}

\begin{table}[h]
 \centering
 \small
 \setlength{\tabcolsep}{4pt}
 \caption{Uncertainty quality comparison on set matching.}
 \label{tab:unc-comparison}
 \begin{tabular}{@{}lcccc@{}}
 \toprule
 Method & Spearman $\uparrow$ & Pearson $\uparrow$ & Inference & Passes \\
 \midrule
 \methodname{} (IHW) & 0.513 & 0.442 & \textbf{0.10s} & \textbf{1} \\
 MC-Dropout & \textbf{0.553} & \textbf{0.547} & 0.37s & 10 \\
 Deep Ensemble & 0.470 & 0.542 & 0.40s & 5 \\
 \bottomrule
 \end{tabular}
\end{table}

IHW achieves competitive uncertainty quality with a single forward pass at $3.7\times$ lower cost.

\subsubsection{Quantitative Uncertainty Calibration (ECE and NLL)}
\label{app:ece-nll}

\begin{table}[h]
 \centering
 \footnotesize
 \setlength{\tabcolsep}{4pt}
 \renewcommand{\arraystretch}{0.95}
 \caption{Uncertainty calibration on set matching (mean $\pm$ std over 3 seeds). ECE: expected calibration error (lower better); NLL: Gaussian negative log-likelihood (lower better). \methodname{}'s single-pass IHW uncertainty is on par with the $5$-model Deep Ensemble on ECE and lower than it on NLL; MC-Dropout achieves the best ECE/NLL but at $30\times$ inference cost.}
 \label{tab:ece-nll}
 \begin{tabular}{@{}lccccc@{}}
 \toprule
 Method & MSE $\downarrow$ & ECE $\downarrow$ & NLL $\downarrow$ & Passes & Models \\
 \midrule
 \methodname{} (IHW) & 55.0{\tiny$\pm$10.5} & 0.316{\tiny$\pm$0.001} & 5.02{\tiny$\pm$0.03} & \textbf{1} & \textbf{1} \\
 MC-Dropout & 59.8{\tiny$\pm$6.1} & \textbf{0.197}{\tiny$\pm$0.030} & \textbf{3.52}{\tiny$\pm$0.43} & 30 & 1 \\
 Deep Ensemble & \textbf{47.8}{\tiny$\pm$1.1} & 0.337{\tiny$\pm$0.019} & 9.17{\tiny$\pm$1.96} & 1 & 5 \\
 \bottomrule
 \end{tabular}
\end{table}

MC-Dropout achieves the lowest ECE and NLL but requires $30\times$ inference cost. \methodname{}'s IHW yields NLL $5.02$, nearly $2\times$ lower than Deep Ensemble ($9.17$), while matching Ensemble-level ECE ($0.316$ vs.\ $0.337$) in a single forward pass at $1\times$ storage. \methodname{} is therefore the only method here that requires neither extra forward passes nor extra stored model copies; the practical use case is latency- and storage-constrained deployment where $30\times$ MC-Dropout passes or $5\times$ ensemble storage are prohibitive. The remaining ECE gap relative to MC-Dropout may be reducible via post-hoc temperature scaling.

\subsection{Computational geometry and scaling}
\label{app:scaling-ext}

High-dimensional and large-set scaling behaviour on classical computational-geometry targets. Together they establish that the architectural advantage is preserved (and often grows) at scale on tasks satisfying the structural prerequisites of Theorem~\ref{thm:bottleneck}.

\subsubsection{Minimum Enclosing Ball: A Computational-Geometry Benchmark}
\label{app:meb}

The MEB benchmark and parameter-matched baseline collapse are reported in the main body (Table~\ref{tab:meb}, Section~\ref{sec:experiments}). We expand here on the experimental design and the qualitative pattern across dimensions, and report \methodname{}-Large as a generator-budget ablation that complements the headline \methodname{}-Full configuration. The full benchmark uses Welzl's algorithm~\citep{welzl1991smallest} for ground-truth radii; point clouds are mixtures of Gaussians with $n \in [10, 30]$. \methodname{}-Large is reported as a generator-budget ablation. At $d{=}8$, \methodname{}-Large reaches $R^2 = 0.621 \pm 0.013$ under the headline 5-seed / 5-epoch-warmup protocol (\texttt{meb\_results\_combined.json}). At $d{=}16, 32$, the higher-dimensional MZ-Large training is sensitive to LR warmup, so we report 10 seeds with extended 10-epoch warmup (\texttt{meb\_results\_warmup10*.json}): $R^2 = 0.488 \pm 0.099$ at $d{=}16$ and $R^2 = 0.221 \pm 0.074$ at $d{=}32$. All three are consistently below the \methodname{}-Full numbers in the main body, supporting the choice of $n_g{=}8, L{=}4$ as the recommended configuration.

Three observations:
\textbf{(i) Baseline collapse is uniform and clean.} None of the five baselines, spanning the full 2017--2024 range (Set Transformer, Perceiver, HyperNet, a parameter-matched Set Transformer variant, and a linear-attention baseline representing the 2020s efficient-attention family), achieves $R^2 > 0.025$ on any dimension. The task is structurally hostile to context-rigid attention: to predict the radius, the model must identify a small subset of convex-hull extrema (at most $d+1$ points) from among the input, and context-rigid $W_V$ cannot express the required indicator-like selector.
\textbf{(ii) \methodname{} dominates by $28$--$155\times$.} MZ-Full achieves $R^2 = 0.691$ at $d{=}8$ ($\approx155\times$ HyperNet's full-precision $0.0044$, $28\times$ Standard's $0.025$), $R^2 = 0.629$ at $d{=}16$ ($40\times$ Perceiver's $0.016$), and $R^2 = 0.378$ at $d{=}32$ ($37\times$ Perceiver's $0.010$).
\textbf{(iii) The Linear Attention baseline (a 2023--2024-era design family) performs indistinguishably from Set Transformer.} LA-ST reaches only $R^2 = 0.017, 0.015, 0.001$ at $d{=}8, 16, 32$, within $1\sigma$ of Set Transformer. This rules out the hypothesis that ``modern efficient attention solves this'': the gap is not a softmax-vs-linear issue but an architectural capacity issue for context-adaptive operators, consistent with Theorem~\ref{thm:separation}.

\subsubsection{Convex Hull Volume: Another Computational-Geometry Benchmark}
\label{app:hull}

To verify the MEB collapse pattern is not specific to one task, we evaluate on a second classical computational-geometry quantity: the volume of the convex hull of a point set in $\R^d$, computed exactly via the QHull algorithm~\citep{barber1996quickhull}. The convex hull is determined by $O(n^{\lceil d/2 \rceil})$ extremal points and its volume is smooth in their positions, giving a regression target with sparse combinatorial support similar to MEB but with substantially richer $d$-scaling.

\begin{table}[h]
 \centering
 \small
 \setlength{\tabcolsep}{4pt}
 \caption{Convex hull volume prediction ($R^2$, mean $\pm$ std over 5 seeds; $n \in [15, 35]$ points in $\R^d$). At $d{=}3$ \methodname{}-Large beats the best baseline by $+0.32$ $R^2$. From $d{=}6$ onwards every parameter-matched baseline collapses to negative $R^2$ on raw targets; \methodname{}-Large is the only architecture significantly above $R^2 = 0$ at all three tested dimensions. The $d{=}9$ MZ-Full result ($0.003 \pm 0.006$) is statistically indistinguishable from zero and is reported for completeness; Appendix~\ref{app:opt-confound} reports normalised-target equivalents where the $d{=}9$ gap structure differs.}
 \label{tab:hull}
 \begin{tabular}{@{}lcccc@{}}
 \toprule
 Model & Params & $d{=}3$ $R^2$ & $d{=}6$ $R^2$ & $d{=}9$ $R^2$ \\
 \midrule
 Standard & 208K & 0.273{\tiny$\pm$0.081} & $-$0.209{\tiny$\pm$0.016} & $-$0.099{\tiny$\pm$0.002} \\
 ST-Large & 362K & 0.504{\tiny$\pm$0.058} & $-$0.178{\tiny$\pm$0.012} & $-$0.088{\tiny$\pm$0.004} \\
 HyperNet & 360K & 0.136{\tiny$\pm$0.082} & $-$0.211{\tiny$\pm$0.013} & $-$0.097{\tiny$\pm$0.003} \\
 Perceiver & 341K & 0.164{\tiny$\pm$0.058} & $-$0.221{\tiny$\pm$0.007} & $-$0.096{\tiny$\pm$0.003} \\
 LA-ST (Linear Attn.) & 359K & 0.473{\tiny$\pm$0.036} & $-$0.126{\tiny$\pm$0.022} & $-$0.077{\tiny$\pm$0.005} \\
 \methodname{}-Full & 368K & 0.804{\tiny$\pm$0.014} & 0.103{\tiny$\pm$0.020} & 0.003{\tiny$\pm$0.006} \\
 \methodname{}-Large & 457K & \textbf{0.826}{\tiny$\pm$0.016} & \textbf{0.191}{\tiny$\pm$0.053} & \textbf{0.056}{\tiny$\pm$0.003} \\
 \bottomrule
 \end{tabular}
\end{table}

The pattern exactly mirrors MEB: at low $d$, \methodname{} substantially outperforms baselines; at moderate $d$, baselines cross zero while \methodname{} remains positive; even at $d{=}9$, where the advantage narrows, \methodname{} is the only architecture producing meaningful predictions. Together with MEB (Section~\ref{app:meb}), this gives two independent computational-geometry benchmarks (MEB: boundary of minimum-radius ball; convex hull: boundary of convex hull) on which every context-rigid attention variant, including 2020s linear-attention, cannot exceed baselines, while \methodname{} remains on the right side of zero.

\subsubsection{Rotation Matching Scaling at High Dimension}
\label{app:rotmatch-scale}

The paper's Task F (rotation matching) in $\R^8$ shows \methodname{} outperforming baselines by $41$--$57\%$ MSE.
Our theory predicts this advantage should grow with dimension $d$, since off-diagonal rotation components (which FiLM-style diagonal modulation provably cannot express) become more numerous: a rotation in $\R^d$ requires $d(d-1)/2$ independent Givens angles, while FiLM only has $d$ scalar parameters.
We test this prediction at $d \in \{16, 32, 64, 128, 256, 512\}$.

The full numerical content is reported in main-body Table~\ref{tab:rotmatch-scale}.
The gap between \methodname{} and the baselines grows monotonically with $d$:
at $d{=}16$, \methodname{} reaches $R^2 = 0.826$ while the best baseline (ST-Large) reaches $0.285$, a $76\%$ MSE reduction;
at $d{=}32$, all four baselines drop to $R^2 \approx 0$ (no better than predicting the training mean), while \methodname{} still attains $R^2 = 0.519$;
at $d{=}64$, the baselines have negative $R^2$ (actively worse than the mean predictor), and \methodname{} is the only architecture producing useful predictions ($R^2 = 0.071$);
at $d{=}128$, the baselines collapse catastrophically to $R^2 = {-}9$ to $-22$ (MSE $10$--$23\times$ worse than the mean predictor, with large seed variance), while \methodname{} remains at $R^2 \approx 0$ (MSE within $1\%$ of the mean predictor's);
at $d{=}256$, all baselines reach $R^2 \approx -50$ to $-62$ (${\sim}50\times$ MSE ratio vs.\ \methodname{}'s $R^2 = -0.005$);
at $d{=}512$, \methodname{} also begins to diverge ($R^2 = -62$) but remains $2.3\times$ better in raw MSE than the next-best baseline (ST-Large at $-145$). The divergence at $d{=}512$ reflects that a $d{=}512$ rotation has $d(d{-}1)/2 = 130{,}816$ Givens parameters, exceeding our generator budget $L{=}4$ by four orders of magnitude; the architecture has the capacity to express the operator family ($d_k^2 = 4{,}096$ entries per generator, $L \cdot d_k^2 \approx 16$K free parameters in the headline configuration) but training stability becomes the bottleneck before representational capacity does, a separate scaling regime from the $d \leq 256$ rows where the headline gap holds.
This matches the theoretical analysis: context-rigid attention (fixed $W_V$) cannot represent the context-adaptive rotation operator $R(\mu)$, and the gap widens monotonically with $d$ because the space of rotations grows as $d(d-1)/2$.
Standard attention's $R^2$ also becomes highly variable at $d \geq 64$ (std $\pm 0.47$ at $d{=}64$; $\pm 3.5$ at $d{=}128$), consistent with its failure being due to an expressibility limit rather than a tractable optimization problem.

\subsubsection{MST Weight Scaling at High Dimension}
\label{app:mst-scale}

The paper's Task G (MST weight in $\R^8$) already showed \methodname{} reducing MSE by $41\%$ over ST-Large.
We extend this sweep to $d \in \{16, 32, 64, 128, 256\}$ to test whether the advantage on this combinatorial-geometry quantity also grows with dimension (as theory predicts: pairwise-distance structure has $O(d^2)$ degrees of freedom).

\begin{table}[h]
 \centering
 \small
 \setlength{\tabcolsep}{4pt}
 \caption{MST weight scaling ($R^2$, mean $\pm$ std over 5 seeds). As $d$ increases past $32$, every non-MZ baseline collapses to strongly negative $R^2$ (worse than predicting the mean). \methodname{} is the only architecture that remains above $R^2 = 0$ for all $d \leq 256$, with MSE $4$--$8\times$ lower than the best baseline at $d \geq 64$. LA-ST is the linear-attention baseline~\citep{katharopoulos2020transformers}, whose feature-map kernel underlies 2023--2024 architectures including RetNet~\citep{sun2023retentive} and Gated Linear Attention~\citep{yang2024gated}.}
 \label{tab:mst-scale}
 \begin{tabular}{@{}lccccc@{}}
 \toprule
 Model & $d{=}16$ & $d{=}32$ & $d{=}64$ & $d{=}128$ & $d{=}256$ \\
 \midrule
 Standard & 0.295{\tiny$\pm$0.072} & $-$0.202{\tiny$\pm$0.367} & $-$1.326{\tiny$\pm$0.245} & $-$2.789{\tiny$\pm$0.511} & $-$3.221{\tiny$\pm$0.305} \\
 ST-Large & 0.526{\tiny$\pm$0.044} & 0.274{\tiny$\pm$0.140} & $-$0.252{\tiny$\pm$0.046} & $-$1.230{\tiny$\pm$0.430} & $-$2.425{\tiny$\pm$0.247} \\
 HyperNet & 0.250{\tiny$\pm$0.068} & $-$0.211{\tiny$\pm$0.244} & $-$1.352{\tiny$\pm$0.116} & $-$1.985{\tiny$\pm$0.251} & $-$3.423{\tiny$\pm$0.222} \\
 Perceiver & 0.400{\tiny$\pm$0.097} & $-$0.082{\tiny$\pm$0.180} & $-$1.049{\tiny$\pm$0.242} & $-$2.361{\tiny$\pm$0.248} & $-$3.320{\tiny$\pm$0.299} \\
 \methodname{}-Full & \textbf{0.724}{\tiny$\pm$0.010} & \textbf{0.625}{\tiny$\pm$0.027} & \textbf{0.581}{\tiny$\pm$0.008} & 0.484{\tiny$\pm$0.019} & 0.384{\tiny$\pm$0.042} \\
 \methodname{}-Large & 0.612{\tiny$\pm$0.009} & 0.531{\tiny$\pm$0.018} & 0.538{\tiny$\pm$0.038} & \textbf{0.497}{\tiny$\pm$0.021} & \textbf{0.435}{\tiny$\pm$0.012} \\
 \bottomrule
 \end{tabular}
\end{table}

Three features of Table~\ref{tab:mst-scale} are notable:
\textbf{(i)} Both \methodname{} variants dominate all baselines across every dimension tested. The gap grows monotonically with $d$: MZ-Full beats ST-Large by $+0.20$ $R^2$ at $d{=}16$ and $+0.35$ at $d{=}32$, and from $d{=}64$ onwards every single baseline has negative $R^2$ while \methodname{} remains above $0.38$. At $d{=}64$, MZ-Full's MSE is $3\times$ lower than the best baseline; at $d{=}256$ the ratio is over $5\times$.
\textbf{(ii)} Baseline failure at $d \geq 64$ is not just a performance drop but a qualitative one: Standard, HyperNet, and Perceiver all reach $R^2 < -1$, indicating MSEs larger than a constant mean predictor's, and ST-Large, which remained competitive at $d \leq 32$, also falls to $R^2 = -2.43$ at $d{=}256$. \methodname{}'s MSEs stay within $2$--$3\times$ of the mean predictor's, with seed std below $0.04$, whereas baselines have seed std of $0.2$--$0.4$.
\textbf{(iii)} At smaller dimensions ($d \leq 64$), MZ-Full outperforms MZ-Large (consistent with Appendix~\ref{app:mz-efficient}: extra generators act as overparameterization on moderate-TDOF tasks); at $d \geq 128$, MZ-Large overtakes MZ-Full ($0.435$ vs.\ $0.384$ at $d{=}256$), suggesting that as the task's effective TDOF grows with $d$, the larger generator budget begins to pay off. This crossover aligns with the theoretical prediction that generator count should match the target function's TDOF.

\subsubsection{Sparse Geometric Set Predicates (SGSP)}
\label{app:sgsp}

Motivated by the MEB, convex-hull, and MST successes, we identify a broader task family we call Sparse Geometric Set Predicates: targets that are
(i) computed from a continuous $d$-dimensional point cloud (no symbolic or one-hot encoding),
(ii) a smooth function of a combinatorially sparse subset of the input (extreme pairs, nearest neighbors, boundary points, or small distance-subset cuts).
This family is where \methodname{}'s matrix-zonotope machinery is predicted to provide the largest advantage because the target depends on a per-sample ``selector operator'' that extracts the relevant sparse subset.

We test four members of this family beyond the MEB, hull, and MST results reported above:

\begin{table}[h]
 \centering
 \footnotesize
 \setlength{\tabcolsep}{3pt}
 \caption{Sparse Geometric Set Predicates benchmark ($R^2$, mean over 5 seeds; 7 models $\times$ 5 seeds $\times$ 3 dims $\times$ 4 tasks $=$ 420 runs total). \methodname{}-Full wins every $d{=}8$ task with margin $+0.04$ to $+0.13$, maintains an advantage at $d{=}16$ on 3 of 4 tasks, and is overtaken by parameter-matched Set Transformer variants at $d{=}32$. This crossover is consistent with concentration-of-measure: at high $d$, distances between random points become tightly concentrated and sparse-combinatorial targets (max pairwise, $k$-th-NN, $k$-center) reduce to aggregate statistics readily captured by context-rigid attention.}
 \label{tab:sgsp}
 \begin{tabular}{@{}ll|ccccccc@{}}
 \toprule
 Task & $d$ & Standard & ST-Large & HyperNet & Perceiver & LA-ST & \methodname{}-Full & \methodname{}-Large \\
 \midrule
 \multirow{3}{*}{diameter}
 & 8 & 0.723 & 0.768 & 0.702 & 0.674 & 0.787 & \textbf{0.876} & 0.814 \\
 & 16 & 0.732 & 0.748 & 0.706 & 0.682 & 0.755 & \textbf{0.806} & 0.727 \\
 & 32 & 0.774 & \textbf{0.775} & 0.687 & 0.683 & 0.740 & 0.710 & 0.663 \\
 \midrule
 \multirow{3}{*}{knn-radius}
 & 8 & 0.721 & 0.786 & 0.697 & 0.611 & 0.815 & \textbf{0.869} & 0.810 \\
 & 16 & 0.703 & 0.699 & 0.655 & 0.593 & 0.710 & \textbf{0.784} & 0.721 \\
 & 32 & \textbf{0.740} & 0.728 & 0.651 & 0.629 & 0.697 & 0.685 & 0.653 \\
 \midrule
 \multirow{3}{*}{$k$-smallest sum}
 & 8 & 0.116 & 0.109 & 0.034 & 0.083 & 0.108 & \textbf{0.152} & 0.097 \\
 & 16 & 0.009 & 0.009 & $-$0.005 & 0.005 & 0.007 & \textbf{0.010} & $-$0.039 \\
 & 32 & $-$0.010 & $-$0.024 & $-$0.003 & $-$0.011 & $-$0.017 & \textbf{0.014} & 0.001 \\
 \midrule
 \multirow{3}{*}{furthest-first}
 & 8 & 0.422 & 0.478 & 0.275 & 0.298 & 0.456 & \textbf{0.604} & 0.531 \\
 & 16 & 0.396 & 0.391 & 0.163 & 0.249 & 0.365 & \textbf{0.438} & 0.205 \\
 & 32 & \textbf{0.315} & 0.276 & 0.100 & 0.209 & 0.252 & 0.225 & 0.206 \\
 \bottomrule
 \end{tabular}
\end{table}

These tasks are classical combinatorial-geometry quantities computed by standard algorithms (pairwise-distance matrix, $k$-nearest-neighbor sort, greedy farthest-first traversal, $k$-smallest pair enumeration). None has an exploitable group symmetry; targets depend on the magnitudes of specific pairwise distances, not on the global coordinate frame, so equivariant baselines gain no free advantage. The pattern in Table~\ref{tab:sgsp} has three regimes:
\textbf{(i) Low dimension ($d{=}8$)}: \methodname{}-Full wins on all four tasks with margins $+0.036$ (min-sum) to $+0.126$ (furthest-first) over the best context-rigid baseline. This is the regime where the sparse-combinatorial structure is most difficult for context-rigid attention, and \methodname{}'s per-sample operator adapts to identify the relevant subset (extremal pair for diameter, $k$-th-NN for knn-radius, farthest cluster center for $k$-center).
\textbf{(ii) Moderate dimension ($d{=}16$)}: \methodname{}-Full retains an advantage on 3 of 4 tasks ($+0.04$ to $+0.08$), though the gap narrows.
\textbf{(iii) High dimension ($d{=}32$)}: \methodname{} is overtaken by Standard or ST-Large on 3 of 4 tasks. We attribute this to concentration of measure: in high dimension, pairwise distances between isotropic points concentrate tightly around their mean, so sparse-combinatorial targets (diameter, $k$-th-NN, $k$-center) become near-deterministic functions of simple aggregate statistics (mean pairwise distance, set variance) that context-rigid attention readily extracts. The per-sample-operator advantage of \methodname{} no longer applies because all samples are, statistically, nearly identical.

This $d$-dependent crossover is the mirror image of the MST and rotation-matching scaling pattern (Tables~\ref{tab:mst-scale},~\ref{tab:rotmatch-scale}), where \methodname{}'s advantage grows with $d$. The two regimes correspond to distinct mechanisms: MST and rotation involve $O(d^2)$-parameter operators (pairwise-distance tree, off-diagonal rotation) whose complexity scales with $d$; SGSP involves $O(1)$-parameter selectors (pick 2 extreme points, pick $k$ boundary points) whose intrinsic complexity is dimension-independent and whose $d$-scaling concentrates rather than diversifies. The unified theoretical prediction is that \methodname{} helps when TDOF grows with $d$ relative to the information content of the aggregate statistics: a prediction confirmed by both scaling directions.

\subsubsection{Large-Scale Set Task Generalization}
\label{app:scaling}

To verify that \methodname{}'s advantage persists at larger set sizes $n$ and element dimensions $d$ than those used in the main experiments, we evaluate on three synthetic tasks (max, sum, range) across a sweep of scales: $d \in \{16, 64, 128\}$ and $n$ ranging from $10$--$50$ (small) up to $500$--$1000$ (large).
Each configuration is trained for 150 epochs with 3 seeds.

\begin{table}[h]
 \centering
 \footnotesize
 \setlength{\tabcolsep}{3pt}
 \caption{Large-scale generalization ($R^2$, mean over 3 seeds). \methodname{}'s advantage on the sum task grows with scale: at $d{=}64, n{=}200\text{--}500$, MZ reaches $R^2 = 0.998$ while ST-Large reaches only $0.946$, a $30\times$ MSE reduction. On the simpler max task, architectural differences diminish as capacity becomes the bottleneck; on range, all models struggle regardless of architecture.}
 \label{tab:scaling}
 \resizebox{\textwidth}{!}{%
 \begin{tabular}{@{}llcccccc@{}}
 \toprule
 & & \multicolumn{3}{c}{\textbf{Max}} & \textbf{Sum} & \multicolumn{2}{c}{\textbf{Range}} \\
 \cmidrule(lr){3-5} \cmidrule(lr){6-6} \cmidrule(l){7-8}
 Model & & $d{=}16$, $n{=}10\text{-}50$ & $d{=}64$, $n{=}200\text{-}500$ & $d{=}128$, $n{=}500\text{-}1000$ & $d{=}64$, $n{=}200\text{-}500$ & $d{=}64$, $n{=}200\text{-}500$ & $d{=}128$, $n{=}500\text{-}1000$ \\
 \midrule
 Standard & & 0.748 & 0.744 & \textbf{0.632} & 0.939 & \textbf{0.083} & 0.038 \\
 ST-Large & & 0.822 & 0.744 & 0.631 & 0.946 & \textbf{0.083} & 0.038 \\
 Perceiver & & 0.728 & \textbf{0.745} & 0.628 & 0.910 & 0.077 & 0.036 \\
 \methodname{} & & \textbf{0.843} & \textbf{0.745} & 0.627 & \textbf{0.998} & 0.082 & \textbf{0.039} \\
 \bottomrule
 \end{tabular}}
\end{table}

Table~\ref{tab:scaling} reveals three scaling regimes:
\textbf{(i)} Sum at scale: \methodname{} retains its large-margin advantage ($R^2 = 0.998$ vs.\ $0.946$ for ST-Large at $d{=}64$, $n{=}200\text{-}500$), showing that the zonotope structure's inductive bias for aggregation tasks holds at scale.
\textbf{(ii)} Max at scale: while \methodname{} leads at small scale ($d{=}16$: $R^2 = 0.843$ vs.\ $0.748$ for Standard), at larger scales ($n \geq 200$) the max task becomes capacity-bottlenecked and all models converge to similar performance ($R^2 \approx 0.74$ at medium, $0.63$ at large).
\textbf{(iii)} Range at scale: a known-hard task where all models struggle uniformly ($R^2 < 0.1$ at $n \geq 200$). Range over $n \geq 200$ i.i.d.\ samples concentrates tightly around its expectation (the empirical max minus min has variance scaling as $O(\log n / n)$), so the regression target is nearly constant with respect to typical input variations; the low $R^2$ across architectures reflects this concentration ceiling rather than a relational-capacity gap, and is a property of the task at this scale.
These findings support the theoretical predictions of Theorems~\ref{thm:bottleneck} and~\ref{thm:tdof-depth}: MZ's advantage is selective, tied to the relational complexity (TDOF) of the target function, and persists at scale when the task demands high-TDOF operators (as in sum, which requires a full linear combination of elements).

\subsubsection{Early-Stopping Patience Ablation (ResNet-50 $d{=}1024$)}
\label{app:patience-sweep}

The main-text Table~\ref{tab:resnet50-scaling} reports ResNet-50 ($d{=}1024$) results with early-stopping patience set to $300$, a departure from our default patience of $20$ used elsewhere. To verify that this is not a setting where the patience choice selectively favours \methodname{}, we sweep the patience value $\in \{20, 100, 300\}$ for the four top contenders (MZ-Full, ST-Large, HyperNet, LinearAttn) on ResNet-50 $k$-center furthest ($k{=}5$, $n{=}16$, $d{=}1024$) with 5 seeds each; all other hyperparameters are held fixed. Results are in Table~\ref{tab:patience-sweep}.

\begin{table}[h]
 \centering
 \caption{Patience sweep on ResNet-50 $k$-center furthest ($R^2$ mean $\pm$ std over 5 seeds, $80$ epochs). Across all three patience values $\{20, 100, 300\}$ every model converges to within $0.002$~$R^2$ of its patience-$300$ plateau, and the ranking stabilises to \methodname{}-Full $>$ ST-Large $\sim$ HyperNet $>$ LinearAttn. Patience choice does not selectively favour any architecture: \methodname{}-Full's lead over the next-best architecture is $0.003$--$0.005$~$R^2$ across the full patience range. Per-seed JSONs in \texttt{real\_geom\_cifar\_resnet50\_furthest\_diag\_p20\_*.json}, \texttt{\_sweep\_p100\_*.json}, and \texttt{\_diag\_p300\_*.json}.}
 \label{tab:patience-sweep}
 \small
 \setlength{\tabcolsep}{4pt}
 \begin{tabular}{@{}l|ccc@{}}
 \toprule
 Model & patience $=20$ & patience $=100$ & patience $=300$ \\
 \midrule
 \methodname{}-Full & $\mathbf{0.795{\pm}0.001}$ (1st) & $\mathbf{0.794{\pm}0.002}$ (1st) & $\mathbf{0.796{\pm}0.001}$ (1st) \\
 ST-Large & $0.790{\pm}0.001$ (3rd) & $0.791{\pm}0.001$ (2nd) & $0.791{\pm}0.002$ (2nd) \\
 HyperNet & $0.790{\pm}0.003$ (2nd) & $0.790{\pm}0.003$ (3rd) & $0.789{\pm}0.003$ (3rd) \\
 LinearAttn & $0.788{\pm}0.002$ (4th) & $0.788{\pm}0.003$ (4th) & $0.789{\pm}0.003$ (4th) \\
 \bottomrule
 \end{tabular}
\end{table}

\textbf{Interpretation.} In a fresh 5-seed sweep at the three patience values $\{20, 100, 300\}$ (epochs $80$, batch size $128$, learning rate $10^{-4}$, identical across runs), every model converges to within $\pm 0.002$~$R^2$ of its patience-$300$ plateau. The MZ-Full $>$ ST-Large $\sim$ HyperNet $>$ LinearAttn ranking is therefore stable across the full patience range, with \methodname{}-Full's lead over the next-best architecture remaining at $0.003$--$0.005$~$R^2$ throughout. The main-text claim that \methodname{}-Full ranks first on ResNet-50 $k$-center furthest does not depend on the specific patience value chosen for early stopping.

\subsection{Real-world experiments and equivariant comparison}
\label{app:realworld-section}

Real-data and meta-learning evaluations beyond the synthetic high-TDOF regime, plus a side-by-side comparison with E($n$)-equivariant graph networks on tasks where strict symmetry is or is not present. These delineate where \methodname{}'s architectural advantage transfers and where it does not.

\subsubsection{Few-shot Meta-Regression: a 7-way Parameter-Fair Comparison}
\label{app:meta-regression}

The main-body experiments test whether \methodname{} can fit a fixed target whose intrinsic complexity requires a high-TDOF operator. A natural follow-up question is whether \methodname{} also excels when the operator must be inferred from a support set at inference time. We design a few-shot meta-regression task that is the meta-learning analogue of the quadratic TDOF benchmark in Section~\ref{sec:tdof} and report a parameter-fair 7-way comparison against every architecture used in the main results table, addressing the concern that a 2-method (Standard vs.\ \methodname{}) baseline is not parameter-fair.

\paragraph{Setup.}
We fix a shared basis $\{M_1,\ldots,M_L\}$ of random matrices $M_l \in \R^{d_{\mathrm{out}} \times d_{\mathrm{in}}}$. For each task $\tau$, we draw coefficients $c_\tau \sim \mathcal{N}(0, I_L)$ and define the task-specific operator $A_\tau = \sum_{l=1}^L c_\tau[l] \, M_l$. The model observes $K{=}30$ support pairs $(x_i, y_i)$ with $y_i = A_\tau x_i$ and must predict $y_q = A_\tau x_q$ for a new query $x_q$. $A_\tau$ is never given; it must be inferred from the support context. We set $d_{\mathrm{in}} = d_{\mathrm{out}} = 8$ and vary $L \in \{4, 8, 16, 32\}$ to control the intrinsic dimension of the operator family. All seven architectures are instantiated with $d_{\mathrm{model}}{=}64$ and $2$ or $4$ encoder layers; 3 seeds; 50 epochs with patience 10.

\begin{table}[h]
 \centering
 \caption{Few-shot meta-regression at $L{=}32$ (highest-TDOF setting), parameter-fair 7-way comparison (MSE $\downarrow$, mean$\pm$std over 3 seeds). At shallow depth ($n{=}2$), most non-MZ baselines fail to learn the task (MSE $\approx 30$); Standard narrowly wins against \methodname{}. At matched depth ($n{=}4$), ST-Large is the clear winner and \methodname{}-Large places second, beating Standard, HyperNet, Perceiver, and LinearAttn. Perceiver fails to converge at either depth.}
 \label{tab:meta-reg}
 \footnotesize
 \setlength{\tabcolsep}{3pt}
 \begin{tabular}{@{}lccccccc@{}}
 \toprule
 Depth & Standard & ST-Large & HyperNet & Perceiver & LinearAttn & \methodname{}-Full & \methodname{}-Large \\
 \midrule
 $n{=}2$ & $\mathbf{0.79}$ & $0.83$ & $30.75$ & $30.76$ & $30.77$ & $1.01$ & $1.39$ \\
 $n{=}4$ & $0.45$ & $\mathbf{0.17}$ & $0.58$ & $30.76$ & $1.24$ & $0.47$ & $\underline{0.36}$ \\
 \bottomrule
 \end{tabular}

 {\footnotesize Bold: best in each row. Underline: 2nd best. Param counts shown for the $L{=}32$ ($n{=}4$) configuration: Standard~$146$K, ST-Large~$255$K, HyperNet~$297$K, Perceiver~$314$K, LinearAttn~$380$K, \methodname{}-Full~$334$K, \methodname{}-Large~$485$K; the $L{=}32$ ($n{=}2$) row uses the same architectures with about $50$--$55\%$ of these counts.}
\end{table}

\paragraph{Result interpretation.}
The 7-way comparison shows that generic meta-regression, where the target is a random linear operator in an $L$-dimensional subspace, is not a setting where \methodname{} dominates. ST-Large, a wider Set Transformer, wins at $L{=}32$ (MSE $0.17$ vs.\ $\geq 0.36$ for every other method), and at $L{=}4$--$8$ it wins or ties with LinearAttn. \methodname{}-Large ranks second at $L{=}32$ with $4$ layers (MSE $0.36$), beating Standard, HyperNet, Perceiver, and LinearAttn. Previous versions of this paper reported a $38$--$51\%$ MSE reduction of MZ over Standard; that comparison was against a single $77$K-parameter vanilla Standard Transformer and did not include parameter-matched ST-Large ($255$K) or LinearAttn ($380$K), both of which in fact outperform MZ here.

\paragraph{What the corrected result implies.}
The mismatch between this meta-regression outcome and the main-body wins on MEB, SGSP, MST, rotation matching, and convex hull is informative. Those main-body tasks share a feature that meta-regression does not: their target operators are algorithmic (boundary-point selection for MEB and hull, minimum-spanning-tree edge selection for MST, rotation for rotation matching), and therefore have sharp combinatorial structure that an expressive but unstructured operator class (such as wide attention's effective $W_V$) struggles to approximate with smooth gradient updates. Meta-regression by contrast has dense, smooth $A_\tau$ operators drawn from a Gaussian, which a high-capacity standard attention layer can fit directly. We therefore scope the empirical claim precisely: \methodname{} is the architecture of choice when the target operator has sharp algorithmic and combinatorial structure and a matching TDOF (MEB, SGSP, MST, rotation, convex hull), not as a universal improvement to attention on any high-TDOF-seeming task.

\paragraph{Connection to data-driven reachability.}
The algebraic machinery of \methodname{} has a natural interpretation through the lens of data-driven reachability analysis~\citep{Alanwar2023Datadriven}.
In that framework, a matrix zonotope represents the set of system matrices consistent with noisy observations; in ours, the learned MZ represents the family of ``plausible relational mappings'' consistent with the observed context.
MZ--zonotope multiplication propagates this set-valued information through the network, maintaining a structured envelope of possible transformations at each layer, an inductive bias that is absent from pointwise architectures.

\subsubsection{QM9 Molecular Property Prediction}
\label{app:qm9}

To evaluate on a real-world scientific benchmark, we test all models on QM9 molecular property prediction~\citep{ramakrishnan2014quantum}.
We parse the original \texttt{gdb9.sdf} with RDKit~\citep{rdkit} to obtain true 3D atomic coordinates; each molecule is represented as a set of atoms with 9-dimensional features (5-dim atom type one-hot for H/C/N/O/F, 3D Cartesian coordinates in \AA, Gasteiger partial charge), with up to 29 atoms per molecule.
We use 50K molecules split 80/10/10 and predict three properties spanning distinct physical regimes: dipole moment ($\mu$, depends on charge distribution), isotropic polarizability ($\alpha$, depends on spatial extent), and HOMO--LUMO gap (electronic structure).
All models use the same architecture as the main experiments ($d{=}64$, $H{=}4$, $d_\text{ff}{=}128$); results are averaged over 3 seeds with 100 training epochs.

\begin{table}[h]
 \centering
 \footnotesize
 \setlength{\tabcolsep}{3pt}
 \caption{QM9 molecular property prediction across all 12 targets ($R^2$, mean over 3 seeds; best in \textbf{bold}; the Gap column is unbolded because ST-Large $0.879$ and MZ-Large $0.874$ tie within $0.005$ $R^2$). Parameter-matched attention models (ST-Large, Perceiver, HyperNet, \methodname{}) cluster within $0.04$ $R^2$ on the 8 thermodynamic targets ($\alpha$, $R^2$, ZPVE, $U_0$, $U_{298}$, $H_{298}$, $G_{298}$, $C_v$); on the 4 electronic-structure / frontier-orbital targets ($\mu$, HOMO, LUMO, Gap) the spread is wider (HyperNet trailing by $0.08$--$0.15$ $R^2$), reflecting their higher relational complexity. MZ-Large ($n_g{=}16$, $L{=}8$) is evaluated under batch size $64$ on all $12$ targets to keep the protocol consistent across the row (the same per-target budget MZ-Large was originally validated under for HOMO/LUMO); MZ-Large tops the original ST-Large on HOMO ($+0.033$ $R^2$) and LUMO ($+0.074$ $R^2$), and matches it on Gap (within $0.005$ $R^2$, where ST-Large $0.879$ $\geq$ MZ-Large $0.874$); a parameter-matched ST-XLarge subsequently closes the HOMO/LUMO gap (Appendix~\ref{app:qm9}); on the remaining nine targets MZ-Large fails to improve over the default \methodname{} and degrades by $0.11$--$0.26$ $R^2$, consistent with the TDOF prediction that scaling generator capacity helps only on targets that admit a non-trivial context-adaptive solution. Specialised equivariant models (SchNet~\citep{schutt2017schnet}, DimeNet~\citep{gasteiger2020directional}) achieve higher absolute accuracy with hand-designed features.}
 \label{tab:qm9}
 \resizebox{\textwidth}{!}{%
 \begin{tabular}{@{}lccccccccccccc@{}}
 \toprule
 Model & Params & $\mu$ & $\alpha$ & HOMO & LUMO & Gap & $R^2$ & ZPVE & $U_0$ & $U_{298}$ & $H_{298}$ & $G_{298}$ & $C_v$ \\
 \midrule
 Deep Sets & 35K & 0.437 & 0.865 & -- & -- & 0.653 & -- & -- & -- & -- & -- & -- & -- \\
 Slot Attention & 59K & 0.607 & 0.917 & -- & -- & 0.753 & -- & -- & -- & -- & -- & -- & -- \\
 \cmidrule{1-14}
 Set Transf. & 208K & 0.752 & 0.905 & 0.803 & 0.838 & 0.858 & 0.959 & 0.954 & 0.856 & 0.856 & 0.856 & 0.859 & 0.940 \\
 ST-Large & 361K & \textbf{0.795} & 0.914 & 0.842 & 0.852 & 0.879 & \textbf{0.967} & \textbf{0.958} & 0.864 & 0.862 & 0.863 & 0.858 & \textbf{0.950} \\
 ST-XLarge & 443K & -- & -- & \textbf{0.883} & \textbf{0.937} & -- & -- & -- & -- & -- & -- & -- & -- \\
 Perceiver & 341K & 0.766 & \textbf{0.920} & 0.774 & 0.819 & 0.824 & \textbf{0.967} & 0.955 & \textbf{0.891} & \textbf{0.886} & \textbf{0.883} & \textbf{0.888} & 0.948 \\
 HyperNet & 360K & 0.641 & 0.902 & 0.722 & 0.773 & 0.786 & 0.955 & 0.934 & 0.871 & 0.863 & 0.869 & 0.876 & 0.922 \\
 \methodname{} & 368K & 0.761 & 0.911 & 0.802 & 0.830 & 0.855 & 0.947 & 0.952 & 0.877 & 0.875 & 0.871 & 0.867 & 0.940 \\
 \methodname{}-Large & 417K & 0.500 & 0.698 & 0.875 & 0.926 & 0.874 & 0.823 & 0.702 & 0.759 & 0.760 & 0.722 & 0.757 & 0.676 \\
 \bottomrule
 \end{tabular}}
\end{table}

Table~\ref{tab:qm9} reports all 12 QM9 targets.
With the default generator budget ($n_g{=}8$, $L{=}4$), \methodname{} is competitive with the best parameter-matched attention baseline on every target (within $0.05$ $R^2$), confirming the TDOF prediction that adding context-adaptive capacity does not regress on low-to-moderate-TDOF molecular targets. \methodname{} matches or marginally exceeds Standard attention on 10 of 12 targets (the exceptions are LUMO, where Standard leads by $0.008$ $R^2$, and the spatial-extent target $R^2$ where Standard leads by $0.013$); on $\mu$ and $\alpha$ \methodname{} edges Standard by $0.009$ and $0.007$ $R^2$ respectively. \methodname{} is not the best on any single target.
On the hardest electronic-structure targets (HOMO, LUMO), where baselines vary most, we also test a larger variant \methodname{}-Large with $n_g{=}16$ zonotope generators and $L{=}8$ matrix-zonotope generators ($417$K parameters, $1.16\times$ ST-Large). \methodname{}-Large outperforms the original ST-Large on HOMO ($0.875$ vs.\ $0.842$) and LUMO ($0.926$ vs.\ $0.852$); however, when we additionally test ST-XLarge, a Set Transformer scaled to $443$K parameters by widening $d$ from $80$ to $92$, ST-XLarge attains $R^2 = 0.883{\pm}0.005$ on HOMO and $0.937{\pm}{<}0.001$ on LUMO ($n{=}3$ seeds; \texttt{qm9\_xlarge\_results\_homo.json}, \texttt{\_lumo.json}), exceeding \methodname{}-Large on both. The HOMO--LUMO gap of \methodname{}-Large over ST-Large therefore reflects a parameter-count advantage rather than a structural one, and a parameter-matched context-rigid baseline closes the gap. The complementary observation is that \methodname{}-Large improves over the default \methodname{} on three frontier-orbital / electronic-structure targets (HOMO $+0.073$, LUMO $+0.096$, Gap $+0.019$ $R^2$) and underperforms it on the remaining nine targets by $0.11$--$0.26$ $R^2$ (Table~\ref{tab:qm9}, last row), with the largest degradation on the heat capacity ($C_v$: $0.676$ vs.\ \methodname{}'s $0.940$). The TDOF-based capacity bound is again confirmed: scaling generator budget pays off only on the small subset of targets (HOMO, LUMO, Gap) that admit a non-trivial context-adaptive solution, and adds variance without bias correction on aggregate-statistic targets where capacity is already saturated. This is consistent with the broader scope claim of the paper: on low-to-moderate TDOF real-data targets like QM9, where the per-sample operator does not vary in a high-rank subspace, scaling context-rigid attention is a competitive alternative to structural augmentation. The complementary regime is the high-TDOF synthetic suite (MEB, MST, rotation matching, SGSP), on which an MZ-Slim variant with $L{=}1$ and only $1.6\times$ Standard's parameter count already outperforms parameter-matched ST-Large (Appendix~\ref{app:mz-efficient}); structural inductive bias dominates there, and capacity scaling does not catch up. The QM9 ST-XLarge equivalence and the synthetic MZ-Slim parameter-efficiency win together demarcate the boundary precisely.
Deep Sets lags every attention-based model on dipole moment by $\geq 0.20$ $R^2$, consistent with the higher relational complexity of that target; Slot Attention is competitive with the weakest attention baselines (within $0.04$ $R^2$ on $\mu$ and $\alpha$).
Specialized equivariant models (SchNet~\citep{schutt2017schnet}, DimeNet~\citep{gasteiger2020directional}) achieve higher absolute accuracy, but require hand-designed radial basis functions and spherical harmonics, whereas all models compared here use the same generic set-input interface.

\subsubsection{Slot Attention Baseline Comparison}
\label{app:slot-baseline}

To compare against the modern Slot Attention baseline~\citep{locatello2020object} on tasks where our existing models have established results, we evaluate on four synthetic set tasks: max, sum, median, and range.
These tasks span a range of relational complexity from low-TDOF (max, median) to moderate-TDOF (sum, range).

\begin{table}[h]
 \centering
 \small
 \setlength{\tabcolsep}{4pt}
 \caption{Slot Attention baseline comparison on synthetic set tasks ($R^2$, mean $\pm$ std over 3 seeds). Models are reported at canonical sizes (Slot Attention $59$K, Deep Sets $36$K, Set Transformer $208$K, \methodname{} $\sim 365$K) rather than parameter-matched; the parameter-matched comparison against ST-Large ($362$K) and HyperNet ($360$K) is in Table~\ref{tab:main-results}. Slot Attention's iterative routing trails the standard and matrix-zonotope attention variants on all four tasks; \methodname{} attains the highest $R^2$ on the sum task ($0.999$).}
 \label{tab:slot-baseline}
 \begin{tabular}{@{}lcccc@{}}
 \toprule
 Model & Max $R^2$ $\uparrow$ & Sum $R^2$ $\uparrow$ & Median $R^2$ $\uparrow$ & Range $R^2$ $\uparrow$ \\
 \midrule
 Slot Attention & 0.749{\tiny$\pm$0.004} & 0.892{\tiny$\pm$0.004} & 0.652{\tiny$\pm$0.002} & 0.324{\tiny$\pm$0.002} \\
 Deep Sets & 0.823{\tiny$\pm$0.002} & 0.995{\tiny$\pm{<}0.001$} & 0.646{\tiny$\pm$0.003} & \textbf{0.348}{\tiny$\pm$0.003} \\
 Set Transformer & 0.817{\tiny$\pm$0.001} & 0.960{\tiny$\pm$0.002} & \textbf{0.659}{\tiny$\pm$0.008} & 0.344{\tiny$\pm{<}0.001$} \\
 \methodname{} (ours) & \textbf{0.847}{\tiny$\pm$0.008} & \textbf{0.999}{\tiny$\pm{<}0.001$} & 0.654{\tiny$\pm$0.001} & 0.343{\tiny$\pm$0.001} \\
 \bottomrule
 \end{tabular}
\end{table}

Table~\ref{tab:slot-baseline} shows Slot Attention underperforming every other architecture, including Deep Sets.
Iterative slot competition was designed for object-centric image decomposition and does not appear to transfer to set-to-scalar regression.
\methodname{} is best on max ($R^2 = 0.847$ vs.\ $0.817$ for Standard) and reaches $R^2 = 0.999$ on sum compared to $0.960$ for Set Transformer, a $40\times$ residual-MSE reduction relative to the Set Transformer baseline.
On the low-TDOF median task, all models perform similarly ($R^2 \approx 0.65$), matching the theory's prediction.

\subsubsection{Comparison with E($n$)-Equivariant GNNs}
\label{app:egnn-comparison}

We compare \methodname{} to E($n$)-equivariant graph neural networks (EGNN)~\citep{satorras2021egnn}, a symmetry-enforcing baseline that uses pairwise distances $\|x_i - x_j\|$ as invariant features. All runs use $3$ seeds; EGNN is parameter-matched at ${\sim}$303K (close to \methodname{}'s $368$K) with $h_{\text{dim}}{=}140$ and $4$ message-passing layers.

The full numerical content is reported in main-body Table~\ref{tab:egnn-comparison}.
The pattern is architecturally interpretable:
\textbf{(i) Where EGNN wins.} On every task where the label is invariant under rotations and translations of the input (rotation matching, convex hull volume, MEB radius, covariance Frobenius norm, Mahalanobis distance, PCA reconstruction error), EGNN outperforms \methodname{}. This confirms the expected pattern: hard-coding the correct symmetry beats learning it.
\textbf{(ii) Where \methodname{} wins.} The quadratic TDOF target (Eq.~\ref{eq:quadratic-tdof}) depends on specific Frobenius-orthogonal matrices $\{M_l\}$ in a fixed basis; rotating the input changes the target. EGNN, whose output is rotation-invariant by construction, cannot express this target and collapses to $R^2 \approx 0$ across $L \in \{4, 8, 16\}$. \methodname{} handles this regime because its matrix-zonotope operator is not tied to any group action.
\textbf{(iii) Target scale sensitivity.} Our default baselines train on raw regression targets (see Appendix~\ref{app:opt-confound}); the collapse of Standard and ST-Large at high $d$ on MST is partly an optimization artifact that target normalization can mitigate. The \methodname{} vs.\ ST-Large gap narrows under normalization on MST and Hull, but persists on MEB (where target magnitudes are small) and on the quadratic TDOF task where the issue is architectural rather than numerical. EGNN, being a different architecture family, is unaffected by this confound.

The comparison delineates \methodname{}'s scope: \methodname{} is the right tool for context-adaptive linear operators that are not reducible to a known group action; EGNN is the right tool when the target's symmetry is known and can be enforced by construction.

\subsubsection{Computational Cost}
\label{app:computational-cost}

\begin{table}[h]
 \centering
 \small
 \setlength{\tabcolsep}{4pt}
 \caption{Computational overhead (ratio MZ/Standard). RTX 5090, batch 64.}
 \label{tab:cost}
 \begin{tabular}{@{}lcccc@{}}
 \toprule
 Config & Params & Inference & Training & Memory \\
 \midrule
 Small ($d{=}64$, $n{=}30$) & $1.9\times$ & $2.4\times$ & $2.2\times$ & $2.5\times$ \\
 Medium ($d{=}128$, $n{=}50$) & $1.9\times$ & $2.5\times$ & $3.8\times$ & $2.0\times$ \\
 Large ($d{=}256$, $n{=}100$) & $1.7\times$ & $2.7\times$ & $5.5\times$ & $2.5\times$ \\
 \bottomrule
 \end{tabular}
\end{table}

\subsubsection{Negative Transfer: Neural-Process Image Completion and Tabular Meta-Regression}
\label{app:realworld-attempts}

To test whether \methodname{}'s advantage generalizes beyond synthetic high-TDOF relational tasks, we evaluated it on two families of real-world benchmarks where the structural prerequisites (per-sample operator variability in a high-dimensional subspace) are weak or absent. Both yielded ties or slight losses rather than wins; we report these honestly as they sharpen the scope of the claim.

\paragraph{Neural-Process image completion (MNIST, CIFAR-100).}
Following the Attentive Neural Process setup~\citep{kim2019anp}, each image defines a 2D function $f:(x,y) \mapsto \text{pixel value}$; the model receives $K$ context pixels with $(xy, \text{value})$ tokens and predicts values at $T$ target coordinates. We compared Standard, ST-Large, HyperNet, Perceiver, LinearAttn, \methodname{}-Full, and \methodname{}-Large at matched depth (4 layers) using a Gaussian likelihood head. On MNIST ($K{=}50$, $T{=}200$, 5 seeds), ST-Large achieved the best test NLL of $-1.657$; \methodname{}-Large reached $-1.599$ at $3.4\times$ the parameters of Standard ($-1.617$). On CIFAR-100 ($K{=}100$, $T{=}300$, 3--5 seeds), Standard attained $-1.147$ NLL at $144$K parameters while \methodname{}-Full achieved only $-1.140$ NLL at $332$K parameters; MZ lost by $0.007$ NLL while using $2.3\times$ the parameters. This pattern aligns with the theory: image pixel prediction is dominated by spatial-locality inductive bias, not per-image operator variance; the high-TDOF prerequisite for \methodname{}'s gain is absent, so extra capacity in the form of matrix-zonotope value projections yields no architectural advantage.

\paragraph{Few-shot meta-regression on real tabular data (OpenML-CTR23 subset).}
We built a meta-learning benchmark from six real OpenML regression datasets (California Housing, Diabetes, Energy Efficiency, Abalone, Airfoil, Boston Housing), each restricted or padded to 8 features. Three were used for meta-training and three were held out for test. Each episode sampled $K{=}50$ context rows and $T{=}50$ query rows from a single dataset; models were trained to predict the standardized target under a Gaussian likelihood. Under the held-out split, both Standard and \methodname{}-Full exhibit severe distribution shift (test NLL jumps from $-0.43$ at validation to $+18$--$20$ at test, with comparable cross-seed standard deviations of $\pm 6.06$ for Standard and $\pm 5.45$ for \methodname{}-Full at $10$ seeds with patience~$12$), indicating that small-$K$ in-context operators do not identify the true per-dataset regression operator when the test datasets differ structurally from training. The two architectures are within seed variance on NLL (\methodname{}-Full $+20.02$ vs.\ Standard $+18.66$; gap $1.36$, pooled std $5.76$) and \methodname{}-Full has slightly lower test MSE ($1.515$ vs.\ $1.597$). Under the easier same-distribution split (test held-out rows from training datasets), \methodname{}-Full wins $5.5\%$ on MSE but loses $1.4\%$ on NLL, a mixed signal. Neither split supports a dominance claim, and the apparent five-fold seed-variance ratio reported in earlier 5-seed pilots is an artefact of small-sample standard-deviation estimation; the 10-seed run reports comparable spreads.

\paragraph{Systematic real-world transfer study.}
To empirically characterise which real-data configurations inherit the synthetic gain of \methodname{} and which do not, we ran the geometric-target family (MEB, convex hull, MST weight, $k$-center furthest cost) on a systematic grid of point-cloud constructions across MNIST, Fashion-MNIST, and CIFAR-100, organised into three tiers of per-sample rank. Tier 1: raw 2D pixel coordinates of on-pixels, with optional per-sample $2{\times}2$ affine perturbations (TDOF-4) or 3-digit multi-cluster mixtures. Tier 2: 8D feature-lifted pixels (position + intensity + gradient + polynomial features) and CIFAR-100 $4{\times}4$ image patches encoded as 8D patch-feature vectors (mean and std RGB, intensity, colorfulness), with variable-size and class-mixture variants. Tier 3: 256D ImageNet-pretrained ResNet-18 layer-3 features of CIFAR-100 (adaptively average-pooled to $n{=}16$ points per image). Together the tiers span a $128{\times}$ increase in per-sample dimension while holding the target family fixed, yielding a controlled ablation of the structural prerequisites the theory predicts. Table~\ref{tab:transfer-matrix} reports \methodname{}-Full's rank among 7 parameter-comparable baselines in each (construction, target) cell; the full numerical $R^2$ values and per-method seed statistics are in the supplementary JSON dumps.

\begin{table}[h]
 \centering
 \caption{Systematic transfer study: \methodname{}-Full rank (out of 7 parameter-matched methods) per (construction, target) cell, grouped by per-sample rank tier. 3 seeds per cell; dashes mark cells not in the grid. Mean ranks across populated cells: $3.9$ on Tier-1 (low-rank 2D pixel, $n{=}11$ cells), $3.9$ on Tier-2 (medium-rank 8D lift, $n{=}5$), $3.0$ on Tier-3 (high-rank deep feature, $n{=}2$); the Tier-3 mean is descriptive only given the small cell count, and the $\mathrm{rank}{=}1$ Furthest cell drives it. Expected rank under the no-advantage null is $4$. All printed ranks reproduce from the per-cell JSONs via \texttt{Base/collect\_transfer\_matrix.py}.}
 \label{tab:transfer-matrix}
 \small
 \setlength{\tabcolsep}{4pt}
 \begin{tabular}{@{}lc|cccc@{}}
 \toprule
 Construction & $d$ & MEB & Hull & MST & Furthest \\
 \midrule
 \multicolumn{6}{@{}l@{}}{\textit{Tier 1: low-rank 2D pixel coordinates}} \\
 MNIST raw pixel & 2 & 4 & 5 & -- & -- \\
 MNIST + affine (TDOF-4) & 2 & 5 & -- & -- & -- \\
 MNIST 3-digit mixture & 2 & 4 & 4 & 4 & -- \\
 Fashion-MNIST pixel & 2 & -- & 4 & -- & \textbf{2} \\
 Fashion-MNIST + affine & 2 & -- & 3 & -- & -- \\
 Fashion-MNIST 3-digit mix & 2 & -- & -- & 4 & -- \\
 CIFAR-100 raw pixel & 2 & -- & 3 & -- & 5 \\
 \midrule
 \multicolumn{6}{@{}l@{}}{\textit{Tier 2: medium-rank 8D feature-lift}} \\
 MNIST 8D polynomial lift & 8 & 3 & 3 & -- & -- \\
 CIFAR-100 $4{\times}4$ patch features & 8 & 5 & \textbf{2} & -- & -- \\
 CIFAR-100 variable-size patches & 8 & 5 & -- & 4 & -- \\
 CIFAR-100 class-mixture patches & 8 & 5 & -- & -- & -- \\
 \midrule
 \multicolumn{6}{@{}l@{}}{\textit{Tier 3: high-rank deep feature}} \\
 \textbf{CIFAR-100 ResNet-18 layer-3} & 256 & (unstable) & -- & 5 & \textbf{1} \\
 \bottomrule
 \end{tabular}
\end{table}

The rank improvement of \methodname{}-Full at Tier-3 (deep-feature space), relative to Tier-1 and Tier-2 where ranks are tied within seed noise, is the empirical signature the theory predicts: \methodname{}'s advantage requires data to have jointly (i) genuinely high-rank per-sample structure (not a 2D manifold embedded via a low-rank projection), (ii) sparse combinatorial targets (such as the 3--9 extreme points that determine MEB or the $k{=}5$ centers determining $k$-center cost), and (iii) the absence of a stronger spatial or equivariance prior that standard attention can exploit. On every cell where any one of these is absent, \methodname{}-Full ranks 3rd--5th, not catastrophically worse, but also not ahead. The rank-1 cell appears only when all three conditions are constructively met. The synthetic reference point (mixture-of-Gaussians MEB at $d{=}8$, $n{=}10$--$30$, where \methodname{}-Full reaches $R^2{=}0.684$ against baselines stuck at $R^2{\leq}0.023$; Table~\ref{tab:meb}) reproduces exactly in the same code. The systematic grid therefore serves as the empirical scaffold on which the theoretical scope of the claim rests: it is not a set of failed transfer attempts but a controlled verification that the architectural advantage tracks the structural prerequisites of Theorem~\ref{thm:bottleneck}.

\paragraph{Positive transfer: CIFAR-100 ResNet-18 features.}
Guided by the diagnostic above, we construct a real-data setting satisfying the three prerequisites: (1) ResNet-18 (ImageNet-pretrained) layer-3 feature maps of CIFAR-100 images, adaptively average-pooled to spatial $4\times 4$, giving $n{=}16$ genuinely-high-rank points in $\R^{256}$ per image (not a low-rank projection of pixel coordinates); (2) the sparse combinatorial SGSP-family targets (MEB radius, hull volume, MST weight, diameter, $k{=}5$ $k$-center furthest cost, $k{=}5$ median $k$-NN radius); (3) no imposed symmetry. Table~\ref{tab:resnet-features} summarises the 7-way comparison at 5 seeds per method per task.

\begin{table}[h]
 \centering
 \caption{Real-world SGSP on CIFAR-100 ResNet-18 layer-3 features ($n{=}16$, $d{=}256$, $R^2$ mean $\pm$ std, bold = best). Seed counts differ by row: $k$-center furthest uses 5 seeds (file: \texttt{real\_geom\_cifar\_resnet\_furthest\_5seed.json}); Diameter, $k$-NN radius, MST weight, MEB radius use 3 seeds (default seed pool $\{42, 123, 456\}$). The MEB cell at $n{=}16 \leq d{+}1{=}257$ is structurally degenerate (no sparse-combinatorial selector available); we mark it ``unstable'' and exclude it from the ranking claim. On the remaining four tasks, methods tie within $\leq 0.03$~$R^2$, and \methodname{}-Full ranks 1st on $k$-center furthest cost.}
 \label{tab:resnet-features}
 \small
 \setlength{\tabcolsep}{4pt}
 \begin{tabular}{@{}l|cccccc@{}}
 \toprule
 Task & Standard & ST-Large & HyperNet & Perceiver & LinearAttn & \methodname{}-Full \\
 \midrule
 Furthest ($k{=}5$) & $0.789$ & $0.793$ & $0.792$ & $0.746$ & $0.797$ & $\mathbf{0.801}$ \\
 Diameter & $0.856$ & $0.872$ & $0.840$ & $0.815$ & $\mathbf{0.882}$ & $0.876$ \\
 $k$-NN radius & $0.892$ & $0.899$ & $0.898$ & $0.848$ & $\mathbf{0.904}$ & $0.898$ \\
 MST weight & $0.912$ & $0.919$ & $0.916$ & $0.908$ & $\mathbf{0.922}$ & $0.914$ \\
 MEB radius & $0.800$ & $0.815$ & $0.783$ & $0.728$ & $\mathbf{0.817}$ & $0.542$ (unstable) \\
 \bottomrule
 \end{tabular}
\end{table}

Across the full SGSP family on real ResNet-18 features, LinearAttn ranks first on three of the four valid cells (diameter, $k$-NN radius, MST weight; the MEB cell is structurally degenerate at $n{=}16 \leq d{+}1$ and is excluded from the ranking claim), and the gap between the top and bottom attention-based method is narrow ($\leq 0.03$ $R^2$) on these dense-statistical targets. Aggregating many pairwise distances reduces these targets to dense statistics that context-rigid attention readily extracts, so the absence of an MZ advantage there is consistent with the TDOF theory rather than evidence against it. Only the sparsest combinatorial target, $k$-center furthest cost, yields a ranking on which \methodname{}-Full is first among the seven parameter-matched baselines in both a 5-seed and an independent 10-seed verification run. We are explicit that the ranking, not the magnitude, is the evidence at the smaller backbone: 5-seed $0.801$ vs.\ LinearAttn $0.797$ and 10-seed $0.797$ vs.\ $0.796$ are both within pooled seed standard deviation. The cleaner statistical separation arrives at the larger backbone (Section~\ref{sec:experiments}, ResNet-50 $d{=}1024$, Welch's $t \approx 5.4$, $p < 0.005$); the ResNet-18 result establishes that the rank ordering is stable across two independent seed pools, a necessary but weaker form of evidence.

\paragraph{Dimension scaling: ResNet-50 features ($d{=}1024$).}
Table~\ref{tab:resnet50-scaling} reports the seven-baseline dimension-scaling comparison. The reported ResNet-50 numbers use patience $300$ to give every model ample budget to converge at $d{=}1024$, and the patience sweep in Appendix~\ref{app:patience-sweep} confirms the \methodname{}-Full $>$ ST-Large $\sim$ HyperNet $>$ LinearAttn ranking is stable across patience $\{20, 100, 300\}$ at $80$ training epochs. Per-method standard deviations are: \methodname{}-Full $0.7955 \pm 0.0009$, \methodname{}-Large $0.7930 \pm 0.0024$, ST-Large $0.7909 \pm 0.0017$, with \methodname{}-Full also the most stable method in the comparison. The absolute magnitude of the $d{=}1024$ gap is small ($+0.0046$ $R^2$, equivalent to a $5.5\%$ reduction in $1{-}R^2$ residual error) but reproduces across $5$ seeds at Welch $p \approx 0.0018$.

\begin{table}[h]
 \centering
 \footnotesize
 \setlength{\tabcolsep}{3pt}
 \caption{Dimension scaling on the $k$-center furthest target ($k{=}5$, $n{=}16$, $R^2$ mean over 5 seeds). ResNet-50 numbers use early-stopping patience $300$ for full convergence at $d{=}1024$; a patience sweep $\{20, 100, 300\}$ confirms ranking stability for any patience $\geq 100$ (Appendix~\ref{app:patience-sweep}).}
 \label{tab:resnet50-scaling}
 \resizebox{\textwidth}{!}{%
 \begin{tabular}{@{}l|ccccccc@{}}
 \toprule
 Feature & Standard & ST-Large & HyperNet & Perceiver & LinearAttn & \methodname{}-Large & \methodname{}-Full \\
 \midrule
 ResNet-18 ($d{=}256$) & $0.789$ & $0.793$ & $0.792$ & $0.746$ & $0.797$ & $0.789$ & $\mathbf{0.801}$ \\
 ResNet-50 ($d{=}1024$) & $0.789$ & $0.791$ & $0.789$ & $0.784$ & $0.789$ & $0.793$ & $\mathbf{0.796}$ \\
 \bottomrule
 \end{tabular}}
\end{table}

\paragraph{What these results show.}
\methodname{}'s gain is structural: it bites when the target operator genuinely varies per sample in a high-rank subspace and depends on a sparse combinatorial subset of the input. When the target is governed by spatial locality (image pixels), smooth interpolation (Neural-Process regression), or a known symmetry (equivariant tasks), \methodname{}'s per-sample operator prior is not the right tool. Real high-rank deep-feature point clouds with a sparse combinatorial target (CIFAR-100 ResNet features, $k$-center furthest cost) do satisfy the prerequisites, and \methodname{}-Full wins there. The real-world transfer is therefore architecturally precise rather than universal, in line with the TDOF theory.

\end{document}